\documentclass{article}

\usepackage{iclr2027_conference,times}
\usepackage{amsmath,amssymb,amsfonts}
\usepackage{amsthm}
\usepackage{graphicx}
\usepackage{booktabs}
\usepackage{longtable}
\usepackage{float}
\floatstyle{ruled}
\newfloat{algorithm}{htbp}{loa}
\floatname{algorithm}{Algorithm}
\usepackage{hyperref}
\usepackage{url}
\usepackage[table]{xcolor}
\usepackage{tcolorbox}
\usepackage{pifont}
\tcbuselibrary{skins,breakable}
\usepackage{multirow}

\usepackage{makecell}               

\newcommand{\acc}[2]{\makecell{#1\\#2}}                              
\newcommand{\gain}[1]{\textcolor{green!45!black}{$\uparrow$#1}}
\newcommand{\loss}[1]{\textcolor{red!65!black}{$\downarrow$#1}}
\newcommand{\nochg}{\textcolor{black!35}{0.0}}

\definecolor{bcmag}{HTML}{D55E00}  

\definecolor{bcsign}{HTML}{0072B2} 
\definecolor{coreorange}{HTML}{FF8C00} 
\definecolor{coreblue}{HTML}{0000FF}   
\newtcolorbox{badcasebox}[2][]{%
  enhanced, breakable,
  colback=#2!5!white, colframe=#2!70!black,
  boxrule=0.6pt, arc=1.2mm,
  left=6pt, right=6pt, top=4pt, bottom=5pt,
  title={#1}, fonttitle=\bfseries\small,
  coltitle=white, colbacktitle=#2!70!black,
  attach boxed title to top left={yshift=-2mm, xshift=4mm},
  boxed title style={arc=0.8mm, boxrule=0pt, left=4pt, right=4pt, top=2pt, bottom=2pt},
  before skip=10pt, after skip=8pt
}
\usepackage{subcaption}
\usepackage{multirow}
\usepackage{enumitem}
\makeatletter
\def\section{\@startsection {section}{1}{\z@}{-1.2ex plus -0.4ex minus -.2ex}{0.9ex plus 0.2ex minus 0.2ex}{\large\sc\raggedright}}
\def\subsection{\@startsection{subsection}{2}{\z@}{-1.1ex plus -0.3ex minus -.2ex}{0.5ex plus .2ex}{\normalsize\sc\raggedright}}
\def\subsubsection{\@startsection{subsubsection}{3}{\z@}{-1.0ex plus -0.3ex minus -.2ex}{0.4ex plus .2ex}{\normalsize\sc\raggedright}}
\def\paragraph{\@startsection{paragraph}{4}{\z@}{0.9ex plus 0.4ex minus .2ex}{-1em}{\normalsize\bf}}
\g@addto@macro\normalsize{%
  \abovedisplayskip=5pt plus 2pt minus 3pt
  \belowdisplayskip=5pt plus 2pt minus 3pt
  \abovedisplayshortskip=2pt plus 1pt
  \belowdisplayshortskip=3pt plus 1pt minus 2pt}
\def\thm@space@setup{\thm@preskip=4pt plus 1pt minus 2pt
  \thm@postskip=\thm@preskip}
\makeatother
\usepackage{alphalph}

\newtheorem{theorem}{Theorem}
\newtheorem{proposition}{Proposition}

\newtheorem{corollary}{Corollary}
\newtheorem{definition}{Definition}
\newtheorem{remark}{Remark}

\usepackage{bm}
\newcommand{\vtheta}{\bm{\theta}}
\newcommand{\vz}{\bm{z}}
\newcommand{\vg}{\bm{g}}
\newcommand{\vp}{\bm{p}}
\newcommand{\vq}{\bm{q}}
\newcommand{\vm}{\bm{m}}
\newcommand{\ve}{\bm{e}}
\newcommand{\vu}{\bm{u}}
\newcommand{\vv}{\bm{v}}
\newcommand{\vd}{\bm{d}}
\newcommand{\vh}{\bm{h}}
\newcommand{\vs}{\bm{s}}
\newcommand{\vxi}{\bm{\xi}}
\newcommand{\vdelta}{\bm{\delta}}
\newcommand{\vmu}{\bm{\mu}}
\newcommand{\mJ}{\mathbf{J}}
\newcommand{\mK}{\mathbf{K}}
\newcommand{\mH}{\mathbf{H}}
\newcommand{\mF}{\mathbf{F}}
\newcommand{\mI}{\mathbf{I}}

\newcommand{\mOmega}{\bm{\Omega}}
\newcommand{\cJ}{\mathcal{J}}
\newcommand{\ind}{\mathbf{1}}

\newcommand{\cL}{\mathcal{L}}
\newcommand{\expect}{\mathbb{E}}

\newcommand{\cB}{\mathcal{B}}

\newcommand{\cD}{\mathcal{D}}

\newcommand{\cV}{\mathcal{V}}

\title{When Sparse Reward Meets Dense Distillation: Training Dynamics of On-Policy Distillation}

\author{
  \textbf{Xinke Jiang\textsuperscript{1,2,3}\thanks{Equal contribution.}},
    \textbf{Tao Feng\footnotemark[1]},
  \textbf{Zhibang Yang\textsuperscript{1,2,3}\footnotemark[1]},
  \textbf{Zhixin Zhang\textsuperscript{1,2,3}},\\
  \textbf{Weixuan Xu\textsuperscript{1}},
  \textbf{Haoyu Zhang}, 
  \textbf{Xu Chu\textsuperscript{2,3,4}\thanks{Corresponding author.}}\\[3pt]
  \textsuperscript{1}National Engineering Research Center of Software Engineering, Peking University, Beijing, China\\
  \textsuperscript{2}School of Computer Science, Peking University, Beijing, China\\
  \textsuperscript{3}Key Laboratory of High Confidence Software Technologies, Ministry of Education, Beijing, China\\
  \textsuperscript{4}Center on Frontiers of Computing Studies, Peking University, Beijing, China\\[2pt]
  \small{\texttt{\{xinkejiang, yangzb\}@stu.pku.edu.cn}}
}

\iclrfinalcopy

\begin{document}

\maketitle

\fancyhead{}

\begin{abstract}
Reinforcement learning with verifiable rewards provides a \emph{\textbf{sparse}} post-training signal: a single binary outcome evaluates the entire rollout, and every token receives the same sequence-level advantage regardless of its individual contribution.
To complement this sparse supervision, a growing family of methods adds a scalar-weighted teacher KL term to the policy-gradient objective, providing \emph{\textbf{dense}} token-level guidance that may be unreliable at some positions.
Despite the benefits of combining these signals, their interaction during optimization can destabilize joint training.
To understand how this instability develops, we study the learning dynamics of hybrid reward--distillation training through a neural tangent kernel (NTK) analysis.
We introduce the \emph{\textbf{cross-signal NTK}} $K_{DR}(n)$, a token-level statistic that measures the alignment between reward and distillation gradients at position $n$.
Through this analysis, we identify two failure modes:
\emph{\ding{182}~\textbf{Magnitude drowning}}, where the reward gradient exceeds the distillation gradient by orders of magnitude, so that even weak directional conflict can cause the distillation loss to \emph{\textbf{rise}} despite its explicit inclusion in the training objective; and
\emph{\ding{183}~\textbf{Localized directional conflict}}, where the sequence-level advantage and the teacher's position-specific distribution induce opposing updates at the same token ($K_{DR}(n)\!<\!0$).
The severity of these effects depends on the optimization regime: the gradient-norm ratio $\kappa\!=\!\|\nabla\mathcal{L}_R\|/\|\nabla\mathcal{L}_D\|$ varies by roughly an order of magnitude across tasks, and our experiments reveal an empirical threshold beyond which naive mixing can lead to persistent training collapse.
Motivated by these findings, we introduce the \emph{\textbf{M3 family}}, which combines magnitude normalization with three strategies for coordinating dense teacher supervision and sparse reward updates: a hard NTK-based mask that retains compatible teacher signals (M3-Select), a continuous relaxation of this mask (M3-Soft), and a fast--slow extragradient step that temporally separates teacher shaping from reward correction (M3-EG).
Experiments across four model backbones and four benchmarks show that M3 maintains stable training dynamics and achieves superior performance in high-$\kappa$ regimes where scalar-mixing baselines collapse.
\end{abstract}
\newcommand{\magvar}[1]{\textcolor{coreorange}{#1}}
\newcommand{\dirvar}[1]{\textcolor{coreblue}{#1}}
\section{Introduction}
\label{sec:intro}
\underline{\textbf{R}}einforcement \underline{\textbf{L}}earning with \underline{\textbf{V}}erifiable \underline{\textbf{R}}ewards (\textbf{RLVR}) has become a central approach to post-training reasoning-capable large language models~\citep{deepseekr1, openai2024o1}.
However, its supervision is \textbf{\textit{sparse}}: a single outcome reward evaluates the entire sequence, and every token receives the same sequence-level advantage regardless of its individual contribution.
Process-reward methods partially address this limitation by evaluating intermediate reasoning steps, but step-level supervision does not directly distinguish the contributions of individual tokens.
As the complementary, \textbf{teacher distillation} provides \textbf{\textit{dense}} supervision with target distribution at each token position, although the teacher's guidance may be unreliable at some positions.
Therefore, a growing family of hybrid methods therefore augments the policy-gradient objective with a scalar-weighted teacher KL term~\citep{zhao2024opsd, agarwal2024gkd}, aiming to provide token-level guidance.

Despite their empirical success~\citep{zhao2024opsd, agarwal2024gkd}, these hybrid methods can exhibit \textbf{\textit{unstable training dynamics}} and, in some cases, \textbf{\textit{catastrophic collapse}} (Figure~\ref{fig:intro_case_study}c).
The underlying difficulty is that a reliable but sparse outcome signal and a dense but imperfect proxy signal do not necessarily complement each other. Under naive scalar mixing, \textbf{they may instead compete}: \ding{182} one signal can overwhelm the other, \ding{183} or their opposing updates can cancel, progressively homogenizing the policy, eliminating reward diversity, and ultimately inducing entropy collapse.

\textbf{\textit{To understand how this instability develops, we study the learning dynamics of hybrid reward--distillation training through an NTK analysis.}}
Building on the neural tangent kernel (NTK) framework~\citep{ren2024ntk}, we examine how reward and distillation updates interact through the model's shared parameters.
We introduce the \textbf{\textit{cross-signal NTK}} $K_{DR}(n)$, the inner product between the parameter gradients contributed by the two objectives at token position $n$, which measures their local alignment while accounting for the mapping from token-level residuals to parameter updates through the model's Jacobian.
Our analysis identifies two failure modes:
\textbf{\textit{\ding{182}~Magnitude drowning.}}
The reward gradient can exceed the distillation gradient by orders of magnitude, as measured by the norm ratio $\kappa = \|\vg_R\|/\|\vg_D\|$.
In this regime, even weak negative alignment can make the increase in distillation loss caused by the reward update exceed the decrease produced by the distillation update itself.
Consequently, the reward or distillation loss can rise despite its explicit inclusion in the training objective (Figure~\ref{fig:intro_case_study}a).
\textbf{\textit{\ding{183}~Localized directional conflict.}}
The two objectives assign updates using different information: RL broadcasts a sequence-level advantage to every token, whereas distillation uses a position-specific teacher distribution.
At positions where $K_{DR}(n) < 0$ (Figure~\ref{fig:intro_case_study}b), the resulting gradient contributions oppose each other.
For a positive-advantage rollout, the teacher update can decrease the probability of a sampled token that the reward update seeks to reinforce.
For a negative-advantage rollout, it can instead reinforce a token that the reward update seeks to suppress.
Their contributions can therefore cancel in aggregate diagnostics, obscuring local conflicts.
Over longer training horizons, repeated conflicting updates can reduce diversity among rollouts and diminish within-group reward variation.
When all rollouts in a group receive the conflict reward, their relative advantages and policy gradient vanish.
Our experiments exhibit a corresponding progression from initial reward improvement to reduced reward diversity and abrupt collapse (Figure~\ref{fig:intro_case_study}c).

\textbf{\textit{The severity of these failure modes depends on the optimization regime.}}
A hybrid run may initially appear stable, with reward increasing even as the distillation loss rises.
Across architectures and tasks, $\kappa$ varies by roughly an order of magnitude, and our experiments reveal an empirical threshold beyond which naive mixing becomes prone to persistent collapse.
These findings motivate examining teacher supervision at two levels.
At the \emph{run level}, $\kappa$ indicates the degree of magnitude imbalance and helps distinguish settings where standard scalar mixing remains effective from those where it collapses.
At the \emph{token level}, the sign of $K_{DR}(n)$ distinguishes locally compatible teacher updates from conflicting ones.
Guided by this diagnosis, we introduce the \textbf{M3} family, which combines magnitude normalization with three strategies for coordinating teacher supervision and reward updates:
\textbf{M3-Select} applies a hard NTK-based mask, retaining teacher supervision only at positions where $K_{DR}(n) \geq 0$;
\textbf{M3-Soft} replaces this mask with a continuous, temperature-controlled gate;
and \textbf{M3-EG} temporally separates teacher shaping from reward correction through a fast--slow extragradient step.
We make the following contributions:
\begin{figure}[t]
\centering
\includegraphics[width=0.96\linewidth]{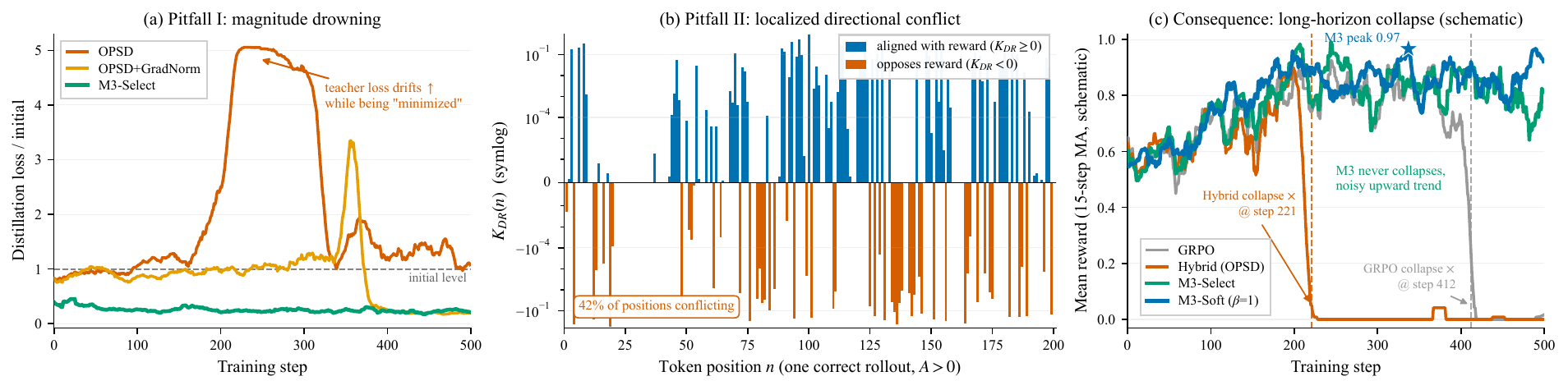}
\caption{\textbf{Two failures of linear reward--distillation mixing} (Qwen3-1.7B, GSM8K). \textbf{(a)} Distillation loss rises under OPSD, indicating magnitude drowning. \textbf{(b)} Positive and negative $K_{DR}(n)$ interleave in a correct rollout, revealing token-local conflict hidden by aggregation. \textbf{(c)} Baselines collapse by step $500$, while M3-Select and M3-Soft remain stable.}
\label{fig:intro_case_study}
\end{figure}
\begin{itemize}[leftmargin=*,itemsep=0pt,topsep=0pt]
    \item 
    We develop an NTK-based framework that characterizes the interaction between RL and distillation through a single per-token statistic, the cross-signal NTK $K_{DR}(n)$. This framework identifies two failure modes of linear mixing and explains why weight-space rebalancing methods.

    \item 
    We show that $\kappa$ remains stable throughout a run but varies by roughly an order of magnitude across tasks. A critical threshold separates a stable regime, in which naive mixing is effective, from a catastrophic regime where it collapses. This yields a simple probe-batch rule for selecting the mixing strategy before full training.

    \item 
    In high-$\kappa$ regimes, the M3 family remains stable over horizons at which scalar-mixing baselines collapse. M3-Select is the most robust variant in the highest-$\kappa$ settings, while M3-Soft recovers catastrophic cases with order-of-magnitude gains. Combined with weight averaging, which removes the gate-variance pathology identified in our analysis, M3-Soft matches or exceeds the strongest baseline across all tested architecture--dataset pairs.
\end{itemize}

\section{Related Work}
\label{sec:related}

\textbf{\textit{\ding{182} LLM Reasoning via RL and Distillation.}}
DeepSeek-R1~\citep{deepseekr1} spurred RL reasoning (GRPO~\citep{shao2024deepseekmath}, DAPO~\citep{yu2025dapo}, Dr.~GRPO~\citep{liu2025drgrpo}); OPSD~\citep{zhao2024opsd} and MiniLLM~\citep{gu2024minillm} make distillation on-policy. Hybrids differ in coupling: SDPO~\citep{hubotter2026sdpo} self-distills from a reprompt, RLSD~\citep{yang2026rlsd} keeps the teacher as magnitude-only reweighting, and HDPO~\citep{ding2026hdpo} and DPKD~\citep{li2024dpkd} interpolate losses. Unlike these methods, we study the dynamics of the coupling, identifying when dense teacher supervision conflicts with or is drowned by sparse reward updates.

\textbf{\textit{\ding{183} Multi-Objective Gradients and NTK.}}
PCGrad~\citep{yu2020pcgrad}, CAGrad~\citep{liu2021cagrad}, MGDA~\citep{sener2018mgda}, and NashMTL~\citep{navon2022nashmtl} operate on aggregate task gradients; GradNorm~\citep{chen2018gradnorm}, UW~\citep{kendall2018uncertainty}, and DWA~\citep{liu2019dwa} balance objectives in loss-weight space. Such global operations do not directly resolve conflicts that alternate across tokens, and we show that GradNorm becomes ineffective at $\kappa\!\gg\!1$ (Proposition~\ref{prop:gradnorm_degeneracy}). Prior work applies NTK to gradient conflict and imbalance~\citep{ren2024ntk,zhu2025ntkmtl}; our cross-signal NTK specializes this perspective to hybrid RL--distillation and links token-level interaction to drowning-induced collapse, complementing known RL failures such as reward hacking~\citep{skalse2022rewardhacking}, entropy collapse~\citep{yu2025dapo}, and length bias~\citep{liu2025drgrpo}.




\section{Preliminaries and Theoretical Analysis}
\label{sec:background}
\label{sec:theory}

\paragraph{Notation.}
In this paper, scalars use italic symbols, vectors bold lowercase or Greek symbols (e.g., $\vg$, $\vdelta$, $\vtheta$), and matrices bold uppercase symbols (e.g., $\mJ$, $\mK$). Calligraphic symbols denote sets and losses; $\mathbb R$ denotes real numbers. The model has $d_{\mathrm{par}}$ parameters, collected in $\vtheta\in\mathbb R^{d_{\mathrm{par}}}$. At position $n$, $\vp_S^n$ is the student's token distribution and $p_S^n(v)=[\vp_S^n]_v$ the probability of token $v$. We distinguish kernel matrix $\mK(n,m)$ from the scalar interaction score $K_{DR}(n)$ and use $\varphi_n$ for alignment angles.

\subsection{Problem Setup and NTK Preliminaries}
\label{sec:problem_setup}
We train student model $\pi_{\vtheta}$ using teacher predicts and response-level rewards. For a prompt--answer pair $(x,y^*)\sim\cD$, the student generates a response $\hat y$ of length $N$; $\hat y_{<n}$ is its prefix before position $n$.

\textbf{\textit{\ding{182} On-Policy Self-Distillation (OPSD).}}
OPSD~\citep{zhao2024opsd} learns from responses sampled by the student itself. A frozen teacher receives the ground-truth answer as additional context, giving $\vp_T^n=\pi_{\vtheta_0}(\cdot\mid x,y^*,\hat y_{<n})$, while $\vp_S^n=\pi_{\vtheta}(\cdot\mid x,\hat y_{<n})$. Their distributions are compared at each position, with each contribution capped at $\tau$:
\begin{equation}
\cL_D(\vtheta) = \expect_{(x,y^*)\sim\cD}\; \expect_{\hat{y}\sim\pi_{\vtheta}(\cdot\,|\,x)}\!\left[\; \frac{1}{N}\sum_{n=1}^{N} \min\!\Big( D_{\lambda}\big(\vp_T^n \,\big\|\, \vp_S^n\big),\; \tau \Big)\;\right],
\end{equation}
Here, $D_\lambda(\vp\|\vq)=\lambda\mathrm{KL}(\vp\|\vm)+(1-\lambda)\mathrm{KL}(\vq\|\vm)$ is the generalized Jensen--Shannon divergence, with $\vm=\lambda\vp+(1-\lambda)\vq$. We use $\lambda=1/2$, giving both distributions equal weight. The teacher thus provides dense and position-specific supervision; KL limits are given in Appendix~\ref{app:distillation_details}.

\textbf{\textit{\ding{183} Group Relative Policy Optimization (GRPO).}}
GRPO~\citep{shao2024deepseekmath}, a critic-free variant of PPO~\citep{schulman2017ppo}, samples $G$ responses per prompt and assigns each a verifiable reward $r^{(i)}=r(\hat y^{(i)},y^*)\in\{0,1\}$. Its advantage $A_i=(r^{(i)}-\bar r)/(\sigma_r+\epsilon)$ compares that reward with the group mean $\bar r$ and standard deviation $\sigma_r$, where $\epsilon>0$ prevents division by zero:
\begin{equation}
  \cL_R(\vtheta)
  =
  -\expect_{(x,y^*)\sim\cD}\;
  \expect_{\{\hat{y}^{(i)}\}_{i=1}^{G}\sim\pi_{\vtheta}(\cdot\,|\,x)}
  \left[
  \frac{1}{G}
  \sum_{i=1}^{G}
  A_i\,
  \frac{1}{N_i}
  \sum_{n=1}^{N_i}
  \log\pi_{\vtheta}
  \!\left(
  \hat{y}^{(i)}_n
  \,\middle|\,
  x,\hat{y}^{(i)}_{<n}
  \right)
  \right],
  \end{equation}
where $N_i=|\hat y^{(i)}|$. Positive advantages encourage sampled responses and negative advantages discourage them. The same advantage weights every token, without identifying which tokens caused success or failure. Gradients hold the sampled responses and advantages fixed.

\textbf{\textit{\ding{184} NTK in Learning Dynamics.}}
Following~\citep{ren2024ntk}, let $\vz^n\in\mathbb R^{|\cV|}$ be the policy's logits, the scores converted by softmax into probabilities over vocabulary $\cV$. The Jacobian $\mJ^n=\nabla_{\vtheta}\vz^n\in\mathbb R^{d_{\mathrm{par}}\times|\cV|}$ describes their dependence on the parameters. For $\cL=N^{-1}\sum_n\ell^n$, the chain rule gives
\begin{equation}
\nabla_{\vtheta}\cL=\frac1N\sum_n\mJ^n\vdelta^n,
\qquad \vdelta^n=\nabla_{\vz^n}\ell^n,
\qquad \mK(n,m)=(\mJ^n)^\top\mJ^m.
  \end{equation}
The residual $\vdelta^n$ describes how the loss changes with each logit. The empirical NTK $\mK(n,m)$~\citep{jacot2018ntk} couples positions through their shared parameters: an update driven by position $m$ can change predictions at $n$. Following \citet{ren2024ntk}, we use the Action--Kernel--Gradient (AKG) decomposition to analyze these changes.

\subsection{Hybrid Update Dynamics and Loss Interactions}
\label{sec:loss_decomposition}
In OPSD, we assume the hybrid loss between policy update and distillation loss is $\cL_H=(1-\alpha)\cL_R+\alpha\cL_D$, where $\alpha\in[0,1]$ weights distillation. A step of size $\eta$ gives $\vtheta_{t+1}=\vtheta_t-\eta\nabla_{\vtheta}\cL_H$. Expanding the logits to first order and summing over $T$ steps yields:
\begin{equation}
\vz^n_T = \vz^n_0 + \sum_{t=0}^{T-1} \Delta\vz^n_t,
\qquad
\Delta\vz^n_t = -\frac{\eta}{N}\sum_{m=1}^{N} \mK_t(n,m)\Big[(1-\alpha)\,\vdelta_R^m + \alpha\,\vdelta_D^m\Big] + O(\eta^2),
\label{eq:logits_dynamics}
\end{equation}
where $\vz_t^n=\vz^n(\vtheta_t)$ and $\Delta\vz_t^n=\vz_{t+1}^n-\vz_t^n$. At each step, the kernel maps the combined residual at every position $m$ to a change in the logits at $n$. The residuals are evaluated at step $t$:
\begin{equation}
\vdelta_R^m=-A(\ve_{\hat y_m}-\vp_S^m),
\qquad \vdelta_D^m=\vp_S^m-\vp_T^m.
\end{equation}
The one-hot vector $\ve_{\hat y_m}$ selects the sampled token. The reward residual encourages or discourages this token according to $A$; the teacher residual compares the full distributions. We use a local forward-KL model of distillation: near agreement, the unclipped JSD gradient is $\lambda(1-\lambda)\vdelta_D^m+O(\|\vp_S^m-\vp_T^m\|^2)$, with the leading constant absorbed into the teacher scale. Clipped tokens contribute zero gradient (Appendix~\ref{app:distillation_details}).

\textbf{\textit{First-Order Loss Changes.}}
With $\vg_R=\nabla_{\vtheta}\cL_R$ and $\vg_D=\nabla_{\vtheta}\cL_D$, the same update gives
\begin{equation}
\Delta \cL_R
\approx -\eta\Big[(1-\alpha)\|\vg_R\|^2 + \alpha\langle \vg_R, \vg_D\rangle\Big],
\qquad
\Delta \cL_D
\approx -\eta\Big[\alpha\|\vg_D\|^2 + (1-\alpha)\langle \vg_D, \vg_R\rangle\Big],
\label{eq:teacher_loss_change}
\end{equation}
Each squared-gradient term describes an objective's own decrease; the inner product describes the other update's effect. Positive alignment helps both objectives, while negative alignment opposes their progress. A loss increases only when this opposing contribution exceeds its own decrease within the first-order approximation.

\subsection{Token-Level Decomposition of Gradient Interactions}

\textbf{\textit{Cross-Position Interactions.}}
The overall inner product can hide local conflicts. Each term pairs the teacher signal at $n$ with the reward signal at $m$ through the shared kernel. The sum includes same-position and cross-position interactions, whose positive and negative contributions can cancel. Expanding both gradients gives that:
\begin{equation}
\langle \vg_D, \vg_R \rangle = \frac{1}{N^2}\sum_{n,m}
\underbrace{(\vdelta_D^n)^\top}_{\text{teacher residual}}
\underbrace{\mK(n,m)}_{\text{ kernel coupling}}
\underbrace{\vdelta_R^m}_{\text{ reward residual}}.
\label{eq:inner_product}
\end{equation}

\textbf{\textit{Gradient Magnitude and Alignment.}}
Expanding the squared norms in the same way gives the magnitude ratio when $\|\vg_D\|>0$:
\begin{equation}
\kappa^2 \;=\; \frac{\|\vg_R\|^2}{\|\vg_D\|^2}
\;=\; \frac{\sum_{n,m}(\vdelta_R^n)^\top\,\mK(n,m)\,\vdelta_R^m}{\sum_{n,m}(\vdelta_D^n)^\top\,\mK(n,m)\,\vdelta_D^m},
\label{eq:kappa_ntk}
\end{equation}
the numerator measures reward-gradient strength and the denominator teacher-gradient strength; $\kappa\gg1$ indicates strong imbalance. For nonzero gradients, the normalized inner product $\Phi_{DR}=\langle\vg_D,\vg_R\rangle/(\|\vg_D\|\|\vg_R\|)\in[-1,1]$ measures direction independently of magnitude. Positive values indicate alignment and negative values interference. These quantities arise from one gradient Gram matrix: its diagonal contains squared norms and its off-diagonal contains the cross inner product (Appendix~\ref{app:interaction_details}).

\subsection{Token-Level Conflict and Magnitude Drowning}
\label{sec:cross_signal_ntk}

We first locate conflict at individual positions, then examine how gradient imbalance amplifies its effect on the teacher loss.

\begin{definition}[Cross-Signal Token-Level NTK]
\label{def:cross_signal_ntk}
Using the local residual model of Section~\ref{sec:loss_decomposition}, define
\begin{equation}K_{DR}(n)=\left\langle \mJ^n\vdelta_D^n,\mJ^n\vdelta_R^n\right\rangle,\qquad \vdelta_D^n=\vp_S^n-\vp_T^n,\qquad\vdelta_R^n=-A\left(\ve_{\hat y_n}-\vp_S^n\right).
\label{eq:cross_signal_score}
\end{equation}
where $A=(r-\bar r)/(\sigma_r+\epsilon)$. This scalar compares the teacher and reward parameter gradients contributed by the same position.
\end{definition}
Positive, negative, and zero scores define the sets $\Omega_+$, $\Omega_-$, and $\Omega_0$. A zero score also includes vanishing gradients. $K_{DR}(n)$ is the $m=n$ summand in Eq.~\ref{eq:inner_product}, before the common factor $N^{-2}$; it does not include cross-position interactions.

\textbf{\textit{\ding{182} Token-Level Conflict.}}
On a positive-advantage response, a negative $K_{DR}(n)$ means that the teacher's same-position contribution lowers the sampled token's log-probability. The full update need not do so, since reward and other-position contributions also matter. Corollary~\ref{cor:logit_degradation} derives this local effect. The conflict rate $C_{\mathrm{NTK}}=|\{n:K_{DR}(n)<0\}|/N$ is approximately $40\%$ in our measurements, an empirical observation rather than a consequence of the definition.

\textbf{\textit{\ding{183} Magnitude Drowning.}}
Write $\cos\varphi=\Phi_{DR}$. Equation~\ref{eq:teacher_loss_change} gives $\Delta\cL_D\approx-\eta\|\vg_D\|^2[\alpha+(1-\alpha)\kappa\cos\varphi]$. For nonzero gradients and $0<\alpha<1$, its first-order sign condition is
\(
\Delta\cL_D>0
\iff \cos\varphi<-\frac{\alpha}{(1-\alpha)\kappa}.
\label{eq:teacher_rises}
\)
For fixed $\alpha$, the threshold approaches zero from below as $O(\kappa^{-1})$: when the reward gradient is large, even weak negative alignment can outweigh the teacher's own descent. Imbalance alone is insufficient; negative alignment must also satisfy the threshold.

As the student approaches the teacher distribution, $\vdelta_D^n$ shrinks, while the reward residual need not shrink at the same rate. The kernel maps both into parameter space, so their magnitudes and directions jointly determine $\kappa$. Our experiments identify magnitude imbalance as the dominant failure mode, with ratios reaching $1.3\times10^4$. Ratios are approximately stable across the tested LoRA ranks $8$--$64$ but vary substantially across tasks (Proposition~\ref{prop:kappa_scaling}). Extended scaling analysis and projection-baseline limitations appear in Appendices~\ref{app:interaction_details} and~\ref{app:pcgrad}.

\subsection{Reward-Signal Degeneration and Training Collapse}
\label{sec:consequence}

Repeated teacher contributions at conflicting positions may suppress successful response patterns. If the resulting responses become less diverse in reward, GRPO receives less information for distinguishing them. This is a possible training mechanism, not a direct consequence of the local gradient identity.
The final step is exact: if all $G$ sampled responses receive the same reward, then $r^{(i)}=\bar r$, $\sigma_r=0$, and every advantage $A_i=0$. All reward residuals vanish, so that group contributes no reward gradient, $\vg_R^{(x)}=0$. Only the teacher term can contribute to its hybrid update. However, one such group does not establish collapse: it may contain all successes or all failures, and later sampling or updates from other prompts may restore reward variation. Persistent failure requires poor responses to remain dominant without recovery of a useful reward signal. The abrupt drops in Figure~\ref{fig:intro_case_study}c are consistent with this mechanism; Section~\ref{sec:drowning} provides empirical tests.

\begin{remark}[Empirical Gate-Selection Threshold]
\label{rem:kappa_threshold}
Our configurations show a transition near $\kappa^*\approx5\times10^3$: hard masking is most useful at high imbalance, while soft gating often retains more useful supervision at lower imbalance. This empirical guideline is not a universal collapse threshold; it depends on the compatible-token fraction, conflict strength, and gate-estimation error. Section~\ref{sec:methods} introduces the gating methods.
\end{remark}

\section{Methodology}
\label{sec:methods}
\label{sec:m3_select}

M3 turns the preceding analysis into two coupled decisions: how strongly each signal enters the update, and where teacher guidance is admitted. We first normalize token residuals, then allocate teacher weight using local compatibility (Figure~\ref{fig:method}). M3-Select and M3-Soft implement this allocation with hard and continuous gates. M3-EG extends the same principle to update timing, letting the teacher shape where the reward direction is evaluated.

\begin{figure}[t]
\centering
\includegraphics[width=\linewidth]{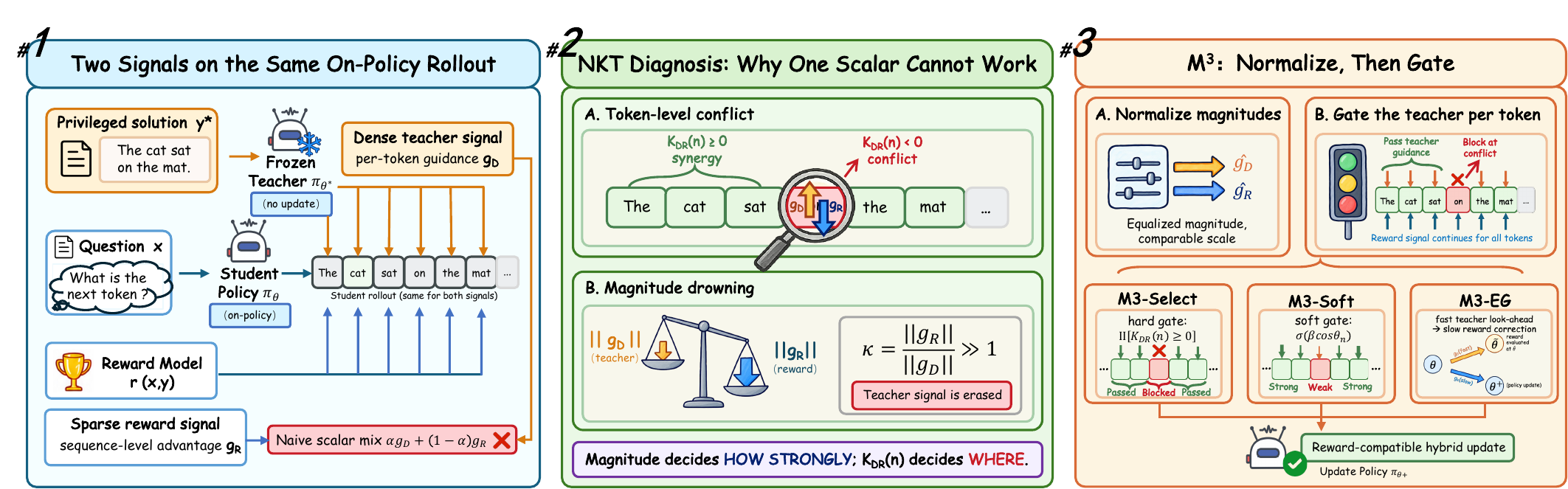}
\caption{\textbf{From interaction analysis to hybrid updates.} Shared parameters couple dense teacher and sparse reward supervision. M3 controls residual scale and allocates teacher influence by token-level compatibility; its extragradient extension separates teacher shaping from reward correction.}
\label{fig:method}
\end{figure}

\subsection{Controlling Signal Scale}
\label{sec:m3_design}

In a global mixture, comparable weighted gradient norms require $\alpha\approx\kappa/(1+\kappa)$; the coefficient must compensate for scale before it can express a preference between the signals. M3 instead controls scale at the residual level, before conversion into a parameter update. Using the residual convention of Section~\ref{sec:loss_decomposition}, define
\begin{equation}
\widehat\vdelta_q^n=\frac{\vdelta_q^n}{\|\vdelta_q^n\|+\epsilon},
\qquad
\vh_q^n=\mJ^n\widehat\vdelta_q^n,
\qquad q\in\{D,R\}.
\label{eq:m3_residual_normalization}
\end{equation}
Here $\epsilon>0$ stabilizes division and keeps zero residuals zero. Normalization reduces the dependence of mixing on raw residual magnitudes. Its positive scaling preserves the sign of $K_{DR}(n)$, but does not equalize parameter-gradient norms after multiplication by $\mJ^n$. It therefore controls residual scale while leaving the directional coordination problem to the gate.
The teacher budget acts on these rescaled contributions, while a common update scale controls their overall size. This separates the intended allocation of supervision from raw magnitude disparity that constrains naive mixing.

\subsection{Allocating Teacher Influence Across Positions}
\label{sec:m3_token_gating}

With a token-dependent teacher weight $\alpha_n\in[0,\alpha_{\max}]$ and $\alpha_{\max}<1$, the common update:
\begin{equation}
\begin{aligned}
\vh_H^n&=\underbrace{\alpha_n\vh_D^n}_{\text{teacher contribution}}
+\underbrace{(1-\alpha_n)\vh_R^n}_{\text{reward contribution}},\\
\vg_H&=\frac1N\sum_{n=1}^N\vh_H^n,
\qquad
\vtheta_{t+1}=\vtheta_t-\eta\bar s\,\vg_H.
\end{aligned}
\label{eq:m3_select_update}
\end{equation}
The shared exponential-moving-average scale $\bar s$ sets the update magnitude; $\alpha_n$ controls its local composition. Teacher and reward weights vary together: rejecting the teacher restores reward weight to one. The remaining choice is how compatibility determines $\alpha_n$.

\paragraph{M3-Select: retain compatible teacher contributions.}
The hard gate directly implements the sign criterion:
\begin{equation}
\alpha_n^*=\alpha_{\max}\mathbb{I}\{K_{DR}(n)\geq0\}.
\label{eq:m3_select_alpha}
\end{equation}
This removes negative teacher projections onto the same-position reward direction. Since normalization preserves signs, the resulting contribution satisfies
\begin{equation}
\begin{aligned}
\langle\vh_H^n,\vh_R^n\rangle
=(1-\alpha_n^*)\|\vh_R^n\|^2
+\alpha_n^*\langle\vh_D^n,\vh_R^n\rangle\geq(1-\alpha_{\max})\|\vh_R^n\|^2.
\end{aligned}
\label{eq:m3_local_projection}
\end{equation}
Rejected positions retain the full local reward contribution. This is a same-position guarantee: cross-position interactions in the aggregate update remain. A zero score is admitted by convention and may simply reflect a vanishing signal.
The gate ceiling controls how much teacher influence an admitted position receives; the score's sign controls whether it receives that influence at all. These two choices need not be tied to one global loss weight.

\paragraph{M3-Soft: vary teacher influence continuously.}
Hard decisions can change abruptly near zero alignment. M3-Soft smooths this transition, retaining partial supervision when compatibility is weak or uncertain:
\begin{equation}
\begin{aligned}
\alpha_n&=\alpha_{\max}\sigma(\beta\cos\varphi_n), \quad \cos\varphi_n=\frac{K_{DR}(n)}{\|\vg_D^n\|\|\vg_R^n\|+\varepsilon}.
\end{aligned}
\label{eq:m3_soft}
\end{equation}
Here $\sigma$ is the logistic sigmoid, $\beta\geq0$ controls sharpness, and $\varepsilon>0$ stabilizes the score. Larger $\beta$ approaches hard selection away from zero; at zero, the weight remains $\alpha_{\max}/2$. Soft gating also retains some negatively aligned teacher contributions, trading strict local exclusion for smoother supervision. It therefore does not inherit Eq.~\ref{eq:m3_local_projection} (Appendix~\ref{app:gate_limits}).
At $\beta=0$, all positions receive the same teacher weight $\alpha_{\max}/2$; increasing sharpness progressively makes allocation depend on compatibility. Both variants thus share the same normalized update, with their distinction confined to the teacher-allocation rule.

\subsection{Coordinating the Signals in Time}

The synchronous variants combine directions evaluated at the same parameters. M3-EG instead uses the gated teacher field $\vm_D(\vtheta)=N^{-1}\sum_n\alpha_n^*\vh_D^n$ to construct a temporary point, then evaluates the normalized reward field $\vh_R(\vtheta)=N^{-1}\sum_n\vh_R^n$ there:
\begin{equation}
\begin{aligned}
\widetilde\vtheta&=\vtheta-\eta_{\mathrm{in}}\vm_D(\vtheta),
&&\text{teacher look-ahead},\\
\vtheta^+&=\vtheta-\eta_{\mathrm{out}}\vh_R(\widetilde\vtheta),
&&\text{reward correction}.
\end{aligned}
\label{eq:m3_eg_update}
\end{equation}
The reward correction is applied from the original parameters, after restoring them; no gradient is propagated through the temporary step. Thus teacher guidance changes the reward evaluation point rather than entering the committed update additively. Appendix~\ref{app:m3_eg} gives the procedure and the conditions on the inner displacement and outer step for local reward descent.

\subsection{Training Rule and Scope}

Each iteration samples student responses, obtains teacher distributions and verifier advantages, and forms the two residual fields. Their compatibility scores determine the gates; normalized, gated contributions then define either the synchronous direction or the extragradient step (Algorithms~\ref{alg:m3_select} and~\ref{alg:m3_eg}). Gates and normalization factors specify update coefficients and are held fixed when applying the direction. Clipped teacher terms and zero-advantage reward terms contribute zero; normalization does not recreate missing supervision.

The local projection bound directly supports the hard selection rule. Aggregate stationarity and conditional variance bounds require a more restrictive population model: smooth lower-bounded reward loss, orthogonal position subspaces, matched teacher and mean reward norms, deterministic conditional teacher directions, and gates fixed before fresh reward noise. These conditions are not enforced by residual normalization. Theorem~\ref{thm:boundary_pareto_convergence} and Proposition~\ref{prop:boundary_variance_reduction} state the resulting guarantees; the experiments assess behavior beyond that model.

\section{Experiments}
\label{sec:experiments}

\begin{figure}[t]
\centering
\includegraphics[width=0.9\linewidth]{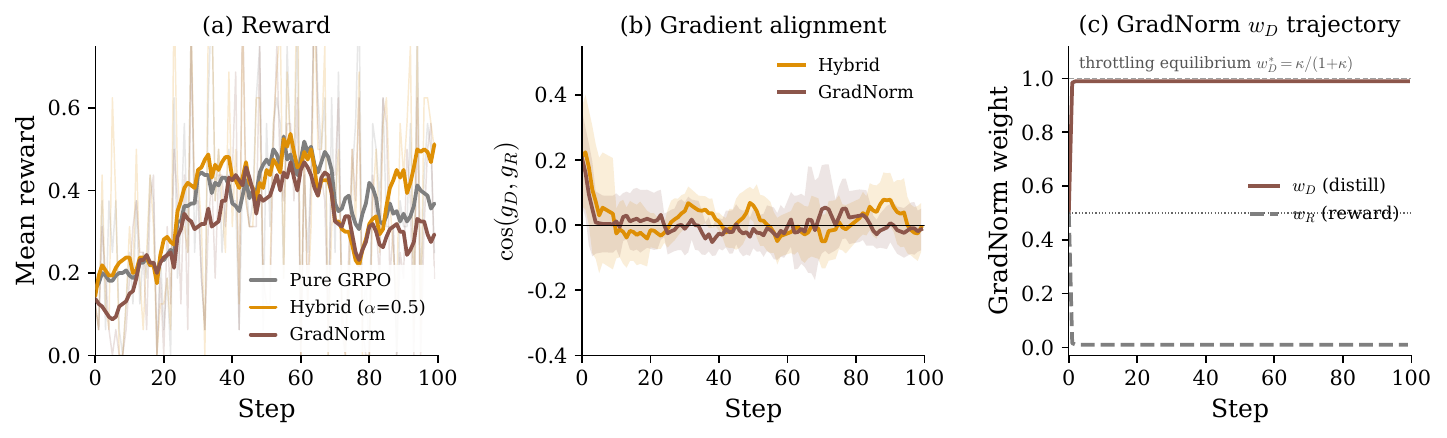}
\caption{GradNorm challenge on InternLM2.5-1.8B GSM8K: training reward (left), gradient cosine (center), and learned distillation weight (right).}
\label{fig:f2_vs_f3}
\end{figure}

\subsection{Setup}
\label{sec:setup}

Our main evaluation covers Qwen3-1.7B, Qwen2.5-1.5B, InternLM2.5-1.8B, and Llama-3.2-1B on GSM8K, SVAMP, and ARC-Challenge, with Qwen3-0.6B and MATH added in supplementary long-horizon runs. We define $\bar{\kappa}$ as the mean of $\kappa_t$ over RL-active steps of the corresponding naive-Hybrid run. Full experimental details are provided in Appendix~\ref{app:experimental_details}.
\subsection{Does Magnitude Drowning Actually Occur?}
\label{sec:drowning}

Figure~\ref{fig:intro_case_study}a shows loss inversion under scalar mixing. With $\bar{\kappa}$ ranging from $1.4\!\times\!10^3$ to $1.3\!\times\!10^4$, the reward update overwhelms distillation; panel~c shows the resulting long-horizon collapse. We next test whether global norm balancing fixes it.

\paragraph{A controlled challenge.}
GradNorm trails both GRPO and naive Hybrid in training reward (Figure~\ref{fig:f2_vs_f3}, left), despite a near-zero aggregate gradient cosine (center). Its distillation weight rapidly approaches $w_D^*=\kappa/(1+\kappa)\approx1$ (right), leaving the effective reward weight at only $1-w_D^*\approx1/\kappa$. GradNorm therefore achieves global norm balance only by nearly turning off RL; because the same weight is applied to every token, it also cannot separate compatible from conflicting teacher updates. This motivates M3's position-dependent gate; further diagnostics are reported in Appendix~\ref{app:mechanistic_validation}.

\subsection{Can Token-Level Gating Beat Global Mixing?}
\label{sec:method_comparison}

\begin{table*}[!t]
\centering
 \caption{Cell reports accuracy (top) and its percentage-point change from GRPO (bottom; $\uparrow/\downarrow$); $^{\dagger}$Hard masking is vacuous for single-token ARC.}
  
\label{tab:cross_arch}
\fontsize{7.5pt}{8.5pt}\selectfont
\setlength{\tabcolsep}{3pt}
\renewcommand{\arraystretch}{1.18}
\setlength{\aboverulesep}{0.2ex}
\setlength{\belowrulesep}{0.3ex}

\resizebox{\textwidth}{!}{%
\begin{tabular}{@{}l|ccc|ccc|ccc|ccc@{}}
\toprule
\rowcolor{gray!20}
 & \multicolumn{3}{c|}{\textbf{Qwen3-1.7B}} &
   \multicolumn{3}{c|}{\textbf{Qwen2.5-1.5B}} &
   \multicolumn{3}{c|}{\textbf{InternLM2.5-1.8B}} &
   \multicolumn{3}{c}{\textbf{Llama-3.2-1B}} \\
\rowcolor{gray!20}
\textbf{Method} &
\textbf{GSM8K} & \textbf{SVAMP} & \textbf{ARC} &
\textbf{GSM8K} & \textbf{SVAMP} & \textbf{ARC} &
\textbf{GSM8K} & \textbf{SVAMP} & \textbf{ARC} &
\textbf{GSM8K} & \textbf{SVAMP} & \textbf{ARC} \\
\midrule
\rowcolor{gray!10}
\multicolumn{13}{@{}l}{\textit{Reference: pure RL}}\\
GRPO
 & \acc{0.696}{\nochg} & \acc{0.940}{\nochg} & \acc{0.732}{\nochg}
 & \acc{0.726}{\nochg} & \acc{0.825}{\nochg} & \acc{0.689}{\nochg}
 & \acc{0.404}{\nochg} & \acc{0.670}{\nochg} & \acc{0.599}{\nochg}
 & \acc{0.552}{\nochg} & \acc{0.730}{\nochg} & \acc{0.533}{\nochg} \\
\midrule
\rowcolor{gray!10}
\multicolumn{13}{@{}l}{\textit{Teacher-augmented baselines}}\\
Hybrid (OPSD)
 & \acc{0.786}{\gain{+9.0}} & \acc{0.932}{\loss{-0.8}} & \acc{\underline{0.796}}{\gain{+6.4}}
 & \acc{0.717}{\loss{-0.9}} & \acc{0.857}{\gain{+3.2}} & \acc{0.693}{\gain{+0.4}}
 & \acc{\underline{0.492}}{\gain{+8.8}} & \acc{0.608}{\loss{-6.2}} & \acc{0.604}{\gain{+0.5}}
 & \acc{\underline{0.563}}{\gain{+1.1}} & \acc{0.722}{\loss{-0.8}} & \acc{0.544}{\gain{+1.1}} \\
OPSD+GradNorm
 & \acc{0.765}{\gain{+6.9}} & \acc{0.917}{\loss{-2.3}} & \acc{0.752}{\gain{+2.0}}
 & \acc{0.735}{\gain{+0.9}} & \acc{0.837}{\gain{+1.2}} & \acc{\underline{0.695}}{\gain{+0.6}}
 & \acc{0.459}{\gain{+5.5}} & \acc{0.687}{\gain{+1.7}} & \acc{0.590}{\loss{-0.9}}
 & \acc{0.547}{\loss{-0.5}} & \acc{0.717}{\loss{-1.3}} & \acc{0.370}{\loss{-16.3}} \\
RLSD
 & \acc{0.8317}{\gain{+13.6}} & \acc{0.923}{\loss{-1.7}} & \acc{0.748}{\gain{+1.6}}
 & \acc{0.725}{\loss{-0.1}} & \acc{0.860}{\gain{+3.5}} & \acc{0.684}{\loss{-0.5}}
 & \acc{0.471}{\gain{+6.7}} & \acc{\underline{0.737}}{\gain{+6.7}} & \acc{0.602}{\gain{+0.3}}
 & \acc{\textbf{0.568}}{\gain{+1.6}} & \acc{0.648}{\loss{-8.2}} & \acc{\underline{0.545}}{\gain{+1.2}} \\
SDPO
 & \acc{0.587}{\loss{-10.9}} & \acc{0.567}{\loss{-37.3}} & \acc{0.742}{\gain{+1.0}}
 & \acc{0.085}{\loss{-64.1}} & \acc{0.612}{\loss{-21.3}} & \acc{0.424}{\loss{-26.5}}
 & \acc{0.316}{\loss{-8.8}} & \acc{0.432}{\loss{-23.8}} & \acc{0.586}{\loss{-1.3}}
 & \acc{0.220}{\loss{-33.2}} & \acc{0.483}{\loss{-24.7}} & \acc{0.432}{\loss{-10.1}} \\
\midrule
\rowcolor{blue!10}
\multicolumn{13}{@{}l}{\textit{\textbf{Ours}: boundary-gated mixing (M3)}}\\
M3-Select$^{\dagger}$
 & \acc{0.803}{\gain{+10.7}} & \acc{0.943}{\gain{+0.3}} & \acc{0.275}{\loss{-45.7}}
 & \acc{0.459}{\loss{-26.7}} & \acc{0.585}{\loss{-24.0}} & \acc{0.571}{\loss{-11.8}}
 & \acc{0.394}{\loss{-1.0}} & \acc{0.538}{\loss{-13.2}} & \acc{0.599}{\nochg}
 & \acc{0.516}{\loss{-3.6}} & \acc{0.673}{\loss{-5.7}} & \acc{0.459}{\loss{-7.4}} \\
\textbf{M3-Soft}
 & \acc{\underline{0.8324}}{\gain{+13.6}} & \acc{\underline{0.945}}{\gain{+0.5}} & \acc{\textbf{0.807}}{\gain{+7.5}}
 & \acc{\textbf{0.752}}{\gain{+2.6}} & \acc{\textbf{0.905}}{\gain{+8.0}} & \acc{\textbf{0.736}}{\gain{+4.7}}
 & \acc{\textbf{0.501}}{\gain{+9.7}} & \acc{\textbf{0.777}}{\gain{+10.7}} & \acc{\underline{0.608}}{\gain{+0.9}}
 & \acc{\textbf{0.568}}{\gain{+1.6}} & \acc{\underline{0.750}}{\gain{+2.0}} & \acc{\textbf{0.556}}{\gain{+2.3}} \\
M3-EG
 & \acc{\textbf{0.8446}}{\gain{+14.9}} & \acc{\textbf{0.952}}{\gain{+1.2}} & \acc{0.731}{\loss{-0.1}}
 & \acc{\underline{0.749}}{\gain{+2.3}} & \acc{\underline{0.868}}{\gain{+4.3}} & \acc{0.692}{\gain{+0.3}}
 & \acc{0.440}{\gain{+3.6}} & \acc{0.468}{\loss{-20.2}} & \acc{\textbf{0.6135}}{\gain{+1.5}}
 & \acc{0.552}{\nochg} & \acc{\textbf{0.758}}{\gain{+2.8}} & \acc{0.534}{\gain{+0.1}} \\
\midrule
\multicolumn{13}{@{}l}{\textbf{M3-Soft} $\geq$ best non-M3 baseline in \textbf{12/12} cells ($11$ strict wins, $1$ exact tie).}\\
\bottomrule
\end{tabular}}

\end{table*}

\begin{table}[t]
\centering
\caption{Long-horizon reward on GSM8K after $500$ steps (last-50 mean). $\downarrow$ denotes collapse.}
\label{tab:500step_gsm8k}

\fontsize{8.2pt}{9.6pt}\selectfont
\setlength{\tabcolsep}{7.5pt}
\renewcommand{\arraystretch}{1.16}
\setlength{\aboverulesep}{0.2ex}
\setlength{\belowrulesep}{0.3ex}

\begin{tabular}{
l|ccc|
>{\columncolor{blue!4}}c
>{\columncolor{blue!4}}c
}
\toprule

\rowcolor{gray!20}
&
\multicolumn{3}{c|}{
\textit{Reference and adaptive baselines}
}
&
\multicolumn{2}{
>{\columncolor{blue!10}}c
}{
\textit{\textbf{Ours}: boundary-gated mixing}
} \\

\rowcolor{gray!10}
\textbf{Architecture}
& \textbf{GRPO}
& \textbf{Hybrid}
& \textbf{GradNorm}
& \cellcolor{blue!10}\textbf{M3-Select}
& \cellcolor{blue!10}\textbf{M3-Soft} \\
\midrule

Qwen3-1.7B
& $\downarrow\!0.003$
& $\downarrow\!0.000$
& $\downarrow\!0.000$
& $0.809$
& $\mathbf{0.866}$ \\

Qwen3-0.6B
& $\downarrow\!0.003$
& $\downarrow\!0.000$
& $\downarrow\!0.053$
& $0.644$
& $\mathbf{0.673}$ \\

Llama-1B
& $\downarrow\!0.004$
& $\downarrow\!0.003$
& $\downarrow\!0.328$
& $\mathbf{0.497}$
& $0.485$ \\

\bottomrule
\end{tabular}
\end{table}

\begin{figure*}[t]
\centering
\begin{minipage}[t]{0.49\textwidth}
\centering
\includegraphics[width=\linewidth]{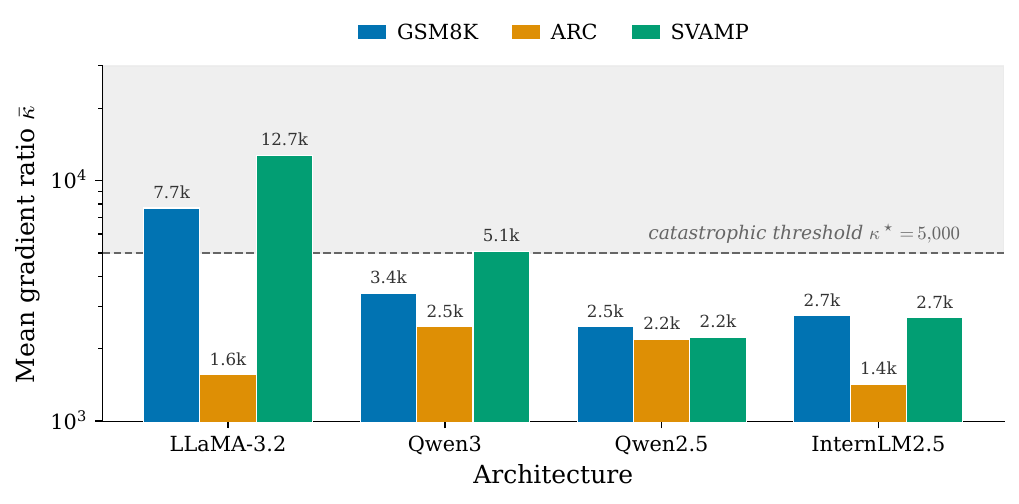}
\caption{Mean gradient-magnitude ratio $\bar{\kappa}$ for $12$ architecture--dataset pairs under naive Hybrid; dashed line: empirical threshold $\kappa^*=5{,}000$.}
\label{fig:kappa_bar}
\end{minipage}\hfill
\begin{minipage}[t]{0.49\textwidth}
\centering
\includegraphics[width=\linewidth]{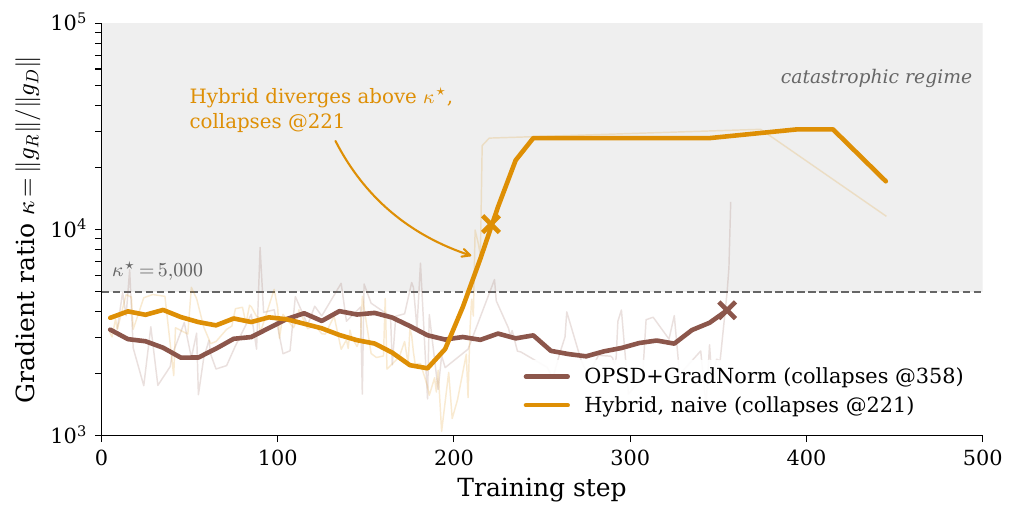}
\caption{Runtime magnitude ratio. Hybrid and GradNorm cross $\kappa^*=5{,}000$ before collapsing at steps $221$
and $358$.}
\label{fig:kappa_dynamics}
\end{minipage}
\end{figure*}

\begin{table}[htbp]
\centering
\caption{One-step sampled-token log-probability change by pre-update conflict region (30 trajectories, 8{,}492 tokens). Negative values indicate suppression by the update.}
\label{tab:one_step}
\small
\begin{tabular}{lccc}
\toprule
\rowcolor{gray!10}
Region & OPSD & GRPO & Hybrid \\
\midrule
Compatible ($\dirvar{K_{DR}}>\epsilon$, $n$=851) & $+1.6\times10^{-3}$ & $+2.7\times10^{-3}$ & $+2.0\times10^{-3}$ \\
Neutral ($|\dirvar{K_{DR}}|\leq\epsilon$, $n$=5191) & $\approx 0$ & $\approx 0$ & $\approx 0$ \\
Conflicting ($\dirvar{K_{DR}}<-\epsilon$, $n$=2450) & $+0.3\times10^{-4}$ & $+7.3\times10^{-4}$ & $-2.4\times10^{-4}$ \\
\bottomrule
\end{tabular}
\end{table}

\paragraph{Accuracy across architectures.}
Across four architectures and three datasets (Table~\ref{tab:cross_arch}), M3-Soft matches or exceeds the strongest non-M3 baseline in all $12$ cells under the per-cell best-observed protocol ($11$ strict wins, one tie). The largest gains occur on Qwen2.5-SVAMP ($0.905$, $+4.5$pp over RLSD), Qwen2.5-ARC ($0.736$, $+4.1$pp over OPSD+GradNorm), and InternLM-SVAMP ($0.777$, $+4.0$pp over RLSD). On the two Llama arithmetic cells, M3-Soft ties RLSD on GSM8K ($0.568$) and improves over GRPO on SVAMP ($0.750$, $+2.0$pp). The Qwen3 margins are tighter: $0.8324$ on GSM8K ($+0.07$pp over RLSD) and $0.945$ on SVAMP ($+0.5$pp over GRPO).

\paragraph{Long-horizon stability.}
\label{sec:500step}
Table~\ref{tab:500step_gsm8k} compares last-50 training reward after $500$ steps on GSM8K. Across all three architectures, every reference or globally balanced baseline triggers the collapse criterion, whereas M3-Select and M3-Soft remain non-collapsed. M3-Soft attains the highest final reward on both Qwen3 scales, while M3-Select is marginally higher on Llama-3.2-1B. Detailed phase-wise trajectories and collapse times for Qwen3-1.7B are reported in Table~\ref{tab:500step} of Appendix~\ref{app:long_horizon}.


\paragraph{Regime Dependence of Token-Level Gating.}
\label{sec:cross_arch}
Across the $12$ architecture--dataset cells, $\bar{\kappa}$ ranges from $1{,}423$ to $12{,}705$ (Figure~\ref{fig:kappa_bar}). Figure~\ref{fig:kappa_dynamics} provides the within-run counterpart: on Qwen3-1.7B GSM8K, Hybrid crosses the empirical threshold $\kappa^*=5{,}000$ near step $200$ before collapsing at step $221$, while GradNorm crosses later and collapses at step $358$. In the two high-$\kappa$ Llama arithmetic cells ($7{,}690/12{,}705$), Hybrid's reference-rate training reward falls to $0.031/0.000$ on GSM8K/SVAMP, whereas M3-Soft reaches $0.647/0.822$ (Appendix~\ref{app:cross_dataset}). At lower $\kappa$, hard masking can be overly restrictive: on Qwen3-ARC ($\bar{\kappa} =2{,}465$), M3-Select reaches $0.275$ accuracy, compared with $0.807$ for M3-Soft (Table~\ref{tab:cross_arch}). Thus, $\bar{\kappa}$ indicates the severity of magnitude imbalance and the appropriate gate strength, rather than directly predicting accuracy.
Together, these results support $\kappa$ as a regime indicator for instability and appropriate gate strength, rather than a predictor of absolute accuracy; exact per-cell $\bar{\kappa}$ values and reference-rate rewards are in Appendix~\ref{app:cross_dataset}.

\paragraph{Local Consequence of Token-Level Conflict.}
We test the sign-specific prediction on $30$ correct Qwen3-1.7B GSM8K trajectories comprising $8{,}492$ tokens. Tokens are partitioned by the pre-update cross-signal NTK, after which we apply one OPSD, GRPO, or hybrid update. On compatible positions, the hybrid update increases the sampled-token log-probability by $2.0\times10^{-3}$; on conflicting positions, it decreases it by $2.4\times10^{-4}$ (Table~\ref{tab:one_step}). Over the same conflicting subset, the pure OPSD and GRPO updates yield positive changes. Because the partition is computed before the update, the result tests the local sign prediction rather than defining conflict from the observed logit change.

\section{Conclusion and Future Work}
\label{sec:conclusion}

We study when dense teacher supervision can complement sparse verifiable rewards in reasoning-model post-training. Our NTK analysis separates their interaction into token-level compatibility, captured by the cross-signal NTK $K_{DR}(n)$, and scale imbalance, captured by $\kappa$, exposing localized directional conflict and magnitude drowning. This diagnosis leads to the M3 family, which combines magnitude normalization with hard, soft, or temporally decoupled compatibility gating. Across four model families and three reasoning benchmarks, M3-Soft matches or exceeds the strongest non-M3 baseline, while M3 variants remain stable over $500$-step GSM8K training where non-gated baselines collapse.
In the future, we plan to scale this application to support larger and more complex agentic scenarios such as coding and deep research.

\subsection*{AI use statement}

In this work, large language models (LLMs) were used for language polishing, figure design assistance, coding support, and mathematical proof assistance. Specifically, LLMs were used to improve the clarity, grammar, and readability of the manuscript, refine its stylistic quality, and suggest alternative phrasings to reduce redundancy. They also provided suggestions for figure design, visualization layouts, and graphical presentation; all final figures were created, verified, and curated by the authors using the authors' experimental data. In addition, LLMs assisted with code development, intermediate mathematical derivations, proof construction, and consistency checking. All assumptions, formal statements, derivations, proofs, code, figures, and other LLM-assisted content were independently reviewed, verified, and revised by the authors before inclusion. The research questions, core methodology, scientific contributions, experimental design, critical analyses, and final decisions were independently determined by the authors.

\subsection*{Ethics statement}

All experiments in this work were carried out using publicly available language models and standard reasoning benchmarks, including GSM8K, MATH, SVAMP, and ARC-Challenge, in accordance with their respective licenses and terms of use. The study does not involve human or animal subjects, and we did not collect, use, or disclose any personally identifiable information or private user data.

\subsection*{Reproducibility statement}
All experiments use publicly available base models (Qwen3-0.6B, Qwen3-1.7B, Qwen2.5-1.5B, InternLM2.5-1.8B, and Llama-3.2-1B) and standard benchmarks (GSM8K, MATH, SVAMP, and ARC-Challenge). Full hyperparameters, training protocols, held-out split construction, and the rerun/best-of-candidates selection procedure are specified in Appendix~\ref{app:experimental_details} and Section~\ref{sec:setup}. The cross-signal NTK diagnostic $K_{DR}(n)$ and the M3 gating rules are specified in closed form in Sections~\ref{sec:cross_signal_ntk}--\ref{sec:methods} and Algorithm~\ref{alg:m3_select}; no proprietary data or infrastructure is required to reproduce the main results.

\bibliography{references}
\bibliographystyle{iclr2027_conference}

\appendix
\renewcommand{\thesection}{\AlphAlph{\value{section}}}

\clearpage

\section{Experimental Details}
\label{app:experimental_details}

\paragraph{Rollout and optimization protocol.}
We use the on-policy OPSD construction of Section~\ref{sec:problem_setup}. With LoRA disabled, the frozen teacher receives a privileged prompt containing the ground-truth solution, whereas the student receives only the problem and its sampled prefix. Unless otherwise stated, experiments use the defaults in Table~\ref{tab:experimental_hyperparameters}.

\paragraph{Evaluation metrics and regime statistics.}
Training reward is the last-20-step mean unless a table states otherwise, and  accuracy is measured by greedy decoding on GSM8K ($n=1319$), SVAMP ($n=600$), and ARC ($n=1172$). The SVAMP split excludes all prompts used by the 100- and 200-step training runs. For each architecture--dataset pair, $\bar\kappa$ is the mean of $\kappa_t=\lVert\vg_R(t)\rVert/\lVert\vg_D(t)\rVert$ over RL-active steps of the corresponding naive-Hybrid run and is used as a regime indicator. A run is marked as collapsed when reward remains below $0.05$ for ten consecutive steps.

\paragraph{Candidate selection and post-processing.}
For each method--cell pair, the evaluated candidates include the reference run and, where available, seed replicates, checkpoints every $20$ steps for horizons up to $T\in\{100,200,400\}$, and conservative-rate reruns at $1$--$2\times10^{-5}$. M3-Soft additionally includes the evaluated $(\beta,\alpha_{\max},T)$ sweep. SDPO uses its selected conservative-rate candidate; on InternLM-ARC, this is the $20$-step pre-collapse checkpoint. Conservative-rate candidates are made available to all methods on Llama GSM8K and SVAMP. Collapsed runs are retained and reported at their measured reward and  accuracy rather than excluded. SWA is applied to the Qwen3- and InternLM-GSM8K chains and to the Llama-GSM8K conservative-rate seed-42 chain using last-two-checkpoint averaging. Because candidates and checkpoints are chosen using  accuracy and no independent validation split is defined, these entries are reported as \emph{best observed} rather than validation-selected results. Seed-level robustness and SWA values are reported in Section~\ref{sec:multiseed}.

\begin{table}[t]
\centering
\caption{Default experimental hyperparameters.}
\label{tab:experimental_hyperparameters}
\small
\setlength{\tabcolsep}{7pt}
\renewcommand{\arraystretch}{1.08}
\begin{tabular}{lll}
\toprule
\rowcolor{gray!10}
\textbf{Component} & \textbf{Hyperparameter} & \textbf{Value} \\
\midrule
LoRA & Rank $r$ & $64$ \\
LoRA & Scaling $\alpha_{\mathrm{LoRA}}$ & $128$ \\
Distillation & Divergence & Generalized JSD \\
Distillation & JSD coefficient $\lambda$ & $\tfrac12$ \\
Distillation & Token clipping $\tau$ & $0.05$ \\
Reinforcement learning & Rollouts per group $G$ & $8$ \\
Hybrid baseline & Default mixing coefficient $\alpha$ & $0.5$ \\
Optimization & Reference learning rate & $5\times10^{-5}$ \\
\bottomrule
\end{tabular}
\end{table}

\section{Extended Experiments}
\label{app:experiments_extended}

\subsection{Detailed Cross-Architecture Results}
\label{app:full_comparison}

\paragraph{Accuracy.}
Table~\ref{tab:cross_arch_testacc} directly compares M3-Soft with the strongest non-M3 baseline in each architecture--dataset cell. Under the per-cell best-observed protocol, M3-Soft matches or exceeds the strongest baseline in all $12$ cells ($11$ strict wins, one tie). The largest gains occur on Qwen2.5-SVAMP ($0.905$, $+4.5$pp over RLSD), Qwen2.5-ARC ($0.736$, $+4.1$pp over OPSD+GradNorm), and InternLM-SVAMP ($0.777$, $+4.0$pp over RLSD); Llama-GSM8K is the exact tie with RLSD at $0.568$.

\paragraph{Seed robustness and SWA.}
\label{sec:multiseed}
Seed replicates qualify the best-observed results in Table~\ref{tab:cross_arch_testacc}. \textbf{Wide-margin cells.} The ranking is stable on Qwen2.5-SVAMP, where the M3-Soft seed mean remains $+2.4$pp above RLSD, and on Llama-SVAMP, where both replicates exceed GRPO by $+7.7$ and $+10.2$pp. Qwen2.5-GSM8K also wins for both available replicates, while two of three Qwen2.5-ARC seeds exceed the strongest baseline and the third ties it. \textbf{Tight-margin cells.} The Qwen3-GSM8K replicates remain within $1.2$pp below RLSD before SWA, and the InternLM-GSM8K win is likewise obtained only after averaging. SWA raises the selected Qwen3-, InternLM-, and Llama-GSM8K chains by $0.013$, $0.015$, and $0.004$, respectively, yielding accuracies of $0.8324$, $0.501$, and $0.568$; it is not uniformly beneficial, decreasing the Llama seed-43 chain from $0.562$ to $0.552$. Llama-GSM8K is seed-fragile at the reference learning rate, where two of three replicates collapse, but both conservative-rate replicates survive ($0.568/0.552$). We therefore treat the tight cells as best-observed parity rather than seed-robust separation.
\begin{table}[t]
\centering
\caption{Accuracy of M3-Soft versus the strongest non-M3 baseline.}
\label{tab:cross_arch_testacc}
\footnotesize
\setlength{\tabcolsep}{5pt}
\renewcommand{\arraystretch}{1.12}

\begin{tabular}{l|ccc|ccc|c}
\toprule
\rowcolor{gray!10}
& \multicolumn{3}{c|}{\textbf{Best non-M3 baseline}}
& \multicolumn{3}{c|}{\textbf{M3-Soft}}
& \textbf{M3-Soft} \\
\rowcolor{gray!10}
\textbf{Architecture}
& \textbf{GSM8K} & \textbf{SVAMP} & \textbf{ARC}
& \textbf{GSM8K}
& \textbf{SVAMP}
& \textbf{ARC}
& \textbf{$\geq$ Baseline} \\
\midrule
Qwen3-1.7B
& 0.8317 & 0.940 & 0.796
& \textbf{0.8324}
& \textbf{0.945}
& \textbf{0.807}
& 3/3 \\

Qwen2.5-1.5B
& 0.735 & 0.860 & 0.695
& \textbf{0.752}
& \textbf{0.905}
& \textbf{0.736}
& 3/3 \\

InternLM2.5-1.8B
& 0.492 & 0.737 & 0.604
& \textbf{0.501}
& \textbf{0.777}
& \textbf{0.608}
& 3/3 \\

Llama-3.2-1B
& \textbf{0.568} & 0.730 & 0.545
& \textbf{0.568}
& \textbf{0.750}
& \textbf{0.556}
& 3/3 \\
\midrule
\rowcolor{blue!5}
\multicolumn{7}{r|}{\textbf{Total: 11 strict wins and 1 exact tie}}
& \textbf{12/12} \\
\bottomrule
\end{tabular}
\end{table}

\paragraph{Training reward and magnitude ratio.}
\label{app:cross_dataset}
Table~\ref{tab:cross_dataset} reports per-cell training rewards and the exact $\bar\kappa$ measured on the corresponding naive-Hybrid runs. The largest reference-rate separations occur on Llama-GSM8K ($\bar\kappa=7{,}690$; M3-Soft $0.647$, $20.9\times$ over Hybrid $0.031$) and Llama-SVAMP ($\bar\kappa=12{,}705$; M3-Soft $0.822$, $11.4\times$ over GRPO $0.072$, while Hybrid reaches $0.000$). At the low-$\kappa$ end, Qwen3-ARC ($\bar\kappa=2{,}465$) favors Hybrid in training reward ($0.919$ vs.\ $0.775$ for the extended-training M3-Soft candidate). For Llama-GSM8K, this table reports the reference-rate M3-Soft run ($0.647$), whereas Table~\ref{tab:cross_arch} uses the conservative-rate SWA chain selected by  accuracy (reward $0.591$, accuracy $0.568$). Because the M3-Soft column includes tuned and, where marked, extended-training candidates, this table is an optimization diagnostic rather than a matched-budget  comparison.

\begin{table}[htbp]
\centering
\caption{Per-cell training reward (last-20 mean) and mean gradient-magnitude ratio $\bar{\kappa}$. M3-Soft reports the selected evaluated configuration; $^{*}$ denotes extended  training, and $\bar{\kappa}$ is measured on the corresponding naive-Hybrid run.}
\label{tab:cross_dataset}
\footnotesize
\setlength{\tabcolsep}{4pt}
\begin{tabular}{llccccc|c}
\toprule
\rowcolor{gray!10}
Architecture & Dataset & GRPO & OPSD & OPSD+GradNorm & M3-Select & M3-Soft & $\bar{\magvar{\kappa}}$ \\
\midrule
\multirow{3}{*}{Qwen3-1.7B}
  & GSM8K & $0.838$ & $0.825$ & $0.731$ & $0.844$ & $\mathbf{0.881}$ & $3{,}404$ \\
  & SVAMP & $0.922$ & $0.894$ & $0.894$ & $0.947$ & $\mathbf{0.969}$ & $5{,}080$ \\
  & ARC   & $0.741$ & $\mathbf{0.919}$ & $0.769$ & $0.263$ & $0.775^{*}$ & $2{,}465$ \\
\midrule
\multirow{3}{*}{Qwen2.5-1.5B}
  & GSM8K & $0.806$ & $0.747$ & $0.653$ & $0.153$ & $\mathbf{0.828}^{*}$ & $2{,}467$ \\
  & SVAMP & $0.769$ & $0.766$ & $0.681$ & $0.331$ & $\mathbf{0.778}^{*}$ & $2{,}234$ \\
  & ARC   & $0.659$ & $0.650$ & $0.644$ & $0.319$ & $\mathbf{0.700}$ & $2{,}184$ \\
\midrule
\multirow{3}{*}{InternLM2.5-1.8B}
  & GSM8K & $0.369$ & $0.428$ & $0.306$ & $0.100$ & $\mathbf{0.472}^{*}$ & $2{,}745$ \\
  & SVAMP & $0.594$ & $0.563$ & $0.594$ & $0.366$ & $\mathbf{0.688}^{*}$ & $2{,}690$ \\
  & ARC   & $0.616$ & $0.634$ & $0.666$ & $0.416$ & $\mathbf{0.682}^{*}$ & $1{,}423$ \\
\midrule
\multirow{3}{*}{Llama-3.2-1B}
  & GSM8K & $0.000$ & $0.031$ & $0.006$ & $0.597$ & $\mathbf{0.647}$ & $7{,}690$ \\
  & SVAMP & $0.072$ & $0.000$ & $0.038$ & $0.488$ & $\mathbf{0.822}$ & $12{,}705$ \\
  & ARC   & $0.522$ & $0.469$ & $0.353$ & $0.406$ & $\mathbf{0.650}^{*}$ & $1{,}562$ \\
\bottomrule
\end{tabular}
\end{table}

\subsection{Long-Horizon Stability and Runtime Diagnostics}
\label{app:robustness}

\paragraph{Cross-dataset long-horizon results.}
Table~\ref{tab:500step_datasets} extends the $500$-step evaluation to MATH, SVAMP, and ARC-Challenge using Qwen3-1.7B. Under the reference configurations, prolonged training can still trigger collapse beyond GSM8K: GRPO and OPSD+GradNorm collapse on MATH, while Hybrid collapses on SVAMP. In contrast, both M3 variants remain non-collapsed across all three datasets.

\begin{table}[t]
\centering
\caption{Long-horizon reward across datasets on Qwen3-1.7B after $500$ steps (last-50 mean). M3-Soft uses the best evaluated gate configuration per dataset; other methods use the
reference configuration. Bold marks the column maximum, and $\downarrow$ denotes collapse.}
\label{tab:500step_datasets}

\small
\setlength{\tabcolsep}{9pt}
\renewcommand{\arraystretch}{1.14}
\setlength{\aboverulesep}{0.2ex}
\setlength{\belowrulesep}{0.3ex}

\begin{tabular}{l|ccc}
\toprule
\rowcolor{gray!20}
\textbf{Method}
& \textbf{MATH}
& \textbf{SVAMP}
& \textbf{ARC-Challenge} \\
\midrule

\rowcolor{gray!10}
\multicolumn{4}{l}{\textit{Reference and adaptive baselines}} \\
GRPO
& $\downarrow\!0.018$
& $0.900$
& $0.773$ \\
Hybrid (OPSD)
& $0.448$
& $\downarrow\!0.170$
& $0.765$ \\
OPSD+GradNorm
& $\downarrow\!0.015$
& $\mathbf{0.973}$
& $\mathbf{0.790}$ \\
\midrule

\rowcolor{blue!10}
\multicolumn{4}{l}{\textit{\textbf{Ours}: boundary-gated mixing (M3)}} \\
M3-Select
& $0.367$
& $0.943$
& $0.282$ \\
M3-Soft
& $\mathbf{0.523}$
& $0.950$
& $0.667$ \\
\bottomrule
\end{tabular}
\end{table}

\paragraph{Phase-wise collapse dynamics.}
\label{app:long_horizon}
Table~\ref{tab:500step} resolves the Qwen3-1.7B GSM8K runs into 100-step phases. Collapse denotes reward below $0.05$ for ten consecutive steps. Unless noted otherwise, the runs use the reference learning rate $5\!\times\!10^{-5}$; all entries are training rewards rather than held-out accuracies.SDPO collapses first at step $50$, followed by Hybrid at $221$, OPSD+GradNorm at $358$, RLSD at $362$, and GRPO at $412$. In contrast, M3-Select does not trigger the collapse criterion within $500$ steps. Phase-wise M3-Soft results are unavailable, but its last-50 reward is reported in Table~\ref{tab:500step_gsm8k}.
\begin{table}[t]
\centering
\caption{Phase-wise training reward over $500$ steps on Qwen3-1.7B GSM8K. ``Full'' is the all-step mean, ``Trend'' gives the first collapse step, and $^\dagger$ denotes batch size $1$.}
\label{tab:500step}
\footnotesize
\setlength{\tabcolsep}{3pt}
\resizebox{\linewidth}{!}{%
\begin{tabular}{lccccccc}
\toprule
\rowcolor{gray!10}
Method & 1--100 & 101--200 & 201--300 & 301--400 & 401--500 & Full & Trend \\
\midrule
Pure GRPO & $0.840$ & $0.801$ & $0.809$ & $0.802$ & $0.075$ & $0.665$ & collapse @412 \\
Hybrid (OPSD, $\alpha\!=\!0.5$) & $0.859$ & $0.762$ & $0.114$ & $0.006$ & $0.001$ & $0.348$ & collapse @221 \\
OPSD+GradNorm$^\dagger$ & $0.877$ & $0.823$ & $0.868$ & $0.464$ & $0.000$ & $0.606$ & collapse @358 \\
RLSD~\citep{yang2026rlsd} & $0.855$ & $0.874$ & $0.871$ & $0.259$ & $0.005$ & $0.573$ & collapse @362 \\
SDPO~\citep{hubotter2026sdpo} & $0.306$ & $0.000$ & $0.000$ & $0.000$ & $0.000$ & $0.061$ & collapse @50 \\
M3-Select ($\alpha_{\max}\!=\!0.15$) & $0.846$ & $0.818$ & $0.853$ & $0.846$ & $0.810$ & $0.835$ & stable \\
\bottomrule
\end{tabular}}
\end{table}


\paragraph{Post-collapse conflict diagnostic.}
After Hybrid collapses, its measured conflict rate falls to zero because the RL gradient vanishes, not because the two signals become compatible. M3-Select instead maintains an active conflict rate near $\dirvar{C_{\mathrm{NTK}}}=0.3$ while preserving reward (Figure~\ref{fig:conflict_rate}).

\begin{figure}[t]
\centering
\includegraphics[width=0.78\linewidth]{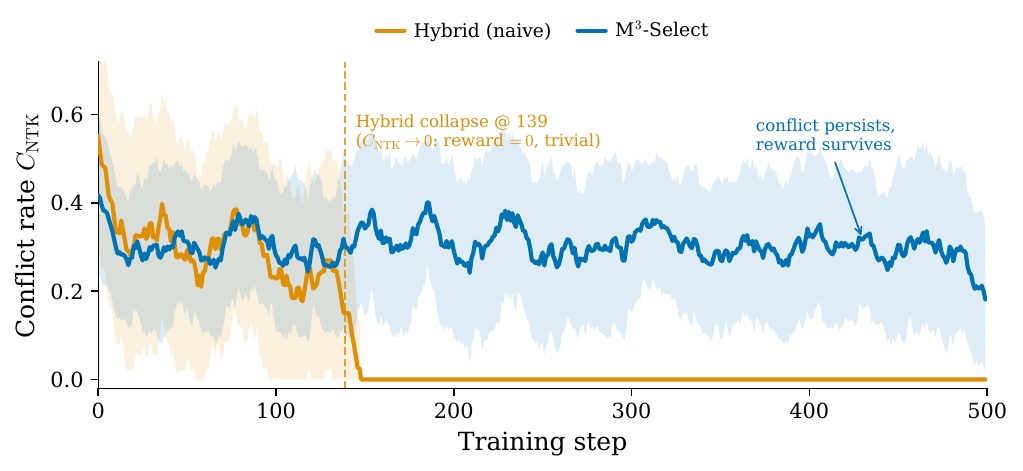}
\caption{Runtime token-level conflict on Qwen3-0.6B GSM8K. Hybrid's post-collapse drop reflects a vanishing RL
gradient; M3-Select remains active near $\dirvar{C_{\mathrm{NTK}}}=0.3$.}
\label{fig:conflict_rate}
\end{figure}

\subsection{Gate Sensitivity Across Regimes}
\label{app:gate_sensitivity}



\paragraph{Controlled sharpness sweep.}
\label{sec:soft_sweep}
We first isolate gate sharpness on Qwen3-1.7B GSM8K at $100$ steps. With $\alpha_{\max}=0.5$, the gentle $\beta=0.5$ gate attains the largest last-20 reward ($0.881$), compared with $0.856$ for $\beta=1$, $0.850$ for GRPO, and $0.844$ for hard M3-Select (Table~\ref{tab:soft_sweep}). The soft-gate configurations have similar values of the count-based budget proxy $\alpha_{\mathrm{eff}}$ ($0.439$--$0.446$), consistent with gate shape, rather than a large change in this proxy---driving the observed differences.

\begin{table}[htbp]
\centering
\caption{M3-Soft sharpness sweep on Qwen3-1.7B GSM8K at $100$ steps ($\alpha_{\max}=0.5$). $\alpha_{\mathrm{eff}}$ denotes the mean teacher weight. For hard selection, the measured conflict rate is $C_{\mathrm{NTK}}\approx0.40$, so approximately $60\%$ of tokens retain weight $\alpha_{\max}$, giving $\alpha_{\mathrm{eff}}\approx0.5\times0.60=0.30$.}
\label{tab:soft_sweep}
\begin{tabular}{lcccl}
\toprule
\rowcolor{gray!10}
Method & $\beta$ & Last-10 & Last-20 & $\alpha_{\mathrm{eff}}$ \\
\midrule
GRPO & --- & $0.858$ & $0.850$ & --- \\
OPSD (uniform) & --- & $0.833$ & $0.825$ & $0.500$ \\
\midrule
M3-Select (hard) & $\infty$ & $0.855$ & $0.844$ & $\sim\!0.30$ \\
M3-Soft & $10$ & $0.875$ & $0.847$ & $0.446$ \\
M3-Soft & $5$ & $0.813$ & $0.838$ & $0.439$ \\
M3-Soft & $1$ & $0.906$ & $0.856$ & $0.444$ \\
\textbf{M3-Soft} & $\mathbf{0.5}$ & $\mathbf{0.913}$ & $\mathbf{0.881}$ & $0.446$ \\
\bottomrule
\end{tabular}
\end{table}

\paragraph{Cross-architecture reward sensitivity.}
\label{app:beta_sensitivity_reward}
Table~\ref{tab:beta_sensitivity} fixes the horizon at $100$ steps and compares $\beta\in\{0.5,1,5\}$ across all $12$ architecture--dataset cells. A gentle gate ($\beta\in\{0.5,1\}$) is the best evaluated M3-Soft setting in $11/12$ cells. Both high-$\kappa$ Llama arithmetic cells favor $\beta=1$, whereas InternLM-SVAMP is the sole $\beta=5$ exception. This table diagnoses sensitivity within M3-Soft; it is not the source of the best-observed  headline in Table~\ref{tab:cross_arch}.

\begin{table}[htbp]
  \centering
\caption{M3-Soft sharpness sensitivity at $100$ steps (last-20 mean reward). $\alpha_{\max}=0.05$ by default and $0.025$ on Qwen3-SVAMP.}
  \label{tab:beta_sensitivity}
  \footnotesize
  \setlength{\tabcolsep}{5pt}
  \renewcommand{\arraystretch}{1.12}
  \setlength{\aboverulesep}{0.2ex}
  \setlength{\belowrulesep}{0.3ex}

  \begin{tabular}{@{}llcccc@{}}
  \toprule
  \rowcolor{gray!10}
  \textbf{Model} &
  \textbf{Dataset} &
  $\boldsymbol{\beta=0.5}$ &
  $\boldsymbol{\beta=1}$ &
  $\boldsymbol{\beta=5}$ &
  \textbf{GRPO} \\
  \midrule

  \multirow{3}{*}{Qwen3-1.7B}
  & GSM8K & $0.834$ & $\mathbf{0.856}$ & $0.850$ & $0.838$ \\
  & SVAMP & $\mathbf{0.991}$ & $0.969$ & $0.981$ & $0.953$ \\
  & ARC   & $\mathbf{0.516}$ & $0.350$ & $0.356$ & $0.741$ \\

  \midrule
  \multirow{3}{*}{Llama-3.2-1B}
  & GSM8K & $0.569$ & $\mathbf{0.647}$ & $0.353$ & $\downarrow\!0.000$ \\
  & SVAMP & $0.456$ & $\mathbf{0.822}$ & $0.609$ & $0.072$ \\
  & ARC   & $\mathbf{0.559}$ & $0.500$ & $0.388$ & $0.522$ \\

  \midrule
  \multirow{3}{*}{Qwen2.5-1.5B}
  & GSM8K & $0.697$ & $\mathbf{0.728}$ & $0.169$ & $0.806$ \\
  & SVAMP & $\mathbf{0.794}$ & $0.691$ & $0.463$ & $0.769$ \\
  & ARC   & $\mathbf{0.700}$ & $0.475$ & $0.369$ & $0.659$ \\

  \midrule
  \multirow{3}{*}{InternLM2.5-1.8B}
  & GSM8K & $0.234$ & $\mathbf{0.269}$ & $0.138$ & $0.369$ \\
  & SVAMP & $0.594$ & $0.366$ & $\mathbf{0.744}$ & $0.594$ \\
  & ARC   & $0.469$ & $\mathbf{0.575}$ & $0.472$ & $0.616$ \\

  \bottomrule
  \end{tabular}
  \end{table}

\paragraph{Sensitivity.}
\label{app:beta_sensitivity}
Training reward does not by itself select the best gate for  accuracy. On Qwen2.5-SVAMP, the very-soft recipe $(\beta,\alpha_{\max})=(0.1,0.001)$ reaches $0.884\!\pm\!0.026$ across three seeds ($0.855/0.892/0.905$), exceeding RLSD's $0.860$ by $2.4$pp in the seed mean. On InternLM-SVAMP, the same recipe gives $0.710/0.777/0.722$: the mean ($0.736$) is at parity with RLSD ($0.737$), while the best observed seed reaches $0.777$. Thus, the Qwen2.5 gain is seed-robust, whereas the InternLM headline is best-observed rather than a mean separation.

\subsection{Mechanistic Validation}
\label{app:mechanistic_validation}

The following diagnostics test the mechanism at progressively coarser levels. We first verify the predicted one-step effect after partitioning tokens by their pre-update cross-signal NTK, then compare response-level and token-level notions of conflict, and finally summarize the aggregate behavior across settings.

\paragraph{Response-Level Semantic Conflict.}
\label{app:response_conflict}
\begin{table}[htbp]
\centering
\caption{Response-level semantic conflict (Qwen3-0.6B, 208 trajectories): on-policy = teacher evaluates the student's own trajectory, off-policy = ground-truth contexts; true opposition = wrong trajectories with opposing RL and distillation gradients.}
\label{tab:response_conflict}
\begin{tabular}{lccccc}
\toprule
\rowcolor{gray!10}
Mode & Conflict \% & $\overline{\cos}(\vg_R,\vg_D)$ & True opposition & $\magvar{\kappa}_{\mathrm{median}}$ & $\bar{\cL}_D$ \\
\midrule
On-policy OPSD & $75\%$ & $+0.08$ & $25/157$ ($16\%$) & $388$ & $0.098$ \\
Off-policy & $75\%$ & $-0.01$ & $90/157$ ($57\%$) & $68$ & $0.810$ \\
\bottomrule
\end{tabular}
\end{table}

Response-level labels and token-level interactions answer different questions. For each trajectory in a GRPO group, we measure its reward, normalized advantage, trajectory-level gradient cosine $\cos(\vg_R,\vg_D)$, and magnitude ratio $\magvar{\kappa}$. Across $20$ analysis steps with Qwen3-0.6B (batch size $2$, group size $8$), groups with zero reward variance are excluded because GRPO assigns zero advantage and hence no reward gradient, leaving $208$ trajectories. Both modes label $75\%$ of trajectories as semantically conflicting, but they differ sharply in gradient behavior. Among the $157$ incorrect trajectories, true opposition occurs in $25$ cases ($16\%$) on-policy and $90$ cases ($57\%$) off-policy; the corresponding median magnitude ratios are $388$ and $68$. Within the on-policy sample, the mean cosine is $+0.14$ on incorrect trajectories and $-0.11$ on correct trajectories, matching the inversion predicted by Proposition~\ref{prop:cosine_inversion}. Thus, response-level rejection alone does not determine whether the two gradients oppose each other.

\begin{figure}[t]
\centering
\includegraphics[width=\linewidth]{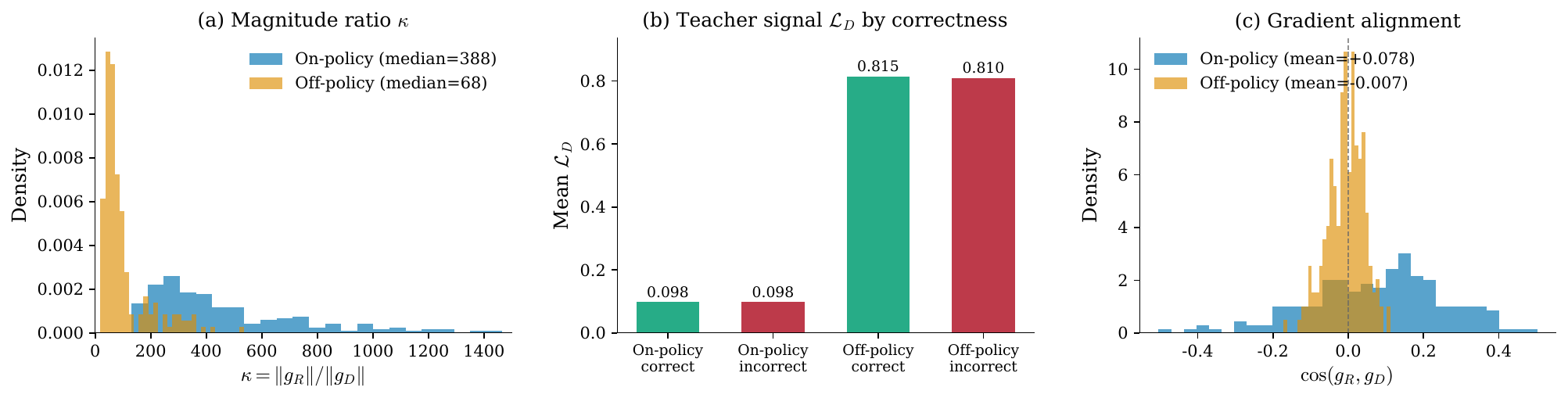}
\caption{On-policy versus off-policy semantic conflict. On-policy distillation has a higher median $\magvar{\kappa}$ but fewer incorrect trajectories with genuinely opposed reward and distillation gradients; off-policy supervision reverses this pattern.}
\label{fig:semantic_conflict_comparison}
\end{figure}

\paragraph{Token-Level NTK Conflict.}
\label{app:token_conflict_maps}

We next measure the same-position cross-signal NTK $\dirvar{K_{DR}(n)}$. Positions with $\dirvar{K_{DR}(n)}\geq0$ are locally compatible, whereas positions with $\dirvar{K_{DR}(n)}<0$ receive a teacher component that opposes reward progress. Across the measured model scales and datasets, the count-based conflict rate ranges from $0.37$ to $0.46$, with a mean of approximately $0.41$ (Figure~\ref{fig:token_conflict_map}). The narrow range shows that the conflict observed in the main-text rollout is not isolated to one model or dataset, without implying that all settings have identical conflict structure.

\begin{figure}[t]
\centering
\includegraphics[width=0.58\linewidth]{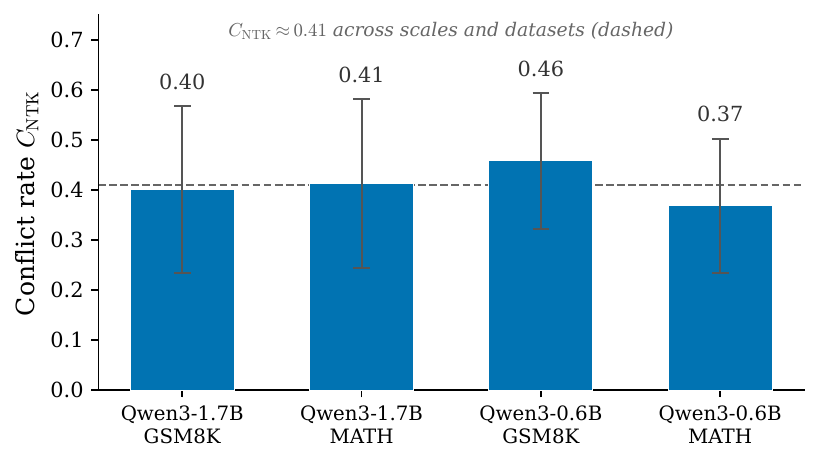}
\caption{Per-position conflict rate $\dirvar{C_{\mathrm{NTK}}}$ across model scales and datasets. The fraction of positions with negative cross-signal NTK remains between $0.37$ and $0.46$ (mean $\approx0.41$, dashed line).}
\label{fig:token_conflict_map}
\end{figure}

\begin{figure}[t]
\centering
\includegraphics[width=0.92\linewidth]{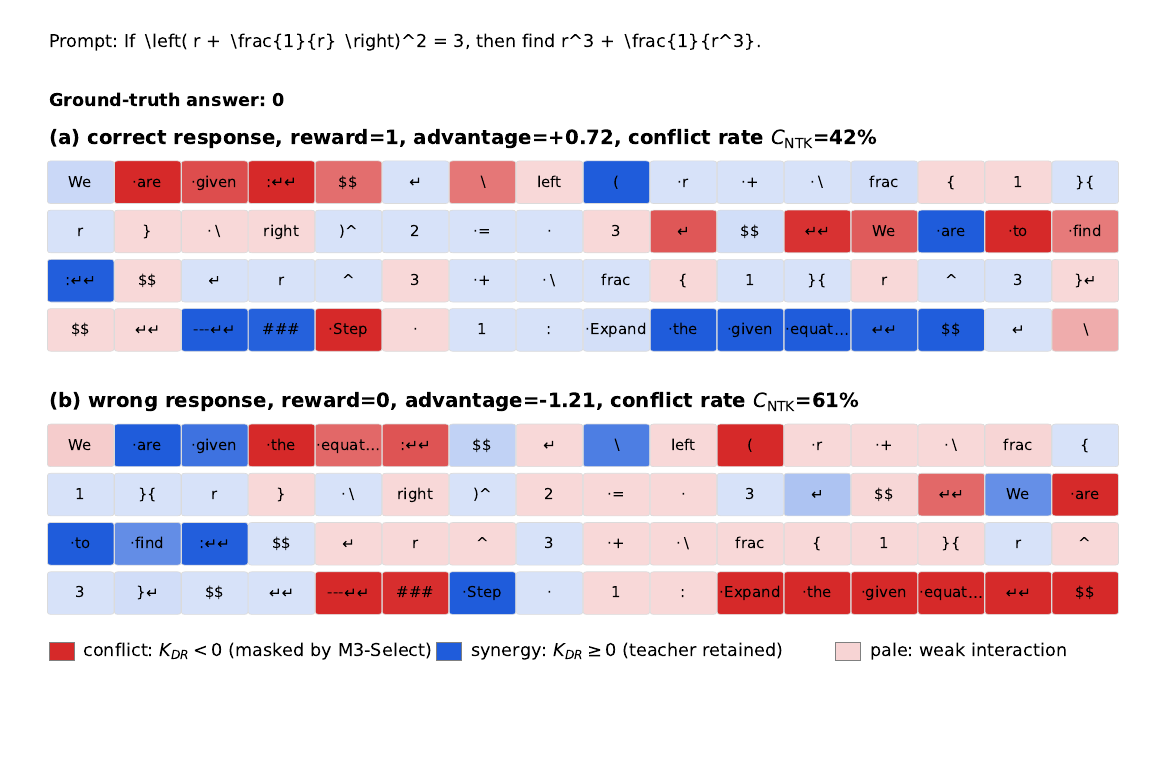}
\caption{A readable token-level case study for a correct ($A>0$) and an incorrect ($A<0$) response to the same prompt. Darker red denotes stronger local conflict ($\dirvar{K_{DR}(n)}<0$), darker blue denotes stronger compatibility, and pale colors denote weak interaction. Both responses interleave the two signal types.}
\label{fig:case_study_readable_conflict}
\end{figure}

\begin{figure}[p]
\centering
\includegraphics[width=0.92\linewidth]{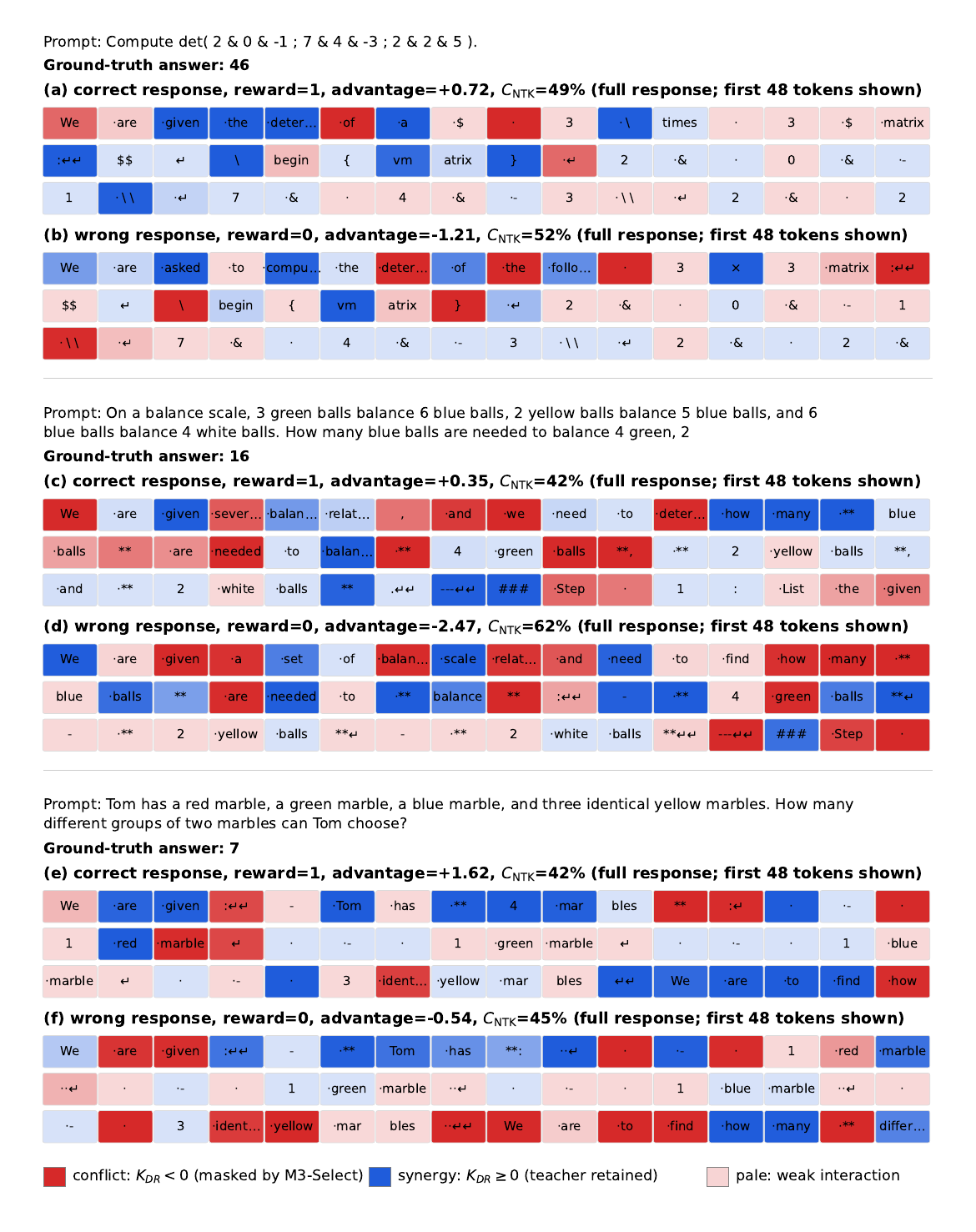}
\caption{Three additional paired token-level case studies. Headers report $\dirvar{C_{\mathrm{NTK}}}$ over each full $200$-token response; the first $48$ tokens are displayed. The incorrect rollout has the higher conflict rate in each pair ($49\%\!\to\!52\%$, $42\%\!\to\!62\%$, and $42\%\!\to\!45\%$), while every rollout contains both compatible and conflicting positions.}
\label{fig:case_study_more}
\end{figure}

Figure~\ref{fig:case_study_readable_conflict} restores token identities for one correct and one incorrect response to the same prompt. Across the four paired case studies, conflict covers $44.3\%$ of positions on correct rollouts and $54.9\%$ on incorrect rollouts, a difference of $10.6$ percentage points. Every rollout nevertheless interleaves compatible and conflicting positions, so these examples support token-level localization rather than a response-level cutoff. The count difference is descriptive for the eight visualized rollouts and is not presented as a population-level estimate.

\begin{figure}[t]
\centering
\includegraphics[width=\linewidth]{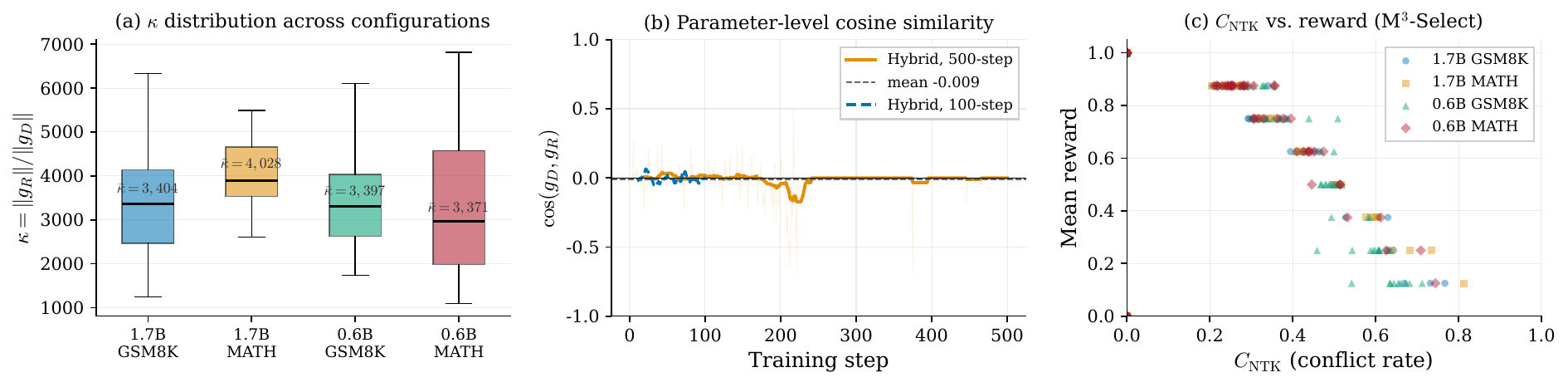}
\caption{Aggregate conflict diagnostics: \emph{(a)} mean magnitude ratio across settings; \emph{(b)} parameter-level gradient cosine; and \emph{(c)} token-level conflict rate versus reward for M$^3$-Select.}
\label{fig:conflict_metrics}
\end{figure}

\paragraph{Aggregate Conflict Diagnostics.}
\label{app:conflict_metrics}

Figure~\ref{fig:conflict_metrics} summarizes the reference Qwen diagnostics. Across the measured scales and datasets, the mean magnitude ratio remains in the $3{,}400$--$4{,}000$ range, while the parameter-level cosine averages $-0.009$. The M$^3$-Select trace also shows higher count-based conflict at lower-reward steps. This last relationship is descriptive: the plot does not establish that conflict rate alone causes or predicts method performance.

\subsection{Additional Ablations and Illustrative Examples}
\label{app:additional_experiments}

\paragraph{LoRA Rank Ablation.}
We vary the LoRA rank from $8$ to $128$ on Qwen3-1.7B GSM8K while holding the remaining M$^3$-Soft configuration fixed. The mean reward ranges from $0.824$ to $0.844$ across this $16\times$ change in trainable rank, with the best value at $r=64$ and a total spread of $0.020$ (Table~\ref{tab:lora_rank}). Thus, performance in this setting is not sensitive to the tested LoRA rank. Because $\magvar{\kappa}$ and the token-level conflict rate were not separately recorded for every rank, this ablation supports reward robustness rather than rank invariance of the underlying diagnostics.

\begin{table}[h]
\centering
\caption{LoRA-rank ablation for M$^3$-Soft on Qwen3-1.7B GSM8K over $100$ training steps.}
\label{tab:lora_rank}
\begin{tabular}{lccccc}
\toprule
\rowcolor{gray!10}
LoRA Rank & $r=8$ & $r=16$ & $r=32$ & $r=64$ & $r=128$ \\
\midrule
Mean Reward & 0.826 & 0.824 & 0.828 & \textbf{0.844} & 0.841 \\
\bottomrule
\end{tabular}
\end{table}

\paragraph{Illustrative Failure Patterns.}
The following cases are stylized examples distilled from qualitative patterns observed in the GSM8K and SVAMP runs; they are not verbatim training trajectories or additional controlled experiments. Their purpose is to show how magnitude imbalance, group-level cancellation, and token-level sign conflict can appear in concrete reasoning traces. In the examples, $+$ and $-$ denote positive and negative group-relative advantages, respectively.

\begin{badcasebox}[Bad Case~1: length inflation under drowning ($\magvar{\kappa} \gg \magvar{\kappa}^*$)]{bcmag}
On Llama-3.2-1B SVAMP ($\bar{\magvar{\kappa}}\!\approx\!1.3\!\times\!10^{4}$), the RL gradient dominates the teacher gradient by roughly four orders of magnitude. Because the verifier rewards only the final numeric answer, the group-normalized advantage rewards \emph{any} continuation that eventually reaches the correct digit and penalizes short-but-wrong ones. When distillation is drowned out, the model no longer receives the teacher's ``brevity + structure'' prior, response length inflates steadily as training proceeds, and reward collapses to $0.072$ (Table~\ref{tab:cross_dataset}, row Llama/SVAMP/GRPO). A stylized post-drowning rollout of the kind we observe:
\begin{quote}\small\ttfamily
Let $x$ be the answer. We are told that Melanie has $3\cdot 4$ apples\dots so $x=12$. But wait, let us re-verify: $3\cdot 4=12$, so indeed $x=12$. Actually, checking once more, $x=12$. \dots Answer: $\mathbf{12}$.
\end{quote}
The re-verification loops are reward-neutral (verifier only checks ``$\mathbf{12}$'') but teacher-negative. Under M3 the magnitude normalization restores the teacher's unit-scale voice---the brevity{+}structure prior is no longer drowned by the $10^4$ norm ratio---and the boundary admits this anti-repetition pressure exactly where it is reward-compatible ($\dirvar{K_{DR}(n)}\!\geq\!0$, e.g.\ in negative-advantage rollouts whose RL residual also pushes against the loops), so length inflation is suppressed and reward reaches $0.822$ (same table row, M3-Soft; the hard-gated M3-Select variant recovers $0.488$). \emph{This is the token-level dual of the length-drift phenomenon reported in prior exploration-boundary work: the drowning threshold acts as an implicit ``eos-suppression'' bias.}
\end{badcasebox}

\begin{badcasebox}[Bad Case~2: shared-prefix contamination in the same rollout group]{bcsign}
GRPO computes advantages per rollout, so a group of $G$ trajectories that share a long prefix will apply \emph{opposite-sign} updates to that prefix whenever their final answers disagree. Consider a GSM8K group of two rollouts from the same prompt:
\begin{quote}\small\ttfamily
Rollout 1 ($+$): \fbox{Let $x$ be the number of apples.} $x=5\cdot 3=15$. Answer: $\mathbf{15}$.\\
Rollout 2 ($-$): \fbox{Let $x$ be the number of apples.} $x=5+3=8$. Answer: $\mathbf{8}$.
\end{quote}
Denote the shared prefix as $s$. The naive RL contribution at any $s$-token $y_n$ is
\[
\mathcal{G}_R^n\big|_{y_n\in s} \;=\; -\bigl[(r_1-b) + (r_2-b)\bigr]\bigl(\ve_{y_n} - \vp_S^n\bigr) \;=\; 0,
\]
since the group-mean baseline $b=\tfrac{1}{2}(r_1+r_2)$ centers the two advantages: the RL signal on the shared prefix cancels exactly in this two-rollout group---and nearly so for general $G$, where unbalanced $+$/$-$ counts and length-normalization weights leave a small residual. But the distillation signal $\mathcal{G}_D^n\!=\!\vp_S^n-\vp_T^n$ remains fully alive on the same tokens, and the teacher pushes $\vp_S$ toward its preferred verbalization---while the small residual RL noise flips sign randomly across steps. The result is an unstable prefix whose gradient variance is dominated by cancellation noise (large $\gamma_n$, Definition~\ref{def:gradient_folding} extension in \S\ref{app:folding_bridges}). M3-Select computes $\dirvar{\cos\varphi_n}$ between $\mathcal{G}_D^n$ and the \emph{per-rollout} $\mathcal{G}_R^n$, detects $\bar c_n\!=\!+1$ on the shared prefix of the winning rollout (teacher and RL prefer the same continuation there), and admits the teacher at its full budget $\alpha_{\max}$: after per-token normalization the distillation direction is the only \emph{persistent} signal on the prefix, anchoring it against the sign-flipping residual noise. The RL term is never masked---its group-averaged contribution self-cancels as shown above---so reward supervision effectively acts only on the divergence region (``$5\!\cdot\!3\!=\!15$'' vs ``$5\!+\!3\!=\!8$''), where the two rollouts' residuals no longer cancel. \emph{This is the on-policy hybrid analogue of the shared-prefix contamination reported by \citet{ren2024ntk} (``I-ate-lunch, happy/unhappy'' example).}
\end{badcasebox}

\begin{badcasebox}[Bad Case~3: correct intermediate step suppressed by wrong final answer]{bcsign}
Even when the group contains only one $(-)$ rollout, an internally-correct intermediate step is penalized because GRPO's outcome reward propagates the trajectory-level sign to every token. Consider a two-step arithmetic rollout:
\begin{quote}\small\ttfamily
Rollout $A$ ($-$): $5\cdot 4=$ \fbox{$20$}, then $20+3=$ \fbox{$24$}. Answer: $\mathbf{24}$.\\
Rollout $B$ ($+$): $5\cdot 4=$ \fbox{$20$}, then $20+3=$ \fbox{$23$}. Answer: $\mathbf{23}$.
\end{quote}
The intermediate token ``$20$'' is arithmetically correct in \emph{both} rollouts, and the teacher's distributional prior $\vp_T$ places $>0.9$ mass on it. Under naive hybrid,
\[
\mathcal{G}_R^n\big|_{y_n=``20''}^{\text{Rollout }A} \;=\; -(r_A-b)(\ve_{20}-\vp_S^n),
\]
with $r_A-b < 0$, so the RL update \emph{lowers} the probability of the correct intermediate. The per-position diagnostic (Proposition~\ref{prop:per_position}) yields $\langle\mathcal{G}_D^n,\mathcal{G}_R^n\rangle \propto -(r_A-b)\sigma_n < 0$, so $\bar c_n\!=\!-1$ and M3-Select withholds teacher supervision from Rollout~$A$'s update at this token---the gate acts on the teacher, never on RL. The repair instead comes from the group: Rollout~$B$'s equal-and-opposite RL contribution cancels Rollout~$A$'s in the aggregate, while on Rollout~$B$ the diagnostic reads $\bar c_n\!=\!+1$ and the teacher is admitted, so the surviving net update pushes $p_S(``20'')\!\uparrow$. Correct-intermediate tokens under negative advantage arise whenever a group mixes right and wrong final answers built on shared sub-computations; protecting them through this admitted teacher vote is a driver of the $0.647$ vs $0.000$ reference-rate gap between M3-Soft and GRPO (Table~\ref{tab:cross_dataset}). \emph{This is the dense--sparse counterpart of the ``$1{+}1{=}2$'' intermediate-step cancellation phenomenon documented in prior exploration-boundary analyses.}
\end{badcasebox}

\begin{badcasebox}[Bad Case~4: directional cancellation on high-entropy scaffold tokens]{bcmag}
Certain tokens are structurally common to almost every correct \emph{and} wrong solution: the ``$=$'' sign, ``Answer:'', ``$\backslash n$'', punctuation, and low-content connectives (``so'', ``then''). The teacher assigns $\vp_T\!>\!0.9$ on these, and the RL signal is noisy and near-zero in expectation (they appear roughly equally in $+$ and $-$ rollouts). Yet the per-step RL contribution is non-zero and has large variance, producing a gradient-cancellation rate
\[
\gamma_n \;=\; 1 - \frac{\bigl\|\tfrac1G\sum_g \nabla_{\vtheta} \ell_g(y_n)\bigr\|^2}{\tfrac1G\sum_g \bigl\|\nabla_{\vtheta} \ell_g(y_n)\bigr\|^2}
\]
approaching $1$ at these positions, in sharp contrast to content tokens, whose group updates are directionally aligned. Here the soft gate is the right tool: M3-Soft assigns these positions a stable intermediate budget $\alpha_n\!=\!\alpha_{\max}\sigma(\beta\dirvar{\cos\varphi_n})\!\approx\!\alpha_{\max}/2$ that, at small $\beta$, is insensitive to the noisy sign of $\widehat{\dirvar{\cos\varphi_n}}$, whereas M3-Select's hard indicator flips with that sign and evicts the teacher on roughly half the scaffold positions at random. Since the group-averaged RL residual is near zero-mean here while the normalized teacher direction is persistent, distillation dominates the \emph{expected} update without any signal being masked---which is why M3-Soft with $\beta\!=\!0.1$ recovers scaffold-token fluency on the low-$\magvar{\kappa}$ Qwen2.5-SVAMP cell (\S\ref{app:beta_sensitivity}).
\end{badcasebox}

Table~\ref{tab:bad_case_taxonomy} summarizes what each example is intended to illustrate. The first case concerns magnitude imbalance, the next two concern sign structure under outcome-level credit assignment, and the fourth concerns uncertainty near the compatibility boundary.


\begin{table}[t]
\centering
\footnotesize
\caption{Stylized failure patterns, observable signatures, and corresponding interventions.}
\label{tab:bad_case_taxonomy}
\setlength{\tabcolsep}{5pt}
\renewcommand{\arraystretch}{1.18}
\begin{tabular}{@{}p{3.8cm}p{5.5cm}p{3.2cm}@{}}
\toprule
Failure pattern & Observable signature & Intervention \\
\midrule
\textbf{Length inflation}\newline
\emph{Magnitude drowning}
& Large $\magvar{\kappa}$; repetitive reasoning not penalized by the final-answer verifier
& Magnitude normalization \\
\textbf{Shared-prefix cancellation}\newline
\emph{Group-level cancellation}
& Weak aggregate reward signal on tokens shared by opposite-advantage rollouts
& Per-token compatibility gate \\
\textbf{Correct-intermediate suppression}\newline
\emph{Token-level sign conflict}
& $\dirvar{K_{DR}(n)}<0$ on a locally correct token inside a rejected response
& M$^3$-Select or M$^3$-Soft \\
\textbf{Scaffold-token instability}\newline
\emph{Boundary uncertainty}
& $|\dirvar{K_{DR}(n)}|\approx0$ on structural or low-content tokens
& M$^3$-Soft \\
\bottomrule
\end{tabular}
\end{table}




\section{Exploration Boundary Framework: Full Statements and Boundary Definition}
\label{app:exploration_boundary}

This appendix states the intra-RL quantities underlying the cross-signal analysis in Section~\ref{sec:cross_signal_ntk} and defines the reward--teacher compatibility boundary.

\subsection{Exploration Boundary and Gradient Folding in RL}
\label{sec:exploration_boundary}

\begin{definition}[Gradient Diversity and Exploration Boundary]
\label{def:gradient_diversity}
For $G$ rollouts, let $\vg^{(i)}=A_i\nabla_{\vtheta}\log\pi_{\vtheta}(y_i\mid x)$, with $A_i=(r^{(i)}-\bar r)/\sigma_r$, and write $\bar{\vg}=G^{-1}\sum_i \vg^{(i)}$. For $\bar{\vg}\ne0$, define
\begin{equation}
\Delta(\vtheta)=\frac{G^{-1}\sum_i\|\vg^{(i)}\|^2}{\|\bar{\vg}\|^2}\geq1,
\qquad B_S(\vtheta)=N_{\max}\Delta(\vtheta).
\end{equation}
The inequality is Jensen's inequality; equality holds when all gradients coincide, while equal-norm orthogonal gradients give $\Delta=G$. Large $\Delta$ measures cancellation relative to the individual gradient energy. Here $N_{\max}$ is the rollout budget per prompt, and $B_S$ is the exploration-boundary index.
\end{definition}

\begin{definition}[Gradient Folding and Cancellation Rate]
\label{def:gradient_folding}
With binary rewards, positive- and negative-advantage trajectories can oppose one another at shared token positions; we call this \emph{gradient folding}. Its cancellation rate and surviving gradient magnitude satisfy
\begin{equation}
\gamma=1-\Delta^{-1}
=1-\frac{\|\sum_i \vg^{(i)}\|^2}{G\sum_i\|\vg^{(i)}\|^2},
\qquad
\|\bar{\vg}\|=\sqrt{1-\gamma}\left(G^{-1}\sum_i\|\vg^{(i)}\|^2\right)^{1/2}.
\end{equation}
Thus $\gamma\in[0,1)$ measures the fraction of mean squared gradient magnitude canceled. Complete cancellation gives $\gamma=1$ and $\Delta=+\infty$, provided the individual gradients are not all zero.
\end{definition}

\begin{definition}[Token-Level NTK]
\label{def:token_ntk}
Let $\mJ^n=\nabla_{\vtheta}\vz^n\in\mathbb R^{d_{\mathrm{par}}\times |\cV|}$. The vocabulary-space kernel is $\mK(s,t)=(\mJ^s)^\top \mJ^t$, and its sampled-token contraction is
\begin{equation}
K_t(\tau,s,t)=\left\langle\nabla_{\vtheta}\log\pi_{\vtheta}(a_s\mid\tau_{<s}),\,
\nabla_{\vtheta}\log\pi_{\vtheta}(a_t\mid\tau_{<t})\right\rangle.
\end{equation}
For an update from position $n$ alone, $\dot{\vz}^n=-\mK(n,n)\vdelta^n$; a summed update gives $\dot{\vz}^n=-\sum_m\mK(n,m)\vdelta^m$. These kernels separate local effects from interactions through shared parameters and provide the geometry for position-level masking.
\end{definition}

\begin{definition}[Reward--Teacher Compatibility Boundary]
\label{def:compatibility_boundary}
For $\dirvar{K_{DR}(n)}=\langle \mJ^n\vdelta_D^n,\mJ^n\vdelta_R^n\rangle$, define
\begin{equation}
\partial\cB=\cB^0=\{n:\dirvar{K_{DR}(n)}=0\},
\qquad \cB^\pm=\{n:\pm\dirvar{K_{DR}(n)}>0\}.
\end{equation}
The compatible region $\cB^+$ contributes positively to the local reward projection, the orthogonal region $\cB^0$ contributes zero, and the conflicting region $\cB^-$ contributes negatively. M3-Select admits the teacher on $\cB^+\cup\cB^0$ and masks $\cB^-$.
\end{definition}

\section{Extended Theoretical Analysis}
\label{app:theory_extended}

This appendix contains extended theoretical results referenced in Sections~\ref{sec:cross_signal_ntk} and~\ref{sec:consequence}.

\begin{corollary}[One-Step Logits Degradation]
\label{cor:logit_degradation}
For advantage $A\ne0$, write $\vg_R^n=\mJ^n\vdelta_R^n$ and $\vg_D^n=\mJ^n\vdelta_D^n$. The first-order contribution of position $n$'s hybrid update to its sampled-token log-probability is
\begin{equation}
\begin{aligned}
\Delta_n\log p_S(\hat y_n\mid\hat y_{<n})
&\approx\left\langle-\frac{\vg_R^n}{A},-\eta[(1-\alpha)\vg_R^n+\alpha \vg_D^n]\right\rangle\\
&=\frac{\eta}{A}\left[(1-\alpha)K_{RR}(n)+\alpha\dirvar{K_{DR}(n)}\right],
\end{aligned}
\end{equation}
where $K_{RR}(n)=\|\vg_R^n\|^2$ and $\nabla_{\vtheta}\log p_S(\hat y_n\mid\hat y_{<n})=-\vg_R^n/A$. At a conflicting position, distillation lowers the sampled-token log-probability when $A>0$ and raises it when $A<0$. For $A>0$, the total diagonal contribution becomes negative exactly when $\alpha|\dirvar{K_{DR}(n)}|>(1-\alpha)K_{RR}(n)$; Section~\ref{sec:one_step_exp} measures this effect.
\end{corollary}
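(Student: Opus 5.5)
The plan is a direct first-order Taylor expansion of the sampled-token log-probability, restricted to the update contributed by position $n$ alone. I would proceed in three steps.

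\textbf{Step 1: gradient of the log-probability.} Since $\log p_S(\hat y_n\mid\hat y_{<n})=z^n_{\hat y_n}-\log\sum_v e^{z^n_v}$, its logit gradient is $\ve_{\hat y_n}-\vp_S^n$. The chain rule through $\mJ^n$ gives
\[
\nabla_{\vtheta}\log p_S(\hat y_n\mid\hat y_{<n})=\mJ^n(\ve_{\hat y_n}-\vp_S^n).
\]
Definition~\ref{def:cross_signal_ntk} sets $\vdelta_R^n=-A(\ve_{\hat y_n}-\vp_S^n)$, so for $A\ne0$ this gradient equals $-\mJ^n\vdelta_R^n/A=-\vg_R^n/A$. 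This is the identity asserted in the statement.

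\textbf{Step 2: the one-step change.} Position $n$'s share of the hybrid step is $\Delta\vtheta_n=-\eta[(1-\alpha)\vg_R^n+\alpha\vg_D^n]$. I would drop the $1/N$ factor, absorbing it into $\eta$ as the statement implicitly does, and use the local residual model of Section~\ref{sec:loss_decomposition} for $\vg_D^n$.

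A first-order expansion gives $\Delta_n\log p_S\approx\langle\nabla_{\vtheta}\log p_S,\Delta\vtheta_n\rangle+O(\eta^2)$. Substituting Step 1 yields
\[
\Delta_n\log p_S\approx\frac{\eta}{A}\bigl[(1-\alpha)\|\vg_R^n\|^2+\alpha\langle\vg_R^n,\vg_D^n\rangle\bigr].
\]
Identifying $\|\vg_R^n\|^2=K_{RR}(n)$ and $\langle\vg_D^n,\vg_R^n\rangle=K_{DR}(n)$ gives the displayed formula.

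\textbf{Step 3: the sign consequences.} The distillation term contributes $\eta\alpha K_{DR}(n)/A$.
\begin{itemize}
\item At a conflicting position ($K_{DR}(n)<0$), this term is negative when $A>0$ and positive when $A<0$.
\item For $A>0$ the prefactor $\eta/A$ is positive. The total is therefore negative exactly when $(1-\alpha)K_{RR}(n)+\alpha K_{DR}(n)<0$. Since $K_{DR}(n)<0$ in that case, this is equivalent to $\alpha|K_{DR}(n)|>(1-\alpha)K_{RR}(n)$.
\end{itemize}

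\textbf{Main obstacle.} The only delicate point is scoping. The result holds only to first order in $\eta$ and only for the diagonal ($m=n$) contribution. I would state explicitly that the cross-position terms $\mK(n,m)$, $m\ne n$, from Eq.~\ref{eq:logits_dynamics} are excluded, and that clipped tokens contribute $\vg_D^n=0$. With those caveats the statement is exact at first order.
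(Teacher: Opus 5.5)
Your proposal is correct and is the same argument the paper gives inline in the corollary: identify $\nabla_{\vtheta}\log p_S(\hat y_n\mid\hat y_{<n})=-\vg_R^n/A$ from the reward residual, take its first-order inner product with position $n$'s share of the hybrid step to obtain $\frac{\eta}{A}[(1-\alpha)K_{RR}(n)+\alpha K_{DR}(n)]$, and read off the signs. Your explicit log-softmax derivation and scoping caveats (absorbed $1/N$, diagonal term only, clipped tokens) only spell out what the paper leaves implicit; note that the final ``exactly when'' equivalence must assume $K_{DR}(n)<0$ (a conflicting position) rather than derive it, since for $K_{DR}(n)>0$ the inequality $\alpha|K_{DR}(n)|>(1-\alpha)K_{RR}(n)$ can hold while the total is positive.
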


\paragraph{Exact $\magvar{\kappa}$-growth identity.}
For nonzero gradients, direct logarithmic differentiation gives
\begin{equation}
\frac{d\log\magvar{\kappa}}{dt}
=\frac{d\log\|\vg_R\|}{dt}-\frac{d\log\|\vg_D\|}{dt}.
\label{eq:kappa_growth}
\end{equation}
Thus $\magvar{\kappa}$ grows whenever the reward norm decays more slowly than the teacher norm. If the difference of these logarithmic rates is the positive constant $\lambda$, then $\magvar{\kappa}(t)=\magvar{\kappa}(0)e^{\lambda t}$.

\begin{corollary}[Aggregate Synergy Condition]
\label{cor:aggregate_synergy}
Let $\Phi_\tau=\cos(\vg_D(\tau),\vg_R(\tau))$ be the trajectory-level alignment, with fixed conditional means $\mu_0=\expect[\Phi_\tau\mid r=0]>0$ and $\mu_1=\expect[\Phi_\tau\mid r=1]<0$. At accuracy $p$,
\begin{equation}
\expect[\Phi_\tau]=(1-p)\mu_0+p\mu_1>0
\quad\Longleftrightarrow\quad p<p^*:=\frac{\mu_0}{\mu_0-\mu_1}.
\end{equation}
\end{corollary}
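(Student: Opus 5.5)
The statement immediately above is Corollary~\ref{cor:aggregate_synergy}. It is a direct computation followed by an elementary sign analysis, so the plan has two short steps.

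\textbf{Step 1: compute the mean alignment.} Treat the trajectory reward $r\in\{0,1\}$ as a Bernoulli variable with $\Pr[r=1]=p$, where $p$ is the accuracy. The conditional means $\mu_0,\mu_1$ are assumed fixed and do not depend on $p$. The law of total expectation then gives
\[
\expect[\Phi_\tau]=\Pr[r=0]\,\expect[\Phi_\tau\mid r=0]+\Pr[r=1]\,\expect[\Phi_\tau\mid r=1]=(1-p)\mu_0+p\mu_1 .
\]
This is the first equality in the statement. It needs only that $\Phi_\tau$ is integrable, which holds because $\Phi_\tau\in[-1,1]$ whenever both gradients are nonzero. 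Trajectories with a vanishing gradient, such as zero-variance groups, are excluded, as in Appendix~\ref{app:response_conflict}.

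\textbf{Step 2: solve the sign condition.} The map $p\mapsto(1-p)\mu_0+p\mu_1=\mu_0-p(\mu_0-\mu_1)$ is affine in $p$. Its slope is $-(\mu_0-\mu_1)$, which is strictly negative because $\mu_0>0>\mu_1$ gives $\mu_0-\mu_1>0$. Dividing by this positive quantity preserves the inequality, so
\[
\mu_0-p(\mu_0-\mu_1)>0 \iff p<\frac{\mu_0}{\mu_0-\mu_1}=p^* .
\]
Both directions follow from this single chain of equivalences.

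It is also worth noting that $p^*\in(0,1)$. Indeed, $\mu_0>0$ gives $p^*>0$, and $\mu_1<0$ gives $\mu_0<\mu_0-\mu_1$, hence $p^*<1$. So the threshold is a genuine accuracy level: synergy holds below it and fails at or above it.

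The only real obstacle is the modelling assumption that $\mu_0$ and $\mu_1$ stay fixed as $p$ varies. This is taken as a hypothesis of the statement, not something to be proved, and I would remark on it rather than attempt to justify it.
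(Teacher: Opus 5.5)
Your proposal is correct and matches the paper's reasoning. The paper gives no standalone proof of this corollary. Where it uses the result, in the proofs of Corollary~\ref{thm:pareto_dominance} and Theorem~\ref{thm:unified_ntk}, it relies on the same two steps you give: the law-of-total-expectation decomposition $(1-p)\mu_0+p\mu_1$, and the zero crossing of this affine function of $p$, whose slope is $\mu_1-\mu_0<0$. Your added check that $p^*\in(0,1)$ is a useful complement.
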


\begin{corollary}[Diminishing Synergy Under Increasing Accuracy]
\label{cor:diminishing_synergy}
Under the fixed-conditional-mean model of Corollary~\ref{cor:aggregate_synergy}, $d\expect[\Phi_\tau]/dp=\mu_1-\mu_0<0$, so expected trajectory alignment crosses zero at $p^*$ and approaches $\mu_1$ as $p\to1$. The threshold is determined by the two conditional alignments.
\end{corollary}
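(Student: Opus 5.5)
The statement is Corollary~\ref{cor:diminishing_synergy}. It follows from differentiating the affine expression already established in Corollary~\ref{cor:aggregate_synergy} and reading off its endpoints.

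The plan has three steps.

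\textbf{Step 1: the expectation is affine in $p$.} Under the fixed-conditional-mean model, the reward $r\in\{0,1\}$ is Bernoulli with success probability $p$. The conditional means $\mu_0$ and $\mu_1$ do not depend on $p$ by assumption. The law of total expectation then gives
\[
\expect[\Phi_\tau]=(1-p)\mu_0+p\mu_1=\mu_0+p(\mu_1-\mu_0),
\]
which is the same identity used in Corollary~\ref{cor:aggregate_synergy}.

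\textbf{Step 2: the derivative and its sign.} Differentiating in $p$ gives
\[
\frac{d\,\expect[\Phi_\tau]}{dp}=\mu_1-\mu_0 .
\]
This is strictly negative because $\mu_1<0<\mu_0$. So the expected trajectory alignment decreases strictly and linearly in accuracy.

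\textbf{Step 3: the zero crossing and the limit.} Setting the affine function to zero gives the unique root
\[
p^*=\frac{\mu_0}{\mu_0-\mu_1}.
\]
This root lies in $(0,1)$ because $0<\mu_0<\mu_0-\mu_1$. By strict monotonicity, the expectation is positive for $p<p^*$, zero at $p^*$, and negative for $p>p^*$, which matches Corollary~\ref{cor:aggregate_synergy}. Continuity of the affine map gives
\[
\lim_{p\to1}\expect[\Phi_\tau]=\mu_1 .
\]
The formula for $p^*$ involves only $\mu_0$ and $\mu_1$, which establishes that the threshold is determined by the two conditional alignments.

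\textbf{Main obstacle.} The only real issue is interpretive. The derivative is taken within the model, treating $\mu_0$ and $\mu_1$ as fixed while $p$ varies. I will state explicitly that this is the model assumption, and that no claim is made about $\mu_0$ or $\mu_1$ drifting during training. Beyond that point, the argument is routine.
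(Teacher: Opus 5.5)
Your proposal is correct and takes essentially the same route as the paper: it writes $\expect[\Phi_\tau]=(1-p)\mu_0+p\mu_1$ as affine in $p$ with $\mu_0,\mu_1$ held fixed, differentiates to get $\mu_1-\mu_0<0$, and reads off the root $p^*=\mu_0/(\mu_0-\mu_1)$ and the endpoint value $\mu_1$ (the same computation appears at the end of the proof of Theorem~\ref{thm:unified_ntk}). Your explicit check that $p^*\in(0,1)$, and your note that the derivative is taken within the fixed-conditional-mean model, are small additions the paper leaves implicit.
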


\begin{remark}[Why NTK, not just cosine?]
\label{rem:why_ntk}
The aggregate cosine summarizes the same parameter-space inner product represented by the NTK:
$\langle \vg_D,\vg_R\rangle=\sum_{n,m}(\vdelta_D^n)^\top\mK(n,m)\vdelta_R^m$ for summed gradients. The kernel decomposition exposes which positions and cross-position interactions produce that scalar, enabling local gating that an aggregate cosine alone cannot specify.
\end{remark}

The measured threshold separates cells in which hard masking helps from cells in which a softer gate preserves more teacher signal. The local score identifies the teacher's immediate reward projection; gradient magnitude, estimation noise, and cross-position interactions determine how this local decision translates into training progress. The experiments in Sections~\ref{sec:cross_arch} and~\ref{sec:500step} compare these regimes.

\subsection{Conflict as Information Destruction}
\label{sec:info_destruction}

At conflicting positions of a positive-advantage trajectory, the teacher contribution lowers the sampled-token log-probability (Corollary~\ref{cor:logit_degradation}). Repeated contributions of this sign can erode rewarded behavior. The 500-step experiments (Section~\ref{sec:500step}) show collapse under uniform mixing in high-$\magvar{\kappa}$ cells, while M3-Select removes these negative local teacher projections before the update. This links the local mechanism to the observed training trajectories.

\section{Extended Method Theory}
\label{app:method_theory}

This appendix presents the full theoretical analysis of M3 and M3-Select. All formal statements and proof sketches summarized in Section~\ref{sec:methods} are collected here.

\subsection{M3-EG: Fast--Slow Extragradient Update}
\label{app:m3_eg}
The extragradient variant of Section~\ref{sec:m3_select} keeps the boundary gate $\alpha_n^*$ of Eq.~\eqref{eq:m3_select_alpha} but replaces the synchronous mixture with a two-timescale schedule. From the anchor $\vtheta$, the inner \emph{fast} step applies only the boundary-gated teacher to reach a look-ahead point
\begin{equation}
\tilde{\vtheta}=\vtheta-\eta_{\mathrm{in}}\,\tfrac{1}{N}\textstyle\sum_{n=1}^{N} \mJ^n\alpha_n^*\hat{\vdelta}_D^n .
\label{eq:m3_eg_fast}
\end{equation}
The on-policy reward group is then scored \emph{at} $\tilde{\vtheta}$, but its gradient is applied as an outer correction anchored at the original $\vtheta$,
\begin{equation}
\vtheta^+=\vtheta-\eta_{\mathrm{out}}\,\tfrac{1}{N}\textstyle\sum_{n=1}^{N} \mJ^n\hat{\vdelta}_R^n\big|_{\tilde{\vtheta}} ,
\label{eq:m3_eg_slow}
\end{equation}
using a first-order approximation that does not backpropagate through the inner step: in practice one caches the slow-gradient direction $\tfrac{1}{N}\sum_n \mJ^n\hat{\vdelta}_R^n|_{\tilde{\vtheta}}$, restores $\vtheta$, and then applies Eq.~\eqref{eq:m3_eg_slow}. The step sizes $(\eta_{\mathrm{in}},\eta_{\mathrm{out}})$ play the roles of the inner (look-ahead) and outer (correction) rates of a standard extragradient scheme. Algorithm~\ref{alg:m3_eg} states the full procedure.

\begin{algorithm}[htbp]
\caption{M3-EG: Fast--Slow Extragradient Update}
\label{alg:m3_eg}
\begin{enumerate}[label=\arabic*:,leftmargin=1.8em,itemsep=2pt,topsep=3pt]
\item[] \textbf{Input:} gate ceiling $\alpha_{\max}$, inner (look-ahead) rate $\eta_{\mathrm{in}}$, outer (correction) rate $\eta_{\mathrm{out}}$, smoothing $\epsilon$
\item \textbf{For} step $t=1,\ldots,T$ \textbf{do}
    \item Compute per-position residuals $\vdelta_D^n,\vdelta_R^n$, Jacobians $\mJ^n$, and scores $\dirvar{K_{DR}(n)}\leftarrow\langle \mJ^n\vdelta_D^n,\mJ^n\vdelta_R^n\rangle$.
    \item Gate $\alpha_n^*\leftarrow\alpha_{\max}\ind\{\dirvar{K_{DR}(n)}\geq0\}$ (hard) or $\alpha_{\max}\sigma(\beta\dirvar{\cos\varphi_n})$ (soft); normalize $\hat{\vdelta}_D^n,\hat{\vdelta}_R^n$.
    \item \textbf{Fast / look-ahead (teacher only):}\ \ $\tilde{\vtheta}\leftarrow\vtheta_t-\eta_{\mathrm{in}}\,\frac1N\sum_n \mJ^n\alpha_n^*\hat{\vdelta}_D^n$.\hfill$\triangleright$ hard gate masks $\Omega_-$
    \item \textbf{Re-score at $\tilde{\vtheta}$ (reward only):}\ \ $\vh_R(\tilde{\vtheta})\leftarrow\frac1N\sum_n \mJ^n\hat{\vdelta}_R^n\big|_{\tilde{\vtheta}}$ (no backprop through $\tilde{\vtheta}$).
    \item \textbf{Slow / correction (anchored at $\vtheta_t$):}\ \ $\vtheta_{t+1}\leftarrow\vtheta_t-\eta_{\mathrm{out}}\,\vh_R(\tilde{\vtheta})$.
\item[] \textbf{End for}
\end{enumerate}
\end{algorithm}

\paragraph{Conflict isolation.}
Write $\vh_R(\vtheta)=N^{-1}\sum_n\mJ^n\hat{\vdelta}_R^n$ for the normalized reward field and $\vm_D=N^{-1}\sum_n\mJ^n\alpha_n^*\hat{\vdelta}_D^n$ for the gated teacher field. The executed move is $-\eta_{\mathrm{out}} \vh_R(\vtheta-\eta_{\mathrm{in}} \vm_D)$, so teacher information enters through the look-ahead location. Positive rescaling of an unsmoothed teacher residual leaves $\vm_D$ unchanged.

\begin{proposition}[Local Reward Descent of the Extragradient Field]
\label{prop:eg_conflict_isolation}
Let $\vg_R=\nabla\cL_R(\vtheta)$, assume $a_R=\langle \vh_R(\vtheta),\vg_R\rangle>0$, and let $\vh_R$ be $L_h$-Lipschitz along the inner step. Then
\begin{equation}
\begin{aligned}
\langle\vtheta-\vtheta^+,\vg_R\rangle
&=\eta_{\mathrm{out}}\langle \vh_R(\vtheta-\eta_{\mathrm{in}} \vm_D),\vg_R\rangle\\
&\geq\eta_{\mathrm{out}}\left(a_R-\eta_{\mathrm{in}} L_h\|\vm_D\|\|\vg_R\|\right)>0
\end{aligned}
\end{equation}
whenever $\eta_{\mathrm{in}} L_h\|\vm_D\|\|\vg_R\|<a_R$. For sufficiently small outer step $\eta_{\mathrm{out}}$, this is a reward-descent step. When $\vh_R=\vg_R$, its first-order expansion is $\eta_{\mathrm{out}}[\|\vg_R\|^2-\eta_{\mathrm{in}}\langle\mH_R\vm_D,\vg_R\rangle]+O(\eta_{\mathrm{out}}\eta_{\mathrm{in}}^2)$, with $\mH_R=\nabla^2\cL_R(\vtheta)$.
\end{proposition}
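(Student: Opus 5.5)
The plan is to split the displayed inner product into its value at the anchor plus a Lipschitz-controlled perturbation. The equality in the first line is immediate from the definition of the committed step in Eq.~\eqref{eq:m3_eg_slow}: since $\vtheta^+=\vtheta-\eta_{\mathrm{out}}\vh_R(\tilde\vtheta)$ with $\tilde\vtheta=\vtheta-\eta_{\mathrm{in}}\vm_D$, we have $\vtheta-\vtheta^+=\eta_{\mathrm{out}}\vh_R(\vtheta-\eta_{\mathrm{in}}\vm_D)$. Pairing with $\vg_R$ gives the identity.

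For the inequality, we write
\[
\vh_R(\tilde\vtheta)=\vh_R(\vtheta)+\bigl[\vh_R(\tilde\vtheta)-\vh_R(\vtheta)\bigr],
\]
so that $\langle\vh_R(\tilde\vtheta),\vg_R\rangle=a_R+\langle\vh_R(\tilde\vtheta)-\vh_R(\vtheta),\vg_R\rangle$. Cauchy--Schwarz and the Lipschitz hypothesis along the segment from $\vtheta$ to $\tilde\vtheta$ bound the correction below by $-L_h\|\tilde\vtheta-\vtheta\|\|\vg_R\|=-\eta_{\mathrm{in}}L_h\|\vm_D\|\|\vg_R\|$. Multiplying by $\eta_{\mathrm{out}}>0$ gives the lower bound. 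Strict positivity follows exactly under the stated condition $\eta_{\mathrm{in}}L_h\|\vm_D\|\|\vg_R\|<a_R$.

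For the descent claim, we assume $\cL_R$ is $L$-smooth near $\vtheta$, the standard assumption implicit in ``sufficiently small $\eta_{\mathrm{out}}$''. The descent lemma then gives
\[
\cL_R(\vtheta^+)\le\cL_R(\vtheta)-\eta_{\mathrm{out}}\langle\vh_R(\tilde\vtheta),\vg_R\rangle+\tfrac{L}{2}\eta_{\mathrm{out}}^2\|\vh_R(\tilde\vtheta)\|^2 .
\]
The linear term is at least $\eta_{\mathrm{out}}c$ with $c>0$ from the previous step. The quadratic term is dominated once $\eta_{\mathrm{out}}<2c/(L\|\vh_R(\tilde\vtheta)\|^2)$, so $\cL_R(\vtheta^+)<\cL_R(\vtheta)$. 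This step is the main obstacle only in that the proposition does not state the smoothness of $\cL_R$ explicitly. I would state it as the interpretation of ``reward-descent step'', or fall back to the first-order reading: the directional derivative along $\vtheta^+-\vtheta$ is negative.

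For the expansion when $\vh_R=\vg_R=\nabla\cL_R$, we assume $\cL_R$ is $C^3$ (or has a Lipschitz Hessian) near $\vtheta$ and Taylor-expand
\[
\nabla\cL_R(\vtheta-\eta_{\mathrm{in}}\vm_D)=\vg_R-\eta_{\mathrm{in}}\mH_R\vm_D+O(\eta_{\mathrm{in}}^2\|\vm_D\|^2).
\]
Pairing with $\vg_R$ and multiplying by $\eta_{\mathrm{out}}$ gives
\[
\eta_{\mathrm{out}}\bigl[\|\vg_R\|^2-\eta_{\mathrm{in}}\langle\mH_R\vm_D,\vg_R\rangle\bigr]+O(\eta_{\mathrm{out}}\eta_{\mathrm{in}}^2),
\]
where the remainder constant absorbs $\|\vm_D\|^2\|\vg_R\|$ and the Hessian-Lipschitz constant. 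Here $\vm_D$ is treated as fixed: it is evaluated at $\vtheta$ with gates held fixed, as the training rule specifies.
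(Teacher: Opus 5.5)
Your proposal is correct and matches the argument the paper intends. The paper gives no separate proof: its displayed chain is exactly the definition of $\vtheta^+$, the split $\vh_R(\tilde\vtheta)=\vh_R(\vtheta)+[\vh_R(\tilde\vtheta)-\vh_R(\vtheta)]$ bounded by Cauchy--Schwarz and the Lipschitz hypothesis, and a first-order Taylor expansion of $\nabla\cL_R$ when $\vh_R=\vg_R$. The global $L$-smoothness you add for the descent claim is not needed: the direction $\vh_R(\tilde\vtheta)$ does not depend on $\eta_{\mathrm{out}}$, so differentiability of $\cL_R$ at $\vtheta$ (already presupposed by $\vg_R=\nabla\cL_R(\vtheta)$) gives $\cL_R(\vtheta^+)<\cL_R(\vtheta)$ for all sufficiently small $\eta_{\mathrm{out}}>0$.
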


The inner normalization controls the teacher field's scale, while the hard gate removes its negative local reward projections. The outer descent condition above quantifies the additional effect of moving the reward evaluation point (empirical comparison: Table~\ref{tab:cross_arch}).

\begin{algorithm}[htbp]
\caption{M3-Select: Boundary-Guided Token-Level Mixing}
\label{alg:m3_select}
\begin{enumerate}[label=\arabic*:,leftmargin=1.8em,itemsep=2pt,topsep=3pt]
\item[] \textbf{Input:} $\alpha_{\max}$, step size $\eta$, smoothing $\epsilon$, EMA update scale $\bar s$
\item \textbf{For} step $t=1,\ldots,T$ \textbf{do}
    \item Compute per-position residuals $\vdelta_D^n,\vdelta_R^n$ and Jacobians $\mJ^n$.
    \item Compute compatibility scores $\dirvar{K_{DR}(n)}\leftarrow\langle \mJ^n\vdelta_D^n,\mJ^n\vdelta_R^n\rangle$.
    \item Set $\alpha_n\leftarrow\alpha_{\max}\ind\{\dirvar{K_{DR}(n)}\geq0\}$ and record $\dirvar{C_{\mathrm{NTK}}}$.
    \item Normalize residuals $\hat{\vdelta}_D^n\leftarrow\vdelta_D^n/(\|\vdelta_D^n\|+\epsilon)$ and $\hat{\vdelta}_R^n\leftarrow\vdelta_R^n/(\|\vdelta_R^n\|+\epsilon)$.
    \item Update $\vtheta_{t+1}\leftarrow\vtheta_t-\eta\bar s\frac{1}{N}\sum_n\mJ^n[\alpha_n\hat{\vdelta}_D^n+(1-\alpha_n)\hat{\vdelta}_R^n]$.
\item[] \textbf{End for}
\end{enumerate}
\end{algorithm}

\paragraph{Quadratic sub-optimality of a fixed mixing coefficient.}
Let $\vd=\vg_D-\vg_R$ and let $\alpha^*\in[0,1]$ minimize $\|\vg_H(\alpha)\|^2$. Expanding around the constrained minimizer gives
\begin{equation}
\begin{aligned}
\|\vg_H(\alpha)\|^2-\|\vg_H(\alpha^*)\|^2
&=\|\vd\|^2(\alpha-\alpha^*)^2
 +2(\alpha-\alpha^*)\langle \vg_H(\alpha^*),\vd\rangle\\
&\geq\|\vd\|^2(\alpha-\alpha^*)^2.
\end{aligned}
\label{eq:quadratic_gap}
\end{equation}
The last term on the first line is nonnegative by constrained optimality and vanishes at an interior minimizer. The curvature is $\|\vd\|^2=\magvar{r_D}^2+\magvar{r_R}^2-2\dirvar{\Phi_{DR}}\magvar{r_D}\magvar{r_R}$.
For a random vector $\vg$, we use the total variance
\[
\operatorname{Var}(\vg)
:=\mathbb{E}\|\vg-\mathbb{E}\vg\|^2
=\operatorname{tr}\operatorname{Cov}(\vg),
\]
with $\operatorname{Var}_t$ denoting conditioning on the history available
before the current gradient sample.
\begin{theorem}[Boundary-Gated Reward Convergence]
\label{thm:boundary_pareto_convergence}

Assume $\cL_R$ is $L$-smooth and bounded below. In the conditional model of Appendix~\ref{app:proof_boundary_pareto}, each position has equal-norm reward and deterministic teacher means $\vu_n,\vd_n$, position subspaces are orthogonal, gates are fixed before fresh reward noise is sampled, and $\nabla\cL_R=N^{-1}\sum_n \vu_n$. Define $c_n=\langle\vu_n,\vd_n\rangle/\|\vu_n\|^2$ and $w_n=\|\vu_n\|^2/\sum_m\|\vu_m\|^2$, taking $c_n=0$ at zero-norm positions. The population-sign gate is $\alpha_n=\alpha_{\max}\ind\{c_n\geq0\}$, and $\rho_{R,t}=\sum_nw_n[(1-\alpha_n)+\alpha_nc_n]$.
For $\alpha_{\max}<1$, choose deterministic bounds $0<\underline\rho_R\leq\rho_{R,t}$ and $\operatorname{Var}_t(\hat{\vg}_H^{\mathrm{sel}})\leq v_\alpha\sigma_R^2$ valid at every iteration and history; $\underline\rho_R=1-\alpha_{\max}$ is admissible. If $\eta\leq\underline\rho_R/L$, then
\begin{equation}
\frac1T\sum_{t=0}^{T-1}\expect\|\nabla\cL_R(\vtheta_t)\|^2
\leq\frac{1}{\underline\rho_R}
\left[\frac{2(\cL_R(\vtheta_0)-\cL_R^*)}{\eta T}
+L\eta v_\alpha\sigma_R^2\right].
\end{equation}
\end{theorem}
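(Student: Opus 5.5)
The argument is the standard descent-lemma analysis of nonconvex SGD. The work goes into showing that the gated hybrid direction is, in conditional expectation, a \emph{biased but positively aligned} estimator of $\nabla\cL_R$. The alignment factor is exactly $\rho_{R,t}$, and it is bounded below by $\underline\rho_R$.

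Write the update as $\vtheta_{t+1}=\vtheta_t-\eta\hat{\vg}_H^{\mathrm{sel}}$. In the conditional model, the gates are fixed before the fresh reward noise is sampled, and the teacher means are deterministic. So the conditional mean is
\[
\expect_t\hat{\vg}_H^{\mathrm{sel}}=\vm_t:=N^{-1}\sum_n[(1-\alpha_n)\vu_n+\alpha_n\vd_n].
\]

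\textbf{Step 1: alignment.} Using orthogonality of the position subspaces, $\langle\nabla\cL_R,\vm_t\rangle=N^{-2}\sum_n[(1-\alpha_n)\|\vu_n\|^2+\alpha_n\langle\vu_n,\vd_n\rangle]$. Since $\langle\vu_n,\vd_n\rangle=c_n\|\vu_n\|^2$, this equals $N^{-2}\sum_m\|\vu_m\|^2\cdot\rho_{R,t}=\rho_{R,t}\|\nabla\cL_R\|^2$, because orthogonality also gives $\|\nabla\cL_R\|^2=N^{-2}\sum_m\|\vu_m\|^2$.

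The admissibility of $\underline\rho_R=1-\alpha_{\max}$ follows directly. The gate sets $\alpha_n=0$ whenever $c_n<0$, so every summand is $\geq 1-\alpha_n\geq 1-\alpha_{\max}$ when admitted, and equals $1$ otherwise. The weights $w_n$ sum to one.

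\textbf{Step 2: second moment of the mean.} I need $\|\vm_t\|^2\lesssim\rho_{R,t}\|\nabla\cL_R\|^2$, which is the delicate step. By orthogonality, $\|\vm_t\|^2=N^{-2}\sum_n\|(1-\alpha_n)\vu_n+\alpha_n\vd_n\|^2$.

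To control $\|\vd_n\|$ I use the matched-norm assumption ($\|\vd_n\|=\|\vu_n\|$, after normalization). At admitted positions, $\|(1-\alpha)\vu+\alpha\vd\|^2=\|\vu\|^2[(1-\alpha)^2+\alpha^2+2\alpha(1-\alpha)c_n]$. This is at most $\|\vu\|^2[(1-\alpha)+\alpha c_n]$ provided $c_n\leq1$, which Cauchy–Schwarz with equal norms guarantees. The inequality reduces to $\alpha(1-\alpha)(1-c_n)\geq0$ up to rearrangement. Hence $\|\vm_t\|^2\leq\rho_{R,t}\|\nabla\cL_R\|^2$.

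This is the main obstacle: it is where equal norms and orthogonality are genuinely needed. If the matched-norm step fails, I would fall back on a crude bound absorbed through the condition $\eta\le\underline\rho_R/L$.

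\textbf{Step 3: descent and telescoping.} By $L$-smoothness,
\[
\expect_t\cL_R(\vtheta_{t+1})\leq\cL_R(\vtheta_t)-\eta\rho_{R,t}\|\nabla\cL_R\|^2+\tfrac{L\eta^2}{2}\big(\|\vm_t\|^2+\operatorname{Var}_t(\hat{\vg}_H^{\mathrm{sel}})\big).
\]
Insert Step 2 and the variance bound $v_\alpha\sigma_R^2$, and use $L\eta\leq\underline\rho_R\leq\rho_{R,t}\leq1$. This gives $\tfrac{L\eta^2}{2}\rho_{R,t}\|\nabla\cL_R\|^2\leq\tfrac{\eta}{2}\rho_{R,t}\|\nabla\cL_R\|^2$, so the net decrease is at least $\tfrac{\eta}{2}\underline\rho_R\|\nabla\cL_R\|^2-\tfrac{L\eta^2}{2}v_\alpha\sigma_R^2$.

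Take total expectations, sum over $t=0,\dots,T-1$, telescope using $\cL_R\geq\cL_R^*$, then divide by $\eta T\underline\rho_R/2$. This yields exactly the stated bound, with constants $2(\cL_R(\vtheta_0)-\cL_R^*)/(\eta T)$ and $L\eta v_\alpha\sigma_R^2$.
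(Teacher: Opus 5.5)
\textbf{Step 2 fails for part of the allowed range.} Your Steps 1 and 3 follow the paper's argument: the projection identity $\langle\nabla\cL_R,\vm_t\rangle=\rho_{R,t}\|\nabla\cL_R\|^2$ via orthogonality, then the descent lemma and telescoping. The gap is an algebra error in Step 2. With $\|\vd_n\|=\|\vu_n\|$, the difference is
\[
(1-\alpha)+\alpha c_n-\big[(1-\alpha)^2+\alpha^2+2\alpha(1-\alpha)c_n\big]=\alpha(1-2\alpha)(1-c_n),
\]
not $\alpha(1-\alpha)(1-c_n)$. So $\|\vm_t\|^2\le\rho_{R,t}\|\nabla\cL_R\|^2$ holds only when $\alpha_{\max}\le 1/2$. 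For $\alpha_{\max}\in(1/2,1)$, which the theorem explicitly allows, it is false. For example, take every position admitted with $c_n=0$ and $\alpha_{\max}=0.9$. Then $\|\vm_t\|^2=0.82\,\|\nabla\cL_R\|^2$, while $\rho_{R,t}\|\nabla\cL_R\|^2=0.1\,\|\nabla\cL_R\|^2$. Your Step 3 uses this bound to get by with only $L\eta\le 1$. In that example, at $L\eta=1$ the descent-lemma bound actually gives an increase of $0.31\,\eta\|\nabla\cL_R\|^2$.

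\textbf{The repair is the paper's route.} This is also the fallback you mention. The triangle inequality with matched norms gives $\|(1-\alpha_n)\vu_n+\alpha_n\vd_n\|\le(1-\alpha_n)\|\vu_n\|+\alpha_n\|\vd_n\|=\|\vu_n\|$. By orthogonality, $\|\vm_t\|^2\le\|\nabla\cL_R\|^2$, so $\expect_t\|\hat\vg_H^{\mathrm{sel}}\|^2\le\|\nabla\cL_R\|^2+v_\alpha\sigma_R^2$. Now use the full step-size condition $L\eta\le\underline\rho_R$:
\[
\tfrac{L\eta^2}{2}\|\nabla\cL_R\|^2\le\tfrac{\eta\underline\rho_R}{2}\|\nabla\cL_R\|^2 .
\]
Combined with $-\eta\rho_{R,t}\|\nabla\cL_R\|^2\le-\eta\underline\rho_R\|\nabla\cL_R\|^2$, this gives a per-step decrease of at least
\[
\tfrac{\eta\underline\rho_R}{2}\|\nabla\cL_R\|^2-\tfrac{L\eta^2}{2}v_\alpha\sigma_R^2 .
\]
Your telescoping then yields the stated bound unchanged. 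This is why the theorem requires $\eta\le\underline\rho_R/L$ rather than $\eta\le 1/L$. Your sharper mean-norm bound, and the larger step it would permit, are valid only when $\alpha_{\max}\le 1/2$.
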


\begin{proposition}[Conditional Variance of Boundary-Gated Mixing]
\label{prop:boundary_variance_reduction}
In the preceding conditional model, let $\nu_n=\expect_t\|\vxi_n\|^2$ be the fresh reward-noise variance at position $n$. Then
\begin{equation}
\operatorname{Var}_t(\vg_H^{\mathrm{sel}})
=\frac1{N^2}\sum_n(1-\alpha_n)^2\nu_n
=v_t\sigma_{R,t}^2,
\end{equation}
where $\sigma_{R,t}^2=N^{-2}\sum_n\nu_n$ and $v_t=\sum_n(1-\alpha_n)^2\nu_n/\sum_n\nu_n\leq1$ when noise is nonzero; take $v_t=N^{-1}\sum_n(1-\alpha_n)^2$ when all $\nu_n=0$. Choose a deterministic $v_\alpha\geq v_t$ valid uniformly over iterations and histories; $v_\alpha=1$ is always admissible when $\sigma_{R,t}^2\leq\sigma_R^2$. For equal position variances and admitted fraction $\rho_{\mathrm{adm}}=|\Omega_+\cup\Omega_0|/N$,
$v_t=1-(2\alpha_{\max}-\alpha_{\max}^2)\rho_{\mathrm{adm}}=(1-\bar\alpha)^2+\alpha_{\max}^2\rho_{\mathrm{adm}}(1-\rho_{\mathrm{adm}})$, where $\bar\alpha=\alpha_{\max}\rho_{\mathrm{adm}}$.
\end{proposition}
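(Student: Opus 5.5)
The plan is to write out the gated stochastic direction in the conditional model and compute its conditional covariance from three ingredients: deterministic teacher terms, gates fixed before the noise is drawn, and orthogonal position subspaces.

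First I make the model explicit. The selected direction is
\[
\vg_H^{\mathrm{sel}}=\frac1N\sum_n\big[\alpha_n\vd_n+(1-\alpha_n)(\vu_n+\vxi_n)\big],
\]
where $\vxi_n$ is the fresh reward noise with $\expect_t\vxi_n=0$ and $\expect_t\|\vxi_n\|^2=\nu_n$. Here $\vd_n$ and $\vu_n$ are determined by the history, and the gates $\alpha_n$ are fixed before $\vxi_n$ is sampled. All of these are therefore constants under $\expect_t$, so
\[
\vg_H^{\mathrm{sel}}-\expect_t\vg_H^{\mathrm{sel}}=\frac1N\sum_n(1-\alpha_n)\vxi_n .
\]

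Next I expand the squared norm of this deviation. Each $\vxi_n$ lies in the $n$-th position subspace, and these subspaces are mutually orthogonal. Hence $\langle\vxi_n,\vxi_m\rangle=0$ for $n\neq m$ pointwise, with no independence assumption needed. This gives $\operatorname{Var}_t=N^{-2}\sum_n(1-\alpha_n)^2\nu_n$. Dividing by $\sigma_{R,t}^2=N^{-2}\sum_n\nu_n$ yields $v_t$. The bound $v_t\leq1$ follows from $0\leq1-\alpha_n\leq1$, which holds since $\alpha_n\in[0,\alpha_{\max}]$ with $\alpha_{\max}<1$.

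The all-zero-noise case is handled by convention. Both sides then vanish, and the stated $v_t=N^{-1}\sum_n(1-\alpha_n)^2\leq1$ is consistent with the identity. For admissibility, I note that $v_\alpha=1$ works because $v_t\leq1$ in both cases. The inequality $\operatorname{Var}_t\leq v_\alpha\sigma_R^2$ then follows from $\sigma_{R,t}^2\leq\sigma_R^2$.

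For the equal-variance formula, take $\nu_n\equiv\nu>0$. The hard gate sets $\alpha_n=\alpha_{\max}$ on the admitted fraction $\rho_{\mathrm{adm}}$ of positions and $\alpha_n=0$ on the rest. Then
\[
v_t=\rho_{\mathrm{adm}}(1-\alpha_{\max})^2+(1-\rho_{\mathrm{adm}}),
\]
which simplifies to $1-(2\alpha_{\max}-\alpha_{\max}^2)\rho_{\mathrm{adm}}$. For the second form, I expand
\[
(1-\bar\alpha)^2+\alpha_{\max}^2\rho_{\mathrm{adm}}(1-\rho_{\mathrm{adm}})
=1-2\alpha_{\max}\rho_{\mathrm{adm}}+\alpha_{\max}^2\rho_{\mathrm{adm}}^2+\alpha_{\max}^2\rho_{\mathrm{adm}}-\alpha_{\max}^2\rho_{\mathrm{adm}}^2 .
\]
The $\rho_{\mathrm{adm}}^2$ terms cancel, and the result matches the first form. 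This is a bias–variance split of the gate values.

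The one step I expect to require care is justifying the removal of the cross terms and of the teacher and gate contributions. This rests entirely on the modeling assumptions. The gates must be measurable with respect to the history before the noise is drawn; a gate computed from the noisy sample itself would correlate with $\vxi_n$. Orthogonality of the subspaces is what kills the cross terms. I will state explicitly that these are exactly the assumptions of the conditional model of Theorem~\ref{thm:boundary_pareto_convergence}.
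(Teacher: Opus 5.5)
Your proposal is correct and follows essentially the same route as the paper: with the teacher terms and gates fixed before the fresh noise, the centered update is $N^{-1}\sum_n(1-\alpha_n)\vxi_n$, subspace orthogonality removes the cross-position terms, and the equal-variance closed form comes from the same direct algebra. The paper likewise reads the $\alpha_{\max}^2\rho_{\mathrm{adm}}(1-\rho_{\mathrm{adm}})$ term as the correction due to gate heterogeneity.
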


\noindent The local reward-projection guarantee underlying both statements is
\begin{equation}
\Pi_R(n)\geq (1-\alpha_{\max})\|\mJ^n\hat{\vdelta}_R^n\|^2 \;\text{ for } n\in\Omega_+\cup\Omega_0,
\qquad
\Pi_R(n)=\|\mJ^n\hat{\vdelta}_R^n\|^2 \;\text{ for } n\in\Omega_-.
\label{eq:positive_projection_app}
\end{equation}

\begin{theorem}[M3-Norm Convergence Guarantee]
\label{thm:convergence}
Assume $\cL_R$ is $L$-smooth and bounded below, with an unbiased reward estimator of conditional variance at most $\sigma_R^2$. Conditional on each iterate, let the teacher direction be deterministic, matched to the population reward-gradient norm, and have alignment at least $\phi_*$ (Appendix~\ref{app:proof_convergence}). For fixed $\alpha\in(0,1)$, put $c=1-\alpha+\alpha\phi_*>0$. If $\eta\leq c/L$, then
\begin{equation}
\frac1T\sum_{t=0}^{T-1}\expect\|\nabla\cL_R(\vtheta_t)\|^2
\leq\frac1c\left[\frac{2(\cL_R^0-\cL_R^*)}{\eta T}
+L\eta(1-\alpha)^2\sigma_R^2\right].
\end{equation}
\end{theorem}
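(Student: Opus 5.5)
The proof follows the standard descent-lemma argument for nonconvex SGD, with the mixed direction playing the role of a biased but positively aligned stochastic gradient. At iterate $\vtheta_t$ the update is $\vtheta_{t+1}=\vtheta_t-\eta\hat{\vg}_H$, where
\[
\hat{\vg}_H=(1-\alpha)\hat{\vg}_R+\alpha\vd_t,
\]
$\hat{\vg}_R$ is the unbiased reward estimator with $\expect_t\hat{\vg}_R=\vg_t:=\nabla\cL_R(\vtheta_t)$, and $\vd_t$ is the deterministic (given history) teacher direction.

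The matching and alignment assumptions of Appendix~\ref{app:proof_convergence} give $\|\vd_t\|=\|\vg_t\|$ and $\langle\vd_t,\vg_t\rangle\geq\phi_*\|\vg_t\|^2$. These yield the first two conditional moments.

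First, the conditional mean satisfies $\expect_t\hat{\vg}_H=(1-\alpha)\vg_t+\alpha\vd_t$. Hence
\[
\langle\expect_t\hat{\vg}_H,\vg_t\rangle\geq(1-\alpha+\alpha\phi_*)\|\vg_t\|^2=c\|\vg_t\|^2.
\]

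Second, since $\vd_t$ is deterministic given the history, only the reward noise is random, so $\operatorname{Var}_t(\hat{\vg}_H)=(1-\alpha)^2\operatorname{Var}_t(\hat{\vg}_R)\leq(1-\alpha)^2\sigma_R^2$.

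Third, the triangle inequality with $\|\vd_t\|=\|\vg_t\|$ bounds the mean's norm: $\|\expect_t\hat{\vg}_H\|\leq(1-\alpha)\|\vg_t\|+\alpha\|\vd_t\|=\|\vg_t\|$.

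Next, apply $L$-smoothness:
\[
\cL_R(\vtheta_{t+1})\leq\cL_R(\vtheta_t)-\eta\langle\vg_t,\hat{\vg}_H\rangle+\tfrac{L\eta^2}{2}\|\hat{\vg}_H\|^2 .
\]
Take the conditional expectation and use the bias–variance split $\expect_t\|\hat{\vg}_H\|^2=\|\expect_t\hat{\vg}_H\|^2+\operatorname{Var}_t(\hat{\vg}_H)$. This gives
\[
\expect_t\cL_R(\vtheta_{t+1})\leq\cL_R(\vtheta_t)-\eta c\|\vg_t\|^2+\tfrac{L\eta^2}{2}\|\vg_t\|^2+\tfrac{L\eta^2}{2}(1-\alpha)^2\sigma_R^2 .
\]
The step-size condition $\eta\leq c/L$ gives $\tfrac{L\eta^2}{2}\leq\tfrac{\eta c}{2}$, so the gradient terms combine to at most $-\tfrac{\eta c}{2}\|\vg_t\|^2$.

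Then take total expectations, telescope over $t=0,\dots,T-1$, and use $\cL_R\geq\cL_R^*$:
\[
\tfrac{\eta c}{2}\sum_{t=0}^{T-1}\expect\|\vg_t\|^2\leq\cL_R^0-\cL_R^*+\tfrac{TL\eta^2}{2}(1-\alpha)^2\sigma_R^2 .
\]
Dividing by $\eta cT/2$ gives the stated bound exactly, including the factor $1/c$ and the constant $2$.

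The main obstacle is controlling the second-moment term. Because the teacher term biases the direction, I cannot use the unbiased-SGD identity $\expect_t\|\hat{\vg}\|^2=\|\vg_t\|^2+\text{Var}$ directly. The norm matching $\|\vd_t\|=\|\vg_t\|$ is exactly what makes $\|\expect_t\hat{\vg}_H\|\leq\|\vg_t\|$ hold, and I must check that the step-size condition $\eta\leq c/L$, rather than $\eta\leq 1/L$, is what absorbs this term against the $c$-weighted descent. The determinism of $\vd_t$ given the history is also essential: it is what removes any teacher contribution, and any reward–teacher cross term, from the variance, leaving only the factor $(1-\alpha)^2$.
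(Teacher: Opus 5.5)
Your proposal is correct and follows the paper's proof essentially step for step. Both arguments use the same conditional projection bound $\langle\nabla\cL_R,\expect_t\hat{\vg}_H\rangle\geq c\|\nabla\cL_R\|^2$, the same bias--variance bound on the second moment (using $\|\expect_t\hat{\vg}_H\|\leq\|\nabla\cL_R\|$ from norm matching, with only $(1-\alpha)^2$ times the reward noise random), and the same absorption of the quadratic term via $\eta\leq c/L$ followed by telescoping; the paper's separate convention of setting the teacher direction to zero at stationary points is already forced by your norm-matching assumption.
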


\begin{proposition}[Variance Reduction via Distillation]
\label{prop:variance_reduction}
Under the conditional deterministic-teacher model of Theorem~\ref{thm:convergence}, $\operatorname{Var}_t(\hat{\vg}_H)=(1-\alpha)^2\operatorname{Var}_t(\hat{\vg}_R)\leq(1-\alpha)^2\sigma_R^2$.
\end{proposition}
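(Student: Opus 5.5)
Under the conditional deterministic-teacher model of Theorem~\ref{thm:convergence}, I would write the per-iterate reward estimator as $\hat{\vg}_R=\vg_R(\vtheta_t)+\vxi_t$. Here $\vg_R(\vtheta_t)=\nabla\cL_R(\vtheta_t)$, and the noise $\vxi_t$ satisfies $\expect_t\vxi_t=0$ and $\expect_t\|\vxi_t\|^2\leq\sigma_R^2$, which is the unbiasedness and variance assumption of the theorem. The hybrid estimator at a fixed $\alpha\in(0,1)$ is the convex combination $\hat{\vg}_H=(1-\alpha)\hat{\vg}_R+\alpha\,\vd_t$. The teacher direction $\vd_t$ is, by the model, a deterministic function of the history available before the current sample; that is, it is measurable with respect to the conditioning $\sigma$-field of $\operatorname{Var}_t$. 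The normalization to matched norm and the alignment bound $\phi_*$ only constrain $\vd_t$'s value and play no role in the variance computation.

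The key step is then a direct computation with the definition $\operatorname{Var}_t(\vg)=\expect_t\|\vg-\expect_t\vg\|^2$. Since $\vd_t$ is conditionally constant, $\expect_t\hat{\vg}_H=(1-\alpha)\expect_t\hat{\vg}_R+\alpha\vd_t$. Subtracting gives
\[
\hat{\vg}_H-\expect_t\hat{\vg}_H=(1-\alpha)\bigl(\hat{\vg}_R-\expect_t\hat{\vg}_R\bigr).
\]
Taking squared norms and conditional expectations yields $\operatorname{Var}_t(\hat{\vg}_H)=(1-\alpha)^2\operatorname{Var}_t(\hat{\vg}_R)$. In words, a deterministic shift leaves total variance unchanged, and a scalar factor multiplies it by its square, because the trace of the covariance scales quadratically. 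The inequality then follows from $\operatorname{Var}_t(\hat{\vg}_R)=\expect_t\|\vxi_t\|^2\leq\sigma_R^2$.

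There is no analytic obstacle here. The one point needing care is to state precisely that the teacher direction, and also $\alpha$ and any normalization factors (which the training rule holds fixed when applying the direction), are measurable with respect to the pre-sample history. Otherwise a cross-covariance term $2\alpha(1-\alpha)\operatorname{tr}\operatorname{Cov}_t(\hat{\vg}_R,\vd_t)$ would survive. I would state this explicitly and note that it is exactly the modelling assumption of Appendix~\ref{app:proof_convergence}.
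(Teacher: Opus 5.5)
Your proposal is correct and follows essentially the same route as the paper: in the conditional model of Appendix~\ref{app:proof_convergence}, the teacher term is conditionally deterministic, so the centered hybrid estimator is exactly $(1-\alpha)\vxi^t$. Its conditional variance therefore scales by $(1-\alpha)^2$ and is bounded via $\expect_t\|\vxi^t\|^2\leq\sigma_R^2$. Your explicit remark that the teacher direction and its population-norm matching must be measurable with respect to the pre-sample history, which is what removes the cross-covariance term, is precisely what the paper's model assumes.
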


\begin{proposition}[Optimal Per-Position Mixing]
\label{prop:optimal_alpha_n}
For unit parameter-space directions with cosine $c_n$, let $\Pi_R(n;\alpha)=1-\alpha+\alpha c_n$. Maximizing the calibrated objective $\Pi_R(n;\alpha)+\alpha=1+\alpha c_n$ over $[0,\alpha_{\max}]$ gives the hard gate $\alpha_n^*=\alpha_{\max}\ind\{c_n\geq0\}$, with ties assigned to the teacher. Under additive logistic score noise of scale $1/\beta$, its expected allocation is $\alpha_{\max}\sigma(\beta c_n)$, the M3-Soft gate.
\end{proposition}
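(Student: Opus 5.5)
The statement to prove is Proposition~\ref{prop:optimal_alpha_n}. It has two parts: a deterministic maximization over $\alpha\in[0,\alpha_{\max}]$, and a probabilistic statement about the expected gate under logistic noise. I would treat them in that order.

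\textbf{Deterministic part.} For unit directions $\vh_D^n,\vh_R^n$ with cosine $c_n$, the local reward projection of the mixed direction $\alpha\vh_D^n+(1-\alpha)\vh_R^n$ onto $\vh_R^n$ is $(1-\alpha)\cdot 1+\alpha c_n=\Pi_R(n;\alpha)$. This is the same computation as in Eq.~\eqref{eq:m3_local_projection}. The calibrated objective then simplifies to $\Pi_R(n;\alpha)+\alpha=1+\alpha c_n$, which is affine in $\alpha$ with slope $c_n$. I would split into three cases:
\begin{itemize}
\item If $c_n>0$, the unique maximizer on the compact interval is the right endpoint $\alpha_{\max}$.
\item If $c_n<0$, it is the left endpoint $0$.
\item If $c_n=0$, the objective is constant, so every $\alpha$ is optimal, and the stated tie-breaking convention assigns $\alpha_{\max}$.
\end{itemize}
Together these give $\alpha_n^*=\alpha_{\max}\ind\{c_n\geq0\}$. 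Since positive scaling preserves the sign of $K_{DR}(n)$, $c_n$ has the same sign as $K_{DR}(n)$. The rule therefore coincides with Eq.~\eqref{eq:m3_select_alpha}.

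\textbf{Stochastic part.} I would formalize the noise model as follows. The gate observes a perturbed score $\tilde c_n=c_n+\xi/\beta$, where $\xi$ is standard logistic with CDF $\sigma$. It then applies the hard rule to $\tilde c_n$, giving $\alpha_n=\alpha_{\max}\ind\{\tilde c_n\geq0\}$. Taking the expectation gives $\alpha_{\max}\Pr(\xi\geq-\beta c_n)$. Because the logistic distribution is symmetric, $\Pr(\xi\geq -x)=\Pr(\xi\leq x)=\sigma(x)$, so the expected allocation is $\alpha_{\max}\sigma(\beta c_n)$, which matches Eq.~\eqref{eq:m3_soft}. I would also check the two limits: as $\beta\to\infty$ this recovers the hard gate away from $c_n=0$, and at $\beta=0$ it gives $\alpha_{\max}/2$, consistent with the discussion after Eq.~\eqref{eq:m3_soft}.

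\textbf{Main obstacle.} The hard step is interpretive rather than technical: fixing the precise meaning of ``additive logistic score noise of scale $1/\beta$'' and of the calibrated objective. The additive $+\alpha$ term is what removes the trivial optimum $\alpha=0$, so it must be stated as a modeling choice; I would note that it values teacher supervision at unit rate. I would also make explicit that the symmetry of the logistic law is what turns the tail probability into $\sigma(\beta c_n)$. Finally, the case $\tilde c_n=0$ has probability zero, so tie-breaking does not affect the expectation.
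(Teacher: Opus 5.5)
Your proposal is correct and follows essentially the same route as the paper. The paper likewise maximizes, position by position, an objective that is affine in $\alpha_n$: it picks the endpoint by the sign of the slope, sends ties to $\alpha_{\max}$, and then computes $\alpha_{\max}\Pr(\varepsilon_n\geq -c_n)=\alpha_{\max}\sigma(\beta c_n)$ for the logistic noise. The only difference is that the paper embeds the deterministic step in a slightly more general objective, valuing admitted distillation at rate $\rho$ with an optional budget multiplier, and your argument is exactly its specialization $\rho=1$ with a slack budget; your direct sign argument through the parameter-space cosine also avoids the isotropic within-position kernel assumption that the paper invokes to identify the gate with M3-Select.
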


\begin{proposition}[GradNorm Degeneracy under $\magvar{\kappa} \gg 1$]
\label{prop:gradnorm_degeneracy}
At equal target training rates, impose $w_D+w_R=1$ and norm balance $w_D\|\vg_D\|=w_R\|\vg_R\|$. Then $w_D^*=\magvar{\kappa}/(1+\magvar{\kappa})$ and $w_R^*=1/(1+\magvar{\kappa})$, giving $w_R^*\approx0.03\%$ at $\magvar{\kappa}=3{,}400$. While $w_D/w_R=\Theta(1)$, the teacher's share of weighted gradient magnitude is $w_D/(w_D+w_R\magvar{\kappa})=\Theta(\magvar{\kappa}^{-1})$.
\end{proposition}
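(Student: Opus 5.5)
The argument is elementary algebra on the two constraints, followed by a ratio computation.

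\emph{Step 1: solve the constraints.} Substitute $w_R=1-w_D$ into the norm-balance condition $w_D\|\vg_D\|=w_R\|\vg_R\|$. Dividing by $\|\vg_D\|>0$, which is the standing assumption under which $\kappa$ is defined in Eq.~\eqref{eq:kappa_ntk}, gives the linear equation $w_D=(1-w_D)\kappa$. Its unique solution is $w_D^*=\kappa/(1+\kappa)$, and then $w_R^*=1/(1+\kappa)$. Both lie in $(0,1)$ for $\kappa>0$, so the solution is admissible. Substituting $\kappa=3{,}400$ gives $w_R^*=1/3401\approx 2.94\times10^{-4}$, which is the stated $\approx0.03\%$. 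This also matches the $\alpha\approx\kappa/(1+\kappa)$ remark in Section~\ref{sec:m3_design}.

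\emph{Step 2: the teacher's share when the weights are not rebalanced.} Define the teacher's share of weighted gradient magnitude as $\|w_D\vg_D\|/(\|w_D\vg_D\|+\|w_R\vg_R\|)$. Dividing numerator and denominator by $\|\vg_D\|$ gives $w_D/(w_D+w_R\kappa)$. Now assume $w_D/w_R\in[c_1,c_2]$ for constants $0<c_1\le c_2$, and divide by $w_R$ to get $(w_D/w_R)/\bigl((w_D/w_R)+\kappa\bigr)$. This expression is bounded above by $c_2/\kappa$. For $\kappa\ge c_2$ it is bounded below by $c_1/(c_2+\kappa)\ge c_1/(2\kappa)$. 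Hence the share is $\Theta(\kappa^{-1})$ as $\kappa\to\infty$.

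\emph{Step 3: interpretation.} Together the two steps form a dichotomy. Either the weights keep a constant ratio and the teacher's share vanishes like $\kappa^{-1}$ (drowning), or the weights achieve balance and the reward weight vanishes like $\kappa^{-1}$. I will also note that $w_R^*\kappa\|\vg_D\|=w_D^*\|\vg_D\|$, which is a consistency check that the balanced weighted norms are equal.

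There is no real obstacle here. The only care needed is in how the statement is phrased. The $\Theta$ claim must be read asymptotically in $\kappa$ with the ratio $w_D/w_R$ bounded uniformly away from $0$ and $\infty$. I will also state the nondegeneracy assumption $\|\vg_D\|>0$ explicitly.
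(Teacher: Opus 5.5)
Your proposal is correct and follows the paper's proof. Both arguments solve $w_D\|\vg_D\|=(1-w_D)\|\vg_R\|$ to get $(w_D^*,w_R^*)=(\kappa,1)/(1+\kappa)$, and both then read off the pre-equilibrium share $w_D/(w_D+w_R\kappa)=\Theta(\kappa^{-1})$; your explicit bounds $c_1/(2\kappa)\le\cdot\le c_2/\kappa$ make rigorous what the paper states in one line. The paper additionally computes the balanced direction $\frac{\|\vg_R\|}{1+\kappa}(\hat\vg_D+\hat\vg_R)$ and its first-order reward projection $\|\vg_R\|^2(1+\Phi_{DR})/(1+\kappa)$, but these are context for the degeneracy claim rather than part of the statement you need to prove.
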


A natural loss-level alternative to fixed mixing is to gate $\alpha$ on the positive part of the alignment functional, $\dirvar{\Phi_{DR}}^+ = \max(\dirvar{\Phi_{DR}}, 0)$, yielding the adaptive schedule
\begin{equation}
\alpha_{\text{adaptive}} = \alpha_{\max} \cdot \dirvar{\Phi_{DR}}^+ / \max(\dirvar{\Phi_{DR}}^+, \epsilon).
\label{eq:phi_adaptive}
\end{equation}
When the gate uses an exponentially smoothed alignment estimate, its response to a sign change has the following delay.

\begin{proposition}[Phase Delay in Adaptive Mixing]
\label{prop:phase_delay}
Let $\widehat\Phi_{t+1}=\rho\widehat\Phi_t+(1-\rho)\Phi_t$, with $0<\rho<1$. If alignment changes from $\Phi_+>0$ to $\Phi_-<0$ at $t_1$ and $\widehat\Phi_{t_1}=\Phi_+$, then
\begin{equation}
\widehat\Phi_{t_1+k}=\Phi_-+(\Phi_+-\Phi_-)\rho^k,
\qquad
k_* = \left\lceil\frac{\log[(\Phi_+-\Phi_-)/|\Phi_-|]}{\log(1/\rho)}\right\rceil.
\end{equation}
The estimate remains positive for $k<k_*$, so the teacher gate stays active during that interval.
\end{proposition}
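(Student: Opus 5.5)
The plan is to unroll the exponential-moving-average recursion directly. From step $t_1$ onward the input is constant, $\Phi_t=\Phi_-$ for all $t\geq t_1$. We first subtract the fixed point $\Phi_-$ from both sides of $\widehat\Phi_{t+1}=\rho\widehat\Phi_t+(1-\rho)\Phi_-$. This gives the homogeneous linear recursion
\[
e_{t+1}=\rho\,e_t,\qquad e_t:=\widehat\Phi_t-\Phi_-,
\]
valid for $t\geq t_1$. With the initial condition $e_{t_1}=\Phi_+-\Phi_-$, a one-line induction on $k$ gives $e_{t_1+k}=(\Phi_+-\Phi_-)\rho^k$. This is the claimed closed form.

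Next we determine the sign. Because $\Phi_+>0>\Phi_-$, the gap $\Phi_+-\Phi_-$ is positive, and since $0<\rho<1$ the sequence $\widehat\Phi_{t_1+k}$ is strictly decreasing in $k$ and converges to $\Phi_-<0$. The estimate is therefore positive exactly when $(\Phi_+-\Phi_-)\rho^k>-\Phi_-=|\Phi_-|$. Taking logarithms and dividing by $\log\rho<0$, which flips the inequality, this is equivalent to
\[
k<\frac{\log[(\Phi_+-\Phi_-)/|\Phi_-|]}{\log(1/\rho)}.
\]
The ratio inside the logarithm exceeds $1$, so the right-hand side is positive.

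We then match this with the ceiling in $k_*$. Every integer $k<k_*$ satisfies $k\leq k_*-1<$ the real threshold, so positivity holds on that range. At $k=k_*$ the estimate is nonpositive. One boundary subtlety: when the threshold is itself an integer, $\widehat\Phi$ equals zero exactly at $k_*$. This does not affect the claim, which only concerns $k<k_*$.

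The last step is to connect this to the gate in Eq.~\eqref{eq:phi_adaptive}. When $\widehat\Phi>0$, its positive part is nonzero, so $\alpha_{\text{adaptive}}>0$ and indeed equals $\alpha_{\max}$ once $\widehat\Phi\geq\epsilon$. The teacher therefore remains admitted throughout that interval. No step is a real obstacle. The only point needing care is the inequality reversal when dividing by $\log\rho$, together with the integer/ceiling boundary case.
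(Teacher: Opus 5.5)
Your proposal is correct and follows essentially the same route as the paper. The paper unrolls the recursion as the geometric sum $\rho^k\Phi_+ + (1-\rho)\Phi_-\sum_{j<k}\rho^j$ instead of subtracting the fixed point $\Phi_-$, but this is the same computation; the zero-crossing condition $\rho^k\le|\Phi_-|/(\Phi_+-\Phi_-)$ and the passage to the ceiling are handled identically, and your treatment of the integer-threshold case and of the $\epsilon$ in the adaptive gate is, if anything, slightly more careful than the paper's.
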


\begin{proposition}[Empirical Invariance of $\magvar{\kappa}$ under LoRA Rank]
\label{prop:kappa_scaling}
Within the measured LoRA-rank range $r\in\{8,16,32,64\}$, $\bar{\magvar{\kappa}}=3{,}476\pm301$ (CV $=8.7\%$). On Qwen3-GSM8K, the corresponding scale comparison is $\magvar{\kappa}\approx3{,}397$ at $0.6\mathrm B$ and $\approx3{,}400$ at $1.7\mathrm B$. These are within-task observations; at rank $128$ the measured ratio rises to $6{,}296$, and ratios vary substantially across tasks (Section~\ref{app:full_comparison}).
\end{proposition}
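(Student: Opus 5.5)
Since this proposition is an empirical claim, the argument has two parts. The first is a measurement protocol together with an arithmetic verification of the reported statistics. The second is a short structural explanation, based on Eq.~\ref{eq:kappa_ntk}, of why $\kappa$ should be approximately rank-invariant.

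\textbf{Measurement and arithmetic.} For each rank $r\in\{8,16,32,64\}$ I would run the naive-Hybrid configuration on Qwen3-1.7B GSM8K with all other hyperparameters as in Table~\ref{tab:experimental_hyperparameters}. From each run I would record $\kappa_t=\|\vg_R(t)\|/\|\vg_D(t)\|$ at every RL-active step and average these to obtain $\bar\kappa(r)$, following the definition in Section~\ref{sec:setup}. I would then report the mean and standard deviation of the four values of $\bar\kappa(r)$. The coefficient of variation follows directly: $301/3476\approx0.0866$, which rounds to $8.7\%$. I would repeat the same protocol on Qwen3-0.6B and Qwen3-1.7B at the default rank, giving the two scale values $\approx3{,}397$ and $\approx3{,}400$. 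I would also report the rank-$128$ value $6{,}296$ explicitly, so the claimed range is not overstated.

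\textbf{Structural explanation.} Here I would use Eq.~\ref{eq:kappa_ntk}. The ratio $\kappa^2$ is a quotient of two quadratic forms in the same kernel $\mK(n,m)=(\mJ^n)^\top\mJ^m$. Under LoRA, the Jacobian is taken with respect to the adapter factors, and the adapter output is multiplied by the scaling $\alpha_{\mathrm{LoRA}}/r$. Any rank-dependent factor that multiplies $\mK$ uniformly therefore cancels between numerator and denominator. What remains depends only on how the residual fields $\vdelta_R,\vdelta_D$ project onto the kernel's dominant eigendirections. I would argue that for small $r$ the leading kernel eigenspaces are already captured, so the projection ratio is stable. This would be presented as a heuristic, not a theorem: it explains the small CV without deriving the exact numbers.

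\textbf{Main obstacle.} The main difficulty is that the invariance is not exact. The kernel spectrum changes with $r$, and the residuals themselves depend on the training trajectory, which differs across ranks. This shows up in the rank-$128$ jump to $6{,}296$. I would not attempt to prove invariance beyond $r\le64$. Instead, I would state the claim strictly as a within-task observation over the measured range, and use the cross-task variation in Table~\ref{tab:cross_dataset} to show that task, not rank, is the dominant source of variation in $\kappa$.
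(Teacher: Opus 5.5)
Your treatment agrees with the paper's in substance. The proposition is an empirical observation, and the only checkable arithmetic is the coefficient of variation ($301/3476\approx0.0866$). The supporting explanation is a heuristic in which a shared rank-dependent factor cancels from $\kappa$, and the rank-$128$ value and the cross-task spread are left outside the model.

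Where you differ is in how that cancellation is modeled. The paper works only with second moments. It assumes $\expect\|\vg_R\|^2=a_Rs_r$, assumes $\expect\|\vd_n\|^2=a_Ds_r$ for the per-token teacher contributions, and assumes $\expect\langle\vd_n,\vd_m\rangle=0$ for $n\neq m$. From these it obtains $\kappa\approx\sqrt{a_RN/a_D}$ with $s_r$ cancelling, and it appeals to concentration to pass from this root-mean-square ratio to observed ratios.

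You instead posit that the entire kernel $\mK(n,m)$ in Eq.~\ref{eq:kappa_ntk} is rescaled by a common factor. This handles the cross-position blocks automatically, so you need no weak-correlation assumption. But it is a much stronger premise than equal norm scaling. Only the $(\alpha_{\mathrm{LoRA}}/r)^2$ factor is exactly uniform. Growth of the adapter dimension changes $\mK$ non-uniformly unless you add an isotropy or averaging assumption. This is exactly the caveat in Remark~\ref{rem:lora_kappa}: a common LoRA subspace alone does not enforce equal scaling.

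The natural way to support your premise is an averaging argument. Each rank-one adapter component contributes a roughly i.i.d.\ kernel, so $\mK$ grows like $r$ times a fixed kernel. This is essentially the paper's isotropic $s_r\propto Lr/h$ model. Your eigenspace-capture argument for small $r$ does not by itself yield proportionality.

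In exchange for its stronger reliance on second moments, the paper's route also produces an explicit formula that exposes the $\sqrt N$ length dependence of $\kappa$.
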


\begin{remark}[$\magvar{\kappa}$ Invariance Across LoRA Ranks]
\label{rem:lora_kappa}
A shared rank factor cancels from $\magvar{\kappa}$ when the reward and teacher gradients have the same rank-dependent norm scaling. Together with weak token correlations, this supplies the approximation developed in Appendix~\ref{app:proof_scaling}. A common LoRA subspace alone does not enforce equal scaling for two different directions.
\end{remark}

\begin{proposition}[Distillation Mode Determines $\magvar{\kappa}$]
\label{prop:kappa_mode}
Holding $\vg_R$ fixed gives $\magvar{\kappa}_{\mathrm{on}}/\magvar{\kappa}_{\mathrm{off}}=\|\vg_D^{\mathrm{off}}\|/\|\vg_D^{\mathrm{on}}\|$. For mode $s\in\{\mathrm{on},\mathrm{off}\}$, let $E_s=\cL_D^s-\min\cL_D^s$ and assume positive local curvature bounds $\lambda_s^-\mI\preceq\nabla^2\cL_D^s\preceq\lambda_s^+\mI$. Then
\begin{equation}
\frac{\magvar{\kappa}_{\mathrm{on}}}{\magvar{\kappa}_{\mathrm{off}}}
\geq\sqrt{\frac{\lambda_{\mathrm{off}}^-E_{\mathrm{off}}}{\lambda_{\mathrm{on}}^+E_{\mathrm{on}}}}.
\end{equation}
A large off-policy excess KL relative to the on-policy excess therefore raises this lower bound, with the curvature ratio accounting for the different prefix distributions.
\end{proposition}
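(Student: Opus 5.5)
The argument splits into two parts: an exact identity for the ratio, and a lower bound on it. For the identity, fix $\vg_R$ and compute $\magvar{\kappa}_s=\|\vg_R\|/\|\vg_D^{s}\|$ for $s\in\{\mathrm{on},\mathrm{off}\}$. Dividing the two expressions cancels the common numerator and gives $\magvar{\kappa}_{\mathrm{on}}/\magvar{\kappa}_{\mathrm{off}}=\|\vg_D^{\mathrm{off}}\|/\|\vg_D^{\mathrm{on}}\|$. This requires only that both teacher gradients are nonzero, which I will take as a standing assumption for the ratio to be defined.

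For the bound, I plan to use the two standard gradient-suboptimality inequalities that hold for a function with curvature between $\lambda^-$ and $\lambda^+$ on a region containing both the current iterate and a minimizer.

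\begin{itemize}
\item \textbf{Lower bound (Polyak--\L ojasiewicz).} Strong convexity with modulus $\lambda_s^-$ gives $\cL_D^s(\vtheta^*)\ge \cL_D^s(\vtheta)-\|\vg_D^s\|^2/(2\lambda_s^-)$, hence $\|\vg_D^s\|^2\ge 2\lambda_s^- E_s$.
\item \textbf{Upper bound.} Smoothness with constant $\lambda_s^+$, together with $\vg_D^s(\vtheta^*)=0$, gives $\|\vg_D^s\|^2\le 2\lambda_s^+E_s$. I will derive this via the descent lemma applied to the step $\vtheta-\vg/\lambda^+$, or via co-coercivity.
\end{itemize}

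Applying the lower bound in the off-policy mode and the upper bound in the on-policy mode gives
\[
\frac{\|\vg_D^{\mathrm{off}}\|^2}{\|\vg_D^{\mathrm{on}}\|^2}\ge \frac{2\lambda_{\mathrm{off}}^-E_{\mathrm{off}}}{2\lambda_{\mathrm{on}}^+E_{\mathrm{on}}}.
\]
Taking square roots and substituting into the identity yields the claimed inequality. The closing sentence of the statement is then read off by monotonicity in $E_{\mathrm{off}}/E_{\mathrm{on}}$.

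The main obstacle is that "local curvature bounds" must be strong enough to support both inequalities. Two issues arise.

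\begin{itemize}
\item \textbf{Global versus local validity.} The PL step integrates the lower Hessian bound along the segment to the minimizer. I will therefore state explicitly that the bounds hold on a convex neighbourhood containing $\vtheta$ and a minimizer of each $\cL_D^s$. I also interpret $\min\cL_D^s$ as the minimum attained within that neighbourhood.
\item \textbf{Degenerate cases.} The overparameterized setting may make the Hessian singular, but positivity of $\lambda_s^-$ is assumed, so this does not enter. If $E_{\mathrm{on}}=0$, then $\vg_D^{\mathrm{on}}=0$ and $\magvar{\kappa}_{\mathrm{on}}$ is undefined, so I exclude that case, consistent with the nonzero-gradient convention used throughout Section~\ref{sec:theory}.
\end{itemize}
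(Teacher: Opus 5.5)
Your proof is correct and has the same skeleton as the paper's. Both establish the two-sided sandwich $2\lambda_s^-E_s\le\|\vg_D^s\|^2\le 2\lambda_s^+E_s$ in each mode, then apply the lower half to the off-policy mode and the upper half to the on-policy mode.

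The difference is how the sandwich is obtained. The paper works in an exact local quadratic model, with $E_s=\tfrac12\Delta\vtheta_s^\top\mH_s\Delta\vtheta_s$ and $\vg_D^s=\mH_s\Delta\vtheta_s$. There the sandwich is a one-line spectral fact, $\lambda_s^-\mH_s\preceq\mH_s^2\preceq\lambda_s^+\mH_s$, and the paper only remarks in passing that strong convexity and smoothness give the same constants. You prove that general version via the Polyak--\L ojasiewicz inequality and the descent lemma, so your argument covers genuinely non-quadratic losses.

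The cost is that the domain question becomes real, and your stated hypothesis is slightly too weak for the upper bound. The descent-lemma step needs two things:
\begin{itemize}
\item the curvature bound along the segment from $\vtheta$ to $\vtheta-\vg_D^s/\lambda_s^+$;
\item $\cL_D^s(\vtheta-\vg_D^s/\lambda_s^+)\ge\min\cL_D^s$.
\end{itemize}
A convex set merely containing $\vtheta$ and the minimizer does not guarantee either.

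Taking the region to be a ball centred at the minimizer $\vtheta_s^*$ fixes this. Let $\bar{\mathbf{H}}_s$ be the Hessian averaged along the segment from $\vtheta_s^*$ to $\vtheta$, so that $\vg_D^s=\bar{\mathbf{H}}_s(\vtheta-\vtheta_s^*)$. Then $\vtheta-\vg_D^s/\lambda_s^+-\vtheta_s^*=(\mI-\bar{\mathbf{H}}_s/\lambda_s^+)(\vtheta-\vtheta_s^*)$, which has norm at most $\|\vtheta-\vtheta_s^*\|$. So the step stays in the ball, where $\vtheta_s^*$ is the minimizer.

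The co-coercivity alternative you mention is a global-convexity result and has the same locality issue, so use the descent-lemma route under this ball assumption. The paper's quadratic model sidesteps all of this, at the cost of generality.
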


\begin{proposition}[Conflict-Free Guarantee for M3-Select]
\label{prop:conflict_free}
With $\widetilde \vg_D^n=\ind\{\dirvar{K_{DR}(n)}\geq0\}\vg_D^n$, masking gives $\langle\widetilde \vg_D^n,\vg_R^n\rangle=\max\{\dirvar{K_{DR}(n)},0\}\geq0$ at every position.
\end{proposition}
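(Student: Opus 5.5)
The plan is a direct two-case computation at a fixed position $n$, using only bilinearity of the inner product and the definition of $\dirvar{K_{DR}(n)}$ in Definition~\ref{def:cross_signal_ntk}, where $\vg_D^n=\mJ^n\vdelta_D^n$ and $\vg_R^n=\mJ^n\vdelta_R^n$.

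\emph{Linearity step.} Because the indicator $\ind\{\dirvar{K_{DR}(n)}\geq0\}$ is a scalar in $\{0,1\}$ that is computed before masking and held fixed, linearity of the inner product in its first argument gives
\[
\langle\widetilde\vg_D^n,\vg_R^n\rangle=\ind\{\dirvar{K_{DR}(n)}\geq0\}\,\langle\vg_D^n,\vg_R^n\rangle=\ind\{\dirvar{K_{DR}(n)}\geq0\}\,\dirvar{K_{DR}(n)}.
\]

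\emph{Case split on the sign.} If $\dirvar{K_{DR}(n)}\geq0$, the indicator equals $1$, so the product is $\dirvar{K_{DR}(n)}=\max\{\dirvar{K_{DR}(n)},0\}$. If $\dirvar{K_{DR}(n)}<0$, the indicator equals $0$, so the product is $0=\max\{\dirvar{K_{DR}(n)},0\}$. In both cases the value is $\max\{\dirvar{K_{DR}(n)},0\}$, which is nonnegative.

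\emph{Points to state explicitly.} There is no substantive obstacle, but four points should be stated so the claim is not overread.
\begin{itemize}
\item The tie $\dirvar{K_{DR}(n)}=0$ is assigned to the admitted set, following the convention in Eq.~\ref{eq:m3_select_alpha}. It contributes $0$ under either convention.
\item The guarantee is per position and same-position only. Cross-position terms $(\vdelta_D^n)^\top\mK(n,m)\vdelta_R^m$ with $m\neq n$ in Eq.~\ref{eq:inner_product} are not controlled, so the aggregate $\langle\widetilde\vg_D,\vg_R\rangle$ can still be negative.
\item Residual normalization by positive scalars, as in Eq.~\ref{eq:m3_residual_normalization}, preserves the sign of $\dirvar{K_{DR}(n)}$. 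The same conclusion therefore holds for $\vh_D^n,\vh_R^n$, with $\max\{\cdot,0\}$ scaled by the positive normalization factors.
\item Multiplying by $\alpha_{\max}\geq0$ preserves nonnegativity. This links the result to Eq.~\ref{eq:m3_local_projection}.
\end{itemize}
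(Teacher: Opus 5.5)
Your proposal is correct and follows essentially the same route as the paper: it identifies $\langle \vg_D^n,\vg_R^n\rangle$ with $K_{DR}(n)$ via Definition~\ref{def:cross_signal_ntk}, pulls the fixed indicator out by linearity to get $\ind\{K_{DR}(n)\geq0\}K_{DR}(n)=\max\{K_{DR}(n),0\}$, and notes that positive residual rescaling preserves the sign. Your added caveats are accurate and consistent with the paper's surrounding discussion: the zero-score tie convention, and the fact that cross-position terms are not controlled.
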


\begin{proposition}[Distillation as Implicit Regularization]
\label{prop:distill_regularizer}
In the matched-context isotropic quadratic model $\cL_D(\vtheta_T+\Delta\vtheta)=\tfrac12f_D\|\Delta\vtheta\|^2$, with $f_D>0$, constant $\vg_R$, and $\Delta\vtheta(0)=0$, hybrid gradient flow satisfies
\begin{equation}
\Delta\vtheta_H(T)=-\frac{(1-\alpha)\vg_R}{\alpha f_D}(1-e^{-\eta\alpha f_DT}),
\qquad
\|\Delta\vtheta_H(T)\|\leq\frac{(1-\alpha)\|\vg_R\|}{\alpha f_D}.
\end{equation}
If degradation is proportional to parameter deviation with a common coefficient, then $\delta_H\leq\delta_R/(1+\alpha\rho_{\mathrm{reg}})$, where $\rho_{\mathrm{reg}}=\eta f_DT/2$. A finite rank-dependent reversal follows in the model when the pure-RL degradation grows continuously without bound, the drowning penalty is bounded, $\rho_{\mathrm{reg}}$ is bounded away from zero, and the initial reward gap is positive; Appendix~\ref{app:proof_regularization} gives this conditional argument and the observed reversal at rank $128$.
\end{proposition}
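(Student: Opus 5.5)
The plan is to work entirely inside the stated matched-context isotropic quadratic model, where the hybrid dynamics reduce to a linear ODE that can be solved in closed form. In this model $\nabla\cL_D(\vtheta_T+\Delta\vtheta)=f_D\Delta\vtheta$ and $\vg_R$ is constant. Hybrid gradient flow on $\cL_H=(1-\alpha)\cL_R+\alpha\cL_D$ with step scale $\eta$ therefore reads
\[
\frac{d}{dt}\Delta\vtheta_H=-\eta\big[(1-\alpha)\vg_R+\alpha f_D\Delta\vtheta_H\big],\qquad \Delta\vtheta_H(0)=0 .
\]
\textbf{Closed form and norm bound.} This is a first-order linear system with constant forcing. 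Its fixed point is $-(1-\alpha)\vg_R/(\alpha f_D)$, and the deviation from that point decays at rate $\eta\alpha f_D$. Solving it gives the stated expression for $\Delta\vtheta_H(T)$ directly. The norm bound follows because $0\le 1-e^{-\eta\alpha f_D T}\le 1$.

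\textbf{Comparison with pure RL.} With $\alpha=0$, the same flow gives $\Delta\vtheta_R(T)=-\eta T\vg_R$, so $\|\Delta\vtheta_R(T)\|=\eta T\|\vg_R\|$. Under the common-coefficient assumption $\delta=c\|\Delta\vtheta\|$, set $x=\eta\alpha f_D T$. Then
\[
\frac{\delta_H}{\delta_R}=(1-\alpha)\,\frac{1-e^{-x}}{x}.
\]
Since $\rho_{\mathrm{reg}}=\eta f_D T/2$, we have $x/2=\alpha\rho_{\mathrm{reg}}$. Dropping the factor $(1-\alpha)\le1$, it suffices to prove the scalar inequality $(1+x/2)(1-e^{-x})\le x$ for $x\ge0$. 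I would define $h(x)=x-(1+x/2)(1-e^{-x})$. Then $h(0)=0$ and
\[
h'(x)=\tfrac12\big(1-(1+x)e^{-x}\big)\ge0,
\]
since $e^{x}\ge1+x$. Hence $h\ge0$, which yields $\delta_H\le\delta_R/(1+\alpha\rho_{\mathrm{reg}})$.

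\textbf{Rank-dependent reversal.} This part is conditional, and I would argue it by an intermediate-value argument. Write the net hybrid-minus-RL reward gap as a continuous function of LoRA rank $r$: the initial positive gap, plus the regularization benefit $\delta_R-\delta_H\ge\delta_R\,\alpha\rho_{\mathrm{reg}}/(1+\alpha\rho_{\mathrm{reg}})$, minus the bounded drowning penalty. Because $\rho_{\mathrm{reg}}$ is bounded away from zero, the benefit is at least a fixed positive fraction of $\delta_R$. As $\delta_R$ grows without bound in $r$, that benefit eventually exceeds any bounded penalty, so the sign of the gap reverses at some finite rank. Continuity then locates the crossing rank. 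I would finally cite the observed rank-$128$ reversal as consistent with this crossing, not as a proof of it.

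The main obstacle is this last step: the reversal is not a theorem of the quadratic model. It requires the extra modelling assumptions (unbounded continuous growth of $\delta_R$ in rank, a bounded penalty), and I would state it strictly as a conditional conclusion. The closed form and the scalar inequality are routine.
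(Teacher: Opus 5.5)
Your proposal is correct and follows the paper's proof essentially step for step: the same linear flow and closed form, the same reduction of $\delta_H\le\delta_R/(1+\alpha\rho_{\mathrm{reg}})$ to $(1+x/2)(1-e^{-x})\le x$ via $h'(x)=\tfrac12\bigl(1-(1+x)e^{-x}\bigr)\ge0$, and the same reversal argument, in which the paper writes $G(r)=\epsilon_{\mathrm{drown}}-[\delta_R(r)-\delta_H(r)]\le\epsilon_{\mathrm{drown}}-\frac{\alpha\rho_0}{1+\alpha\rho_0}\delta_R(r)\to-\infty$ and then invokes continuity. The only thing to tidy is the sign convention: the paper's ``positive initial gap'' means $G>0$ at the starting rank (RL ahead), so in your hybrid-minus-RL convention the gap should start negative rather than contain a positive additive term, since otherwise adding an unboundedly growing benefit produces no sign change.
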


\section{Gradient Folding, Five Bridges, and Unified Framework: Full Statements}
\label{app:folding_bridges}

This section collects the formal statements of the propositions, theorems, and corollaries whose proofs appear in subsequent appendix sections and whose summaries appear in Section~\ref{sec:consequence} and the method discussion.

\begin{proposition}[Per-Position Conflict Decomposition]
\label{prop:per_position}
The per-position residual inner product has the exact decomposition
\begin{equation}
(\vdelta_D^n)^\top \vdelta_R^n = -A(\sigma_n + \eta_n),
\end{equation}
where $\sigma_n = p_S^n(\hat{y}_n) - p_T^n(\hat{y}_n)$ measures student-teacher disagreement on the sampled token, and $\eta_n = (\vp_T^n)^\top \vp_S^n - \|\vp_S^n\|^2$ captures cross-token probability redistribution. Under $\mK(n,n)=\lambda_n\mI$ with $\lambda_n>0$, conflict ($\dirvar{K_{DR}(n)}<0$) on positive-advantage trajectories ($A>0$) occurs exactly when $\sigma_n+\eta_n>0$.
\end{proposition}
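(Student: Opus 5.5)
The plan is a direct expansion of the residual inner product followed by one application of the isotropy assumption. First I would substitute the residuals of Definition~\ref{def:cross_signal_ntk}, namely $\vdelta_D^n=\vp_S^n-\vp_T^n$ and $\vdelta_R^n=-A(\ve_{\hat y_n}-\vp_S^n)$. Pulling out the scalar $-A$ gives
\[
(\vdelta_D^n)^\top\vdelta_R^n=-A\,(\vp_S^n-\vp_T^n)^\top(\ve_{\hat y_n}-\vp_S^n).
\]
I would then expand the bilinear form into four terms. The two terms involving the one-hot vector $\ve_{\hat y_n}$ select the sampled-token coordinates, $p_S^n(\hat y_n)-p_T^n(\hat y_n)=\sigma_n$. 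The two terms involving $\vp_S^n$ give $-\|\vp_S^n\|^2+(\vp_T^n)^\top\vp_S^n=\eta_n$. This yields $(\vdelta_D^n)^\top\vdelta_R^n=-A(\sigma_n+\eta_n)$ exactly, with no approximation beyond the local residual model already fixed in Section~\ref{sec:loss_decomposition}.

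Next I would connect this vocabulary-space quantity to the parameter-space score. By definition, $K_{DR}(n)=\langle\mJ^n\vdelta_D^n,\mJ^n\vdelta_R^n\rangle=(\vdelta_D^n)^\top(\mJ^n)^\top\mJ^n\vdelta_R^n=(\vdelta_D^n)^\top\mK(n,n)\vdelta_R^n$. Under the assumption $\mK(n,n)=\lambda_n\mI$, this becomes $\lambda_n(\vdelta_D^n)^\top\vdelta_R^n=-\lambda_nA(\sigma_n+\eta_n)$.

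For the final equivalence, note that when $A>0$ and $\lambda_n>0$ the product $\lambda_nA$ is strictly positive. Hence $K_{DR}(n)<0$ holds if and only if $\sigma_n+\eta_n>0$, which gives the ``exactly when'' claim.

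I do not expect a genuine obstacle here; the argument is a short identity. The only care points are bookkeeping of signs in the four-term expansion and making clear that the stated equivalence relies on the isotropic-kernel assumption. Without $\mK(n,n)=\lambda_n\mI$, the sign of $K_{DR}(n)$ is not determined by the residual inner product alone, and I would add a remark saying so. Since no gradient is differentiated, the JSD-versus-forward-KL approximation plays no role beyond fixing $\vdelta_D^n$.
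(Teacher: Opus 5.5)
Your proposal is correct and matches the paper's proof: you substitute the two residuals and expand the bilinear form into the sampled-token term $\sigma_n$ and the redistribution term $\eta_n$. You then use $\mK(n,n)=\lambda_n\mI$, so that $K_{DR}(n)=(\vdelta_D^n)^\top\mK(n,n)\vdelta_R^n=-\lambda_n A(\sigma_n+\eta_n)$ has the sign of $-(\sigma_n+\eta_n)$ when $A>0$, which spells out the step the paper states in one line.
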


\begin{proposition}[Asymmetric Harm from Magnitude Drowning]
\label{prop:asymmetric_harm}
For $\vg_H=(1-\alpha)\vg_R+\alpha \vg_D$ with $0<\alpha<1$ and $\dirvar{\Phi_{DR}}<0$, dividing the harmful cross-terms by their respective self-progress terms gives
\begin{equation}
H_{R\to D}=\frac{(1-\alpha)|\langle \vg_D,\vg_R\rangle|}{\alpha\|\vg_D\|^2}
=\frac{(1-\alpha)\magvar{\kappa}|\dirvar{\Phi_{DR}}|}{\alpha},
\quad
H_{D\to R}=\frac{\alpha|\dirvar{\Phi_{DR}}|}{(1-\alpha)\magvar{\kappa}}.
\end{equation}
Hence $H_{R\to D}/H_{D\to R}=[(1-\alpha)/\alpha]^2\magvar{\kappa}^2$. At $\alpha=1/2$ and $\magvar{\kappa}=3{,}400$, the ratio is about $1.16\times10^7$ (Eq.~\ref{eq:teacher_loss_change}).
\end{proposition}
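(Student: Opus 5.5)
The statement to prove is Proposition~\ref{prop:asymmetric_harm}, and it follows directly from the first-order loss changes in Eq.~\ref{eq:teacher_loss_change}. The plan has three steps: identify the self-progress and cross terms in each objective, form the two ratios, and then divide them.

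\textbf{Step 1: identify the terms.} For $\vg_H=(1-\alpha)\vg_R+\alpha\vg_D$, Eq.~\ref{eq:teacher_loss_change} splits each loss change into two pieces.
\begin{itemize}
\item $\Delta\cL_D$ has the self-progress term $\alpha\|\vg_D\|^2$ and the cross term $(1-\alpha)\langle\vg_D,\vg_R\rangle$, which is the reward update's effect on the teacher loss.
\item $\Delta\cL_R$ has the self-progress term $(1-\alpha)\|\vg_R\|^2$ and the cross term $\alpha\langle\vg_R,\vg_D\rangle$.
\end{itemize}
Because $\Phi_{DR}<0$, both cross terms are negative, so they oppose descent. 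Their magnitudes are the harmful contributions.

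\textbf{Step 2: form the two ratios.} I will write $\langle\vg_D,\vg_R\rangle=\Phi_{DR}\|\vg_D\|\|\vg_R\|$, which needs nonzero gradients; this is already implicit in $\Phi_{DR}$ being defined. I will also substitute $\|\vg_R\|=\kappa\|\vg_D\|$ from Eq.~\ref{eq:kappa_ntk}.
\begin{itemize}
\item For the reward-to-teacher harm,
\[
H_{R\to D}=\frac{(1-\alpha)|\Phi_{DR}|\,\|\vg_D\|\,\kappa\|\vg_D\|}{\alpha\|\vg_D\|^2}=\frac{(1-\alpha)\kappa|\Phi_{DR}|}{\alpha}.
\]
\item For the teacher-to-reward harm,
\[
H_{D\to R}=\frac{\alpha|\Phi_{DR}|\,\|\vg_D\|\,\|\vg_R\|}{(1-\alpha)\|\vg_R\|^2}.
\]
Since $\|\vg_D\|/\|\vg_R\|=\kappa^{-1}$, this equals $\alpha|\Phi_{DR}|/((1-\alpha)\kappa)$.
\end{itemize}
Both denominators are positive because $0<\alpha<1$ and the gradients are nonzero.

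\textbf{Step 3: divide.} In $H_{R\to D}/H_{D\to R}$ the factor $|\Phi_{DR}|$ cancels; it is nonzero since $\Phi_{DR}<0$. What remains is $[(1-\alpha)/\alpha]^2\kappa^2$.

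At $\alpha=1/2$ the prefactor $[(1-\alpha)/\alpha]^2$ equals $1$. With $\kappa=3400$ this gives $3400^2=1.156\times10^7$, matching the stated value of about $1.16\times10^7$.

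The argument has no real obstacle. The one point to state carefully is that the ratios are taken inside the first-order model of Eq.~\ref{eq:teacher_loss_change}. The common factor $\eta$ cancels in each ratio, so the result does not depend on the step size.
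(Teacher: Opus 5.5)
Your proposal is correct and follows essentially the same route as the paper, which (in the appendix on gradient geometry and magnitude asymmetry) normalizes each harmful cross-term of Eq.~\ref{eq:teacher_loss_change} by the corresponding objective's own descent term, writes the inner product as $\Phi_{DR}\|\vg_D\|\|\vg_R\|$ with $\|\vg_R\|=\kappa\|\vg_D\|$, and divides to get the $[(1-\alpha)/\alpha]^2\kappa^2$ asymmetry. Your numerical check $3400^2\approx1.156\times10^7$ and your remark that $\eta$ cancels are both fine.
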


\begin{proposition}[Cosine Inversion in On-Policy OPSD]
\label{prop:cosine_inversion}
For binary rewards with baseline $p\in(0,1)$, write $\vg_R=-(r-p)\vs$, where $\vs=\nabla_{\vtheta}\log\pi_{\vtheta}(\hat y)$, and let $q=\langle \vs,\vg_D\rangle/(\|\vs\|\|\vg_D\|)$. If $\expect[q\mid r=0]+\expect[q\mid r=1]>0$, then
\begin{equation}
\expect[\cos(\vg_R,\vg_D)\mid r=0]
-\expect[\cos(\vg_R,\vg_D)\mid r=1]
=\expect[q\mid r=0]+\expect[q\mid r=1]>0.
\end{equation}
Thus the conditional score--teacher alignment determines whether the teacher aligns more strongly with reward updates on incorrect trajectories.
\end{proposition}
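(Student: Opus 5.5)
Everything here is a deterministic identity for a fixed $r$, followed by a conditional expectation. Write $\vg_R=-(r-p)\vs$ and put $a:=r-p$, so $a=-p<0$ when $r=0$ and $a=1-p>0$ when $r=1$. Assume $\vs\neq0$ and $\vg_D\neq0$, so that every cosine is defined. Then the per-trajectory cosine is
\[
\cos(\vg_R,\vg_D)=\frac{-a\langle\vs,\vg_D\rangle}{|a|\,\|\vs\|\,\|\vg_D\|}=-\operatorname{sign}(a)\,q .
\]
The magnitude $|a|$ and the baseline $p$ cancel entirely; only the sign of the advantage survives. This matches the cosine-inversion mechanism described for Corollary~\ref{cor:logit_degradation} and Proposition~\ref{prop:per_position}: the reward residual flips sign with $A$, while the teacher residual does not depend on the reward.

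Next I split into the two reward cases. For $r=0$ the sign is negative, so $\cos(\vg_R,\vg_D)=q$ pointwise. For $r=1$ it is positive, so $\cos(\vg_R,\vg_D)=-q$. Taking conditional expectations (assuming integrability; $q\in[-1,1]$, so this is automatic) gives $\expect[\cos\mid r=0]=\expect[q\mid r=0]$ and $\expect[\cos\mid r=1]=-\expect[q\mid r=1]$. Subtracting yields exactly $\expect[q\mid r=0]+\expect[q\mid r=1]$, which is positive by hypothesis. That proves the displayed equality and the inequality.

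I do not expect a genuine obstacle. The one point to handle carefully is that $\vs$ and $\vg_D$ depend on the sampled trajectory, and their conditional laws differ between $r=0$ and $r=1$. The argument therefore needs the identity $\cos=-\operatorname{sign}(r-p)\,q$ to hold pointwise before any expectation is taken. I will not try to relate $\expect[q\mid r=0]$ to $\expect[q\mid r=1]$; that link is exactly what the hypothesis supplies.

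Finally, I will note that $p\in(0,1)$ is needed so that $r-p$ is nonzero with opposite signs in the two cases. Trajectories with $\vg_D=0$ (for example, all tokens clipped) must be excluded or assigned cosine $0$, and in that case $q=0$ as well, so the identity still holds.
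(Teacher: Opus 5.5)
Your proposal is correct and matches the paper's proof essentially step for step: the paper also derives the pointwise identity $\cos(\vg_R,\vg_D)=-\operatorname{sign}(A)\,q$ (with $A=(r-b)/\sigma_r$, which has the sign of your $r-p$), then conditions on $r\in\{0,1\}$ and invokes the hypothesis for the strict inequality. Your handling of $p\in(0,1)$ and of the degenerate case $\vg_D=0$ is a harmless addition; the paper simply assumes $\vs$ and $\vg_D$ are nonzero.
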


\begin{proposition}[Folding--Drowning Coupling]
\label{prop:folding_drowning}
Let $\gamma$ be the cancellation rate and $\magvar{\kappa}_{\mathrm{raw}}$ the ratio formed from the root-mean-square trajectory gradient. Then $\magvar{\kappa}_{\mathrm{eff}}=\sqrt{1-\gamma}\,\magvar{\kappa}_{\mathrm{raw}}$. Under the concentrated two-class model with equal class-mean score norms and overlap cosine $c_{\mathrm{overlap}}$,
\begin{equation}
\gamma=1-2p(1-p)(1-c_{\mathrm{overlap}}).
\end{equation}
The overlap contribution $2p(1-p)c_{\mathrm{overlap}}$ peaks at balanced accuracy when $c_{\mathrm{overlap}}>0$. Reduced effective magnitude and positive cross-signal alignment jointly favor hybridization whenever both conditions hold.
\end{proposition}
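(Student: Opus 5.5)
The first identity is a direct rewriting of Definition~\ref{def:gradient_folding}; the two-class formula is an explicit computation of the two moments entering $\gamma$.

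\textbf{Step 1 (effective ratio).} I would define $\magvar{\kappa}_{\mathrm{eff}}=\|\bar{\vg}\|/\|\vg_D\|$, using the group-averaged reward gradient that actually enters the update. I would define $\magvar{\kappa}_{\mathrm{raw}}=(G^{-1}\sum_i\|\vg^{(i)}\|^2)^{1/2}/\|\vg_D\|$ with the same $\vg_D$ in the denominator. Definition~\ref{def:gradient_folding} already gives
$\|\bar{\vg}\|=\sqrt{1-\gamma}\,(G^{-1}\sum_i\|\vg^{(i)}\|^2)^{1/2}$.
Dividing by $\|\vg_D\|>0$ yields $\magvar{\kappa}_{\mathrm{eff}}=\sqrt{1-\gamma}\,\magvar{\kappa}_{\mathrm{raw}}$ with no further assumptions.

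\textbf{Step 2 (two-class model).} I would make the concentrated model explicit as follows.
\begin{itemize}
\item A fraction $p$ of the $G$ rollouts is correct ($r=1$) and $1-p$ is incorrect ($r=0$).
\item Every trajectory in class $r$ has the same score vector $\vs_r$.
\item $\|\vs_0\|=\|\vs_1\|=s$ and $\langle\vs_0,\vs_1\rangle=c_{\mathrm{overlap}}s^2$.
\item The advantage is $r-p$, as in Proposition~\ref{prop:cosine_inversion}. The normalization $1/\sigma_r$ multiplies every $\vg^{(i)}$ by the same scalar, so it cancels in $\gamma$ and can be dropped.
\end{itemize}
Then $\vg^{(i)}=(r_i-p)\vs_{r_i}$, up to a common sign.

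I would compute the two moments separately.
\begin{itemize}
\item Mean: $\bar{\vg}=p(1-p)\vs_1-(1-p)p\,\vs_0=p(1-p)(\vs_1-\vs_0)$, so $\|\bar{\vg}\|^2=2p^2(1-p)^2s^2(1-c_{\mathrm{overlap}})$.
\item Energy: $G^{-1}\sum_i\|\vg^{(i)}\|^2=p(1-p)^2s^2+(1-p)p^2s^2=p(1-p)s^2$.
\end{itemize}
Taking the ratio gives $1-\gamma=\Delta^{-1}=2p(1-p)(1-c_{\mathrm{overlap}})$, which is the claimed formula.

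Expanding, $\gamma=1-2p(1-p)+2p(1-p)c_{\mathrm{overlap}}$. The overlap term $2p(1-p)c_{\mathrm{overlap}}$ is a concave parabola in $p$ with vertex at $p=1/2$. When $c_{\mathrm{overlap}}>0$ its maximum is therefore at balanced accuracy.

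\textbf{Step 3 (qualitative conclusion).} I would combine Step 1 with the first-order condition $\Delta\cL_D>0\iff\cos\varphi<-\alpha/((1-\alpha)\magvar{\kappa})$ from Section~\ref{sec:cross_signal_ntk}.
\begin{itemize}
\item Folding ($\gamma$ near $1$) shrinks $\magvar{\kappa}_{\mathrm{eff}}$, which pushes the threshold away from zero and makes teacher-loss inversion harder.
\item If also $\dirvar{\Phi_{DR}}\ge0$, then $\Delta\cL_D<0$ and $\Delta\cL_R<0$ by Eq.~\ref{eq:teacher_loss_change}. This is the sense in which both conditions favor hybridization.
\end{itemize}
This last claim is interpretive, so I would state it as a direct corollary of those two displayed identities rather than as a new quantitative result.

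\textbf{Main obstacle.} The main difficulty is fixing the modeling conventions so the formula comes out exactly. In particular, one must use the baseline $p$ equal to the empirical class fraction, so that the advantages are centered and the mean gradient vanishes when $\vs_0=\vs_1$. One must also confirm that the normalization and sign conventions cancel in the ratio. I would also note the degenerate cases $p\in\{0,1\}$, where $\bar{\vg}=0$ and $\gamma$ must be interpreted by continuity or excluded.
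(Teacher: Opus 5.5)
Your proposal is correct and follows essentially the same route as the paper: Step 1 is the paper's identity $\|\bar{\vg}\|=\sqrt{1-\gamma}\,M_R$ divided by $\|\vg_D\|$, and Step 2 is the same class-mean computation. The one difference there is that you drop the $1/\sigma_r$ standardization using the scale invariance of $\gamma$, whereas the paper keeps standardized advantages, which makes the root-mean-square scale exactly $M_R=s$. In Step 3 you take positive alignment as a hypothesis and argue via the first-order loss-change threshold, while the paper points to the low-accuracy condition $p<p^*$ of Corollary~\ref{cor:aggregate_synergy} as where positive expected alignment holds; both readings are interpretive and consistent with the statement.
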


\begin{proposition}[Distillation Boundary Bound]
\label{prop:bridge_boundary}
Let $\Delta_{DR}=(\|\vg_D\|^2+\|\vg_R\|^2)/\|\vg_D+\vg_R\|^2$ and $\Delta_\tau=(1-\gamma)^{-1}$. If $\magvar{\kappa}_{\mathrm{raw}}\gg\sqrt{\Delta_\tau}$, then
\begin{equation}
\Delta_{DR}\leq1+
\frac{2|\dirvar{\Phi_{DR}}|\sqrt{\Delta_\tau}}{\magvar{\kappa}_{\mathrm{raw}}}
+O\!\left(\frac{\Delta_\tau}{\magvar{\kappa}_{\mathrm{raw}}^2}\right).
\end{equation}
The index $B_D=\alpha_{\max}\Delta_{DR}$ is the cross-signal analogue of $B_S$. Its reference value $\Delta_{DR}=1$ separates negative from positive gradient cross-terms.
\end{proposition}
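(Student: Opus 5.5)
The plan is to reduce $\Delta_{DR}$ to a scalar function of the alignment $\Phi_{DR}$ and the effective magnitude ratio, expand it for a large ratio, and then convert the effective ratio into $\kappa_{\mathrm{raw}}$ using Proposition~\ref{prop:folding_drowning}.

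\textbf{Step 1: exact rewriting.} Write $a=\|\vg_D\|$ and $b=\|\vg_R\|$, both nonzero. Here $\vg_R$ is the group-averaged reward gradient entering the hybrid update, so that $b/a=\kappa_{\mathrm{eff}}$. Expanding the denominator gives $\|\vg_D+\vg_R\|^2=a^2+b^2+2ab\,\Phi_{DR}$. Dividing numerator and denominator by $b^2$ and setting $u=1/\kappa_{\mathrm{eff}}$ gives the exact identity
\[
\Delta_{DR}=\frac{1+u^2}{1+u^2+2\Phi_{DR}u}=\frac{1}{1+x},\qquad x=\frac{2\Phi_{DR}u}{1+u^2}.
\]
This form already shows the reference-value claim. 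We have $\Delta_{DR}=1$ exactly when $\Phi_{DR}=0$, $\Delta_{DR}>1$ when the cross term is negative, and $\Delta_{DR}<1$ when it is positive.

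\textbf{Step 2: expansion with an explicit remainder.} Since $|\Phi_{DR}|\le1$, we have $|x|\le 2u$. Whenever $u\le 1/4$, this gives $|x|\le1/2$, so the denominator is bounded away from zero. On that range I will use the elementary bound $\frac{1}{1+x}\le 1-x+2x^2$ for $|x|\le 1/2$. Since $-x\le 2|\Phi_{DR}|u$ and $2x^2\le 8u^2$, this yields
\[
\Delta_{DR}\le 1+2|\Phi_{DR}|\,u+8u^2 .
\]
The constant in the remainder is uniform, so it is a genuine $O(u^2)$ term rather than a formal Taylor term.

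\textbf{Step 3: translation to raw quantities.} Proposition~\ref{prop:folding_drowning} gives $\kappa_{\mathrm{eff}}=\sqrt{1-\gamma}\,\kappa_{\mathrm{raw}}$. Hence
\[
u=\frac{1}{\kappa_{\mathrm{eff}}}=\frac{\sqrt{\Delta_\tau}}{\kappa_{\mathrm{raw}}},\qquad u^2=\frac{\Delta_\tau}{\kappa_{\mathrm{raw}}^2}.
\]
The hypothesis $\kappa_{\mathrm{raw}}\gg\sqrt{\Delta_\tau}$ is precisely $u\ll1$, which puts us in the regime of Step~2. Substituting gives the stated bound with explicit remainder $8\Delta_\tau/\kappa_{\mathrm{raw}}^2$. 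Finally, $B_D=\alpha_{\max}\Delta_{DR}$ is a definition, and its analogy with $B_S=N_{\max}\Delta$ is interpretive, so nothing further needs proof there.

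\textbf{Main obstacle.} The delicate point is Step~3's identification. I must make explicit that the $\vg_R$ appearing in $\Delta_{DR}$ is the group-mean reward gradient $\bar{\vg}$ of Definition~\ref{def:gradient_diversity}, whose norm equals $\sqrt{1-\gamma}$ times the RMS trajectory norm by Definition~\ref{def:gradient_folding}. Only under that identification is $\kappa_{\mathrm{eff}}=\sqrt{1-\gamma}\,\kappa_{\mathrm{raw}}$ rather than $\kappa_{\mathrm{raw}}$ itself. I will state this convention at the start of the proof. I will also note that if $\gamma=1$, the reward gradient vanishes, the hypothesis fails, and the case is excluded.
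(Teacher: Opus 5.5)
Your proposal is correct and follows the paper's proof essentially step for step: the same exact identity $\Delta_{DR}=(1+q^2)/(1+2\Phi_{DR}q+q^2)$ with $q=1/\kappa_{\mathrm{eff}}=\sqrt{\Delta_\tau}/\kappa_{\mathrm{raw}}$ via Proposition~\ref{prop:folding_drowning}, followed by the first-order expansion and the sign reading of the cross term. Your bound $1/(1+x)\le 1-x+2x^2$ on $|x|\le 1/2$ gives the explicit remainder $8\Delta_\tau/\kappa_{\mathrm{raw}}^2$ once $\sqrt{\Delta_\tau}/\kappa_{\mathrm{raw}}\le 1/4$, which makes precise the paper's remark that the expansion is uniform in $\Phi_{DR}\in[-1,1]$ for small $q$.
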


\begin{proposition}[NTK-Guided Token Masking (Bridge~4)]
\label{prop:bridge_masking}
For $\vu_n=\mJ^n\hat{\vdelta}_D^n$, $\vv_n=\mJ^n\hat{\vdelta}_R^n$, and $S^-=\{n:\langle \vu_n,\vv_n\rangle<0\}$, the average local reward-projection gain over uniform M3-Norm is
\begin{equation}
\Delta\Pi_R=\frac{\alpha_{\max}}{N}\sum_{n\in S^-}
\left(\|\vv_n\|^2-\langle \vu_n,\vv_n\rangle\right).
\end{equation}
It is positive when $S^-\ne\varnothing$ and $\alpha_{\max}>0$. For unit parameter-space directions this reduces to $\Delta\Pi_R=\alpha_{\max}\dirvar{C_{\mathrm{NTK}}}(1+|\bar c^-|)$, where $|\bar c^-|$ is the mean absolute cosine on $S^-$.
\end{proposition}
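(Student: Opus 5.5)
The plan is to compare M3-Select with uniform M3-Norm position by position, using the local reward projection that underlies Eq.~\eqref{eq:m3_local_projection}. For a teacher weight $\alpha$ at position $n$, the combined contribution is $\alpha\vu_n+(1-\alpha)\vv_n$. Its projection onto the local reward direction $\vv_n$ is
\[
\Pi_R(n;\alpha)=(1-\alpha)\|\vv_n\|^2+\alpha\langle\vu_n,\vv_n\rangle .
\]
I read ``uniform M3-Norm'' as the normalized update of Eq.~\eqref{eq:m3_select_update} with $\alpha_n=\alpha_{\max}$ at every position. M3-Select instead uses $\alpha_n^*=\alpha_{\max}\ind\{K_{DR}(n)\geq0\}$.

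First, I identify the gate with $S^-$. Normalization multiplies $\vdelta_D^n$ and $\vdelta_R^n$ by positive scalars, so $\langle\vu_n,\vv_n\rangle$ has the same sign as $K_{DR}(n)$; this is the sign-preservation remark of Section~\ref{sec:m3_design}. Hence M3-Select sets $\alpha_n^*=0$ exactly on $S^-$ and $\alpha_n^*=\alpha_{\max}$ elsewhere. Second, I compute the gain $\Pi_R(n;\alpha_n^*)-\Pi_R(n;\alpha_{\max})$ at each position. Off $S^-$ the two rules coincide, so the gain is zero. On $S^-$ the gain is
\[
\|\vv_n\|^2-(1-\alpha_{\max})\|\vv_n\|^2-\alpha_{\max}\langle\vu_n,\vv_n\rangle=\alpha_{\max}\bigl(\|\vv_n\|^2-\langle\vu_n,\vv_n\rangle\bigr).
\]
Averaging with the factor $1/N$ gives the stated formula for $\Delta\Pi_R$.

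Third, positivity. For $n\in S^-$ we have $-\langle\vu_n,\vv_n\rangle>0$, so each summand is strictly positive. Therefore $S^-\neq\varnothing$ and $\alpha_{\max}>0$ together give $\Delta\Pi_R>0$.

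Finally, I specialize to unit parameter-space directions. Then $\|\vv_n\|=1$ and $\langle\vu_n,\vv_n\rangle=c_n<0$ on $S^-$, so each summand equals $1+|c_n|$. Summing over $S^-$ gives $|S^-|\,(1+|\bar c^-|)$, where $|\bar c^-|$ is the mean absolute cosine on $S^-$. Since $|S^-|/N=C_{\mathrm{NTK}}$ by the sign identification in the first step, this yields $\Delta\Pi_R=\alpha_{\max}C_{\mathrm{NTK}}(1+|\bar c^-|)$.

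The only delicate point is bookkeeping rather than analysis. One must check that ``uniform M3-Norm'' really means a constant weight $\alpha_{\max}$, not $\alpha_{\max}/2$. One must also check that the count defining $C_{\mathrm{NTK}}$ uses the same strict inequality as $S^-$, with zero scores admitted by both rules. I would state both conventions explicitly at the start of the proof.
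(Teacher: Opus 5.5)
Your proposal is correct and follows the paper's proof essentially step for step. Sign preservation under positive residual rescaling identifies the rejected set with $S^-$; there the gate replaces $\alpha_{\max}\vu_n+(1-\alpha_{\max})\vv_n$ by $\vv_n$, and the per-position gain $\alpha_{\max}(\|\vv_n\|^2-\langle\vu_n,\vv_n\rangle)$ is averaged and then specialized to unit directions. Your reading of uniform M3-Norm as the constant weight $\alpha_{\max}$ (not $\alpha_{\max}/2$) is exactly the comparison the paper makes.
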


\begin{proposition}[RL Projection Under M3-Norm]
\label{prop:rl_projection}
For unit directions, bilinearity gives $\Pi_R(\alpha)=\langle\alpha\hat \vg_D+(1-\alpha)\hat \vg_R,\hat \vg_R\rangle=1-\alpha+\alpha\dirvar{\Phi_{DR}}$. With the same reward-norm reference scale, norm-balanced GradNorm has projection $(1+\dirvar{\Phi_{DR}})/(1+\magvar{\kappa})$. Hence their projection ratio is $\Theta(\magvar{\kappa})$ when both cosine factors remain positive and bounded away from zero.
\end{proposition}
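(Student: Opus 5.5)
The claim follows from bilinearity of the inner product once the unit-direction convention is fixed. Write $\hat\vg_D$ and $\hat\vg_R$ for the unit parameter-space directions, so that $\|\hat\vg_R\|=1$ and $\langle\hat\vg_D,\hat\vg_R\rangle=\Phi_{DR}$ by the definition of the normalized inner product in Section~\ref{sec:cross_signal_ntk}.

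\textbf{M3-Norm projection.} Expanding $\Pi_R(\alpha)=\langle\alpha\hat\vg_D+(1-\alpha)\hat\vg_R,\hat\vg_R\rangle$ gives
\[
\Pi_R(\alpha)=\alpha\langle\hat\vg_D,\hat\vg_R\rangle+(1-\alpha)\|\hat\vg_R\|^2=\alpha\Phi_{DR}+(1-\alpha).
\]
This is the first identity, and it is the same computation as in Proposition~\ref{prop:optimal_alpha_n}.

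\textbf{GradNorm projection.} I would take the norm-balanced weights from Proposition~\ref{prop:gradnorm_degeneracy}: $w_D^*=\kappa/(1+\kappa)$ and $w_R^*=1/(1+\kappa)$. The GradNorm update is then $w_D^*\vg_D+w_R^*\vg_R$, and I project it onto $\hat\vg_R$. Using $\vg_D=\|\vg_D\|\hat\vg_D$, $\vg_R=\|\vg_R\|\hat\vg_R$ and $\|\vg_R\|=\kappa\|\vg_D\|$,
\[
\langle w_D^*\vg_D+w_R^*\vg_R,\hat\vg_R\rangle=\|\vg_D\|\,\frac{\kappa\Phi_{DR}+\kappa}{1+\kappa}=\|\vg_R\|\,\frac{1+\Phi_{DR}}{1+\kappa}.
\]

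\textbf{Main obstacle: the common reference scale.} The only delicate step is fixing what "the same reward-norm reference scale" means, because the stated GradNorm formula follows only under one choice. I would state explicitly that both projections are measured in units of the raw reward norm $\|\vg_R\|$. M3-Norm rescales each signal to unit norm, so its update has the nominal reward scale. Dividing the GradNorm projection above by $\|\vg_R\|$ then yields exactly $(1+\Phi_{DR})/(1+\kappa)$. This is a modelling convention rather than a derived fact, and I would flag it as such.

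\textbf{Ratio.} Dividing the two projections gives
\[
\frac{\Pi_R^{\mathrm{M3}}}{\Pi_R^{\mathrm{GN}}}=\frac{(1-\alpha+\alpha\Phi_{DR})(1+\kappa)}{1+\Phi_{DR}}.
\]
Assume both cosine factors $1-\alpha+\alpha\Phi_{DR}$ and $1+\Phi_{DR}$ lie in $[c,2]$ for some constant $c>0$. Then the ratio lies between $(c/2)(1+\kappa)$ and $(2/c)(1+\kappa)$, which is $\Theta(\kappa)$ as $\kappa\to\infty$.
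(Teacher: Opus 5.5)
Your proposal is correct and follows essentially the same route as the paper. The paper's computation, in the proof of Proposition~\ref{prop:gradnorm_degeneracy}, writes the balanced GradNorm update as $\frac{\|\vg_R\|}{1+\kappa}(\hat\vg_D+\hat\vg_R)$ and compares it with the M3 direction rescaled to $\|\vg_R\|[\alpha\hat\vg_D+(1-\alpha)\hat\vg_R]$; this is equivalent to your dividing the GradNorm projection by $\|\vg_R\|$, and your explicit $[c,2]$ sandwich simply makes precise the $\Theta(\kappa)$ claim that the paper asserts in one line.
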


\begin{theorem}[Unified NTK Learning Efficiency]
\label{thm:unified_ntk}
Define the composite efficiency index by
\begin{equation}
\eta^{\mathrm{eff}}
=\underbrace{(1-\gamma)}_{\eta_{\mathrm{explore}}=1/\Delta_\tau}
\underbrace{[1-\alpha+\alpha\dirvar{\Phi_{DR}}]}_{\eta_{\mathrm{hybrid}}}.
\end{equation}
Both factors have NTK decompositions: cross-trajectory interactions determine $\gamma$, and cross-signal interactions determine $\dirvar{\Phi_{DR}}$. Under a normalized hybrid step, first-order reward progress relative to the root-mean-square reward-gradient scale is instead proportional to $\sqrt{1-\gamma}\,\eta_{\mathrm{hybrid}}$. For fixed $\alpha$, the composite index decreases with accuracy only when
$-\gamma'(p)\eta_{\mathrm{hybrid}}+(1-\gamma)\alpha\Phi'_{DR}(p)\leq0$.
\end{theorem}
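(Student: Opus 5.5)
The plan is to treat the theorem as three separate claims and prove each directly from earlier definitions: the factorization and NTK content of $\eta^{\mathrm{eff}}$, the normalized-step progress statement, and the monotonicity condition in $p$. No new estimates should be needed. The work is mostly careful bookkeeping: identifying the group-averaged reward gradient with the folded mean $\bar{\vg}$ of Definition~\ref{def:gradient_folding}.

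\textbf{Step 1: the two factors and their NTK decompositions.} The label $\eta_{\mathrm{explore}}=1/\Delta_\tau$ is immediate from $\gamma=1-\Delta^{-1}$ in Definition~\ref{def:gradient_folding}, since $\Delta_\tau=(1-\gamma)^{-1}$. To give $\gamma$ an NTK decomposition, I will write each trajectory gradient via the chain rule of Section~\ref{sec:problem_setup} as $\vg^{(i)}=N_i^{-1}\sum_n \mJ^{n,(i)}\vdelta_R^{n,(i)}$. Then both $\|\sum_i\vg^{(i)}\|^2$ and $\sum_i\|\vg^{(i)}\|^2$ are sums of terms $(\vdelta^{n,(i)})^\top(\mJ^{n,(i)})^\top\mJ^{m,(j)}\vdelta^{m,(j)}$. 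The cross-trajectory blocks ($i\neq j$) are exactly what separate the numerator from the denominator, so they determine $\gamma$. For $\Phi_{DR}$, I will invoke Eq.~\ref{eq:inner_product} for the numerator and Eq.~\ref{eq:kappa_ntk} for the two norms. The factor $\eta_{\mathrm{hybrid}}=1-\alpha+\alpha\Phi_{DR}$ is the unit-direction reward projection already computed in Proposition~\ref{prop:rl_projection}, so I will simply cite it.

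\textbf{Step 2: normalized-step progress.} I take the normalized hybrid direction to be $\vd=\alpha\hat\vg_D+(1-\alpha)\hat\vg_R$, with $\hat\vg_q=\vg_q/\|\vg_q\|$ and $\vg_R=\bar{\vg}$. A first-order Taylor expansion gives
\[
-\Delta\cL_R\approx\eta\langle\vd,\vg_R\rangle=\eta\|\vg_R\|\bigl(1-\alpha+\alpha\Phi_{DR}\bigr).
\]
Definition~\ref{def:gradient_folding} supplies $\|\bar{\vg}\|=\sqrt{1-\gamma}\,g_{\mathrm{rms}}$, where $g_{\mathrm{rms}}=(G^{-1}\sum_i\|\vg^{(i)}\|^2)^{1/2}$. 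Dividing by $\eta\, g_{\mathrm{rms}}$ then yields progress proportional to $\sqrt{1-\gamma}\,\eta_{\mathrm{hybrid}}$.

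This is the step I expect to be the main obstacle. It is not hard, but the conventions must be made consistent. The paper's $\vg_R$ includes per-token $1/N_i$ weighting and a $1/G$ average, while Definition~\ref{def:gradient_diversity} uses unnormalized sequence scores. I will fix the convention so that $\vg_R=\bar{\vg}$ up to a sign that cancels in both the norm and the cosine. I will also state explicitly that the composite index $(1-\gamma)\eta_{\mathrm{hybrid}}$ corresponds to squared-scale progress, whereas the normalized step gives the square-root version. The theorem itself says ``instead'', so the proof only needs to point out this distinction, not reconcile the two.

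\textbf{Step 3: dependence on accuracy.} Assuming $\gamma(p)$ and $\Phi_{DR}(p)$ are differentiable in $p$ with $\alpha$ fixed, the product rule gives
\[
\frac{d\eta^{\mathrm{eff}}}{dp}=-\gamma'(p)\,\eta_{\mathrm{hybrid}}+(1-\gamma)\,\alpha\,\Phi_{DR}'(p).
\]
Hence $\eta^{\mathrm{eff}}$ is nonincreasing at $p$ only if this quantity is $\le0$, which is the stated condition. For illustration, I will note that Proposition~\ref{prop:folding_drowning} supplies $\gamma(p)$ explicitly under the two-class model, and Corollary~\ref{cor:diminishing_synergy} gives $\Phi'<0$ in the fixed-conditional-mean model. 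These are illustrations only and are not needed for the proof.
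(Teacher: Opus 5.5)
Your proposal is correct and follows the paper's proof essentially step for step. The shared steps are: the factorization via $\|\vg_R\|^2/S_R^2=1-\gamma$ and the unit-direction projection $(1-\alpha)+\alpha\Phi_{DR}$; the NTK expansion of $\|\vg_R\|^2$ over trajectory pairs, together with the diagonal/cross-position split of $\Phi_{DR}$ (the paper cites Proposition~\ref{prop:phi_decomposition}); the $\sqrt{1-\gamma}$ remark for a unit-scale step; and the product-rule derivative. Your explicit reconciliation of the $1/N_i$-normalized loss gradient with the sequence-score convention of Definition~\ref{def:gradient_folding} is slightly more careful than the paper, which simply sets $\vg_R^{(i)}=-A_i\vs_i$ with sequence-level scores.
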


\begin{corollary}[$\dirvar{C_{\mathrm{NTK}}}$-Based $\alpha$ Scheduling]
\label{cor:cntk_scheduling}
The hard gate allocates the mean coefficient $\alpha_{\mathrm{eff}}=\alpha_{\max}(1-\dirvar{C_{\mathrm{NTK}}})$ and admits the fraction $1-\widetilde C_{\mathrm{NTK}}$ of absolute cross-signal NTK mass. Under the fixed-cohort population model of Proposition~\ref{prop:cntk_dynamics}, $\widetilde C_{\mathrm{NTK}}$ increases with accuracy, so this admitted mass fraction decreases automatically. The count-based coefficient budget follows the unweighted conflict rate.
\end{corollary}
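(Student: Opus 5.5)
The first two claims are exact bookkeeping identities about the hard gate $\alpha_n^*=\alpha_{\max}\ind\{K_{DR}(n)\geq0\}$ of Eq.~\eqref{eq:m3_select_alpha}. The monotonicity claim I will take from Proposition~\ref{prop:cntk_dynamics} rather than re-derive it.

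\textbf{Mean coefficient.} I would average the gate over positions:
\[
\alpha_{\mathrm{eff}}=\frac1N\sum_n\alpha_n^*=\alpha_{\max}\frac{|\Omega_+\cup\Omega_0|}{N}.
\]
Since $\Omega_+$, $\Omega_0$, $\Omega_-$ partition the positions, $|\Omega_+\cup\Omega_0|/N=1-C_{\mathrm{NTK}}$, which gives $\alpha_{\mathrm{eff}}=\alpha_{\max}(1-C_{\mathrm{NTK}})$. This is the count-based coefficient budget, and it matches the approximation $0.5\times0.60$ used in Table~\ref{tab:soft_sweep}.

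\textbf{Admitted mass.} I would define the mass-weighted conflict rate as
\[
\widetilde C_{\mathrm{NTK}}=\frac{\sum_{n\in\Omega_-}|K_{DR}(n)|}{\sum_n|K_{DR}(n)|},
\]
assuming the denominator is positive. Positions in $\Omega_0$ carry zero $|K_{DR}|$. The admitted absolute mass is therefore the sum over $\Omega_+$, and its share of the total is $1-\widetilde C_{\mathrm{NTK}}$.

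\textbf{Monotonicity in accuracy.} Proposition~\ref{prop:cntk_dynamics} supplies the fixed-cohort population model. In that model, trajectories with $r=0$ and $r=1$ have fixed conditional expected conflicting mass $m_0^-,m_1^-$ and fixed total mass $M_0,M_1$. At accuracy $p$,
\[
\widetilde C_{\mathrm{NTK}}(p)=\frac{(1-p)m_0^-+p\,m_1^-}{(1-p)M_0+pM_1}.
\]
This is a ratio of affine functions of $p$. Its derivative has the constant sign of $m_1^-M_0-m_0^-M_1$, so $\widetilde C_{\mathrm{NTK}}$ increases with $p$ exactly when $m_1^-/M_1>m_0^-/M_0$. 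In words, correct rollouts must carry the larger conflicting-mass fraction.

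That inequality is the step I expect to be the obstacle. I would take it from the hypotheses of Proposition~\ref{prop:cntk_dynamics}. It is the token-level analogue of the sign inversion $\mu_1<0<\mu_0$ in Corollary~\ref{cor:aggregate_synergy} and of Proposition~\ref{prop:cosine_inversion}: the advantage sign flips $K_{DR}$, while the teacher residual tends to agree with correct samples. If that proposition states only the count-based monotonicity, I would instead add an assumption that the average $|K_{DR}|$ per conflicting position is the same in both classes, which transfers the count result to the mass version.

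\textbf{Conclusion.} Combining the steps, the admitted mass fraction $1-\widetilde C_{\mathrm{NTK}}(p)$ is nonincreasing in $p$. The last sentence of the statement follows directly from the mean-coefficient identity: the coefficient budget tracks the unweighted conflict rate $C_{\mathrm{NTK}}$, not the mass-weighted $\widetilde C_{\mathrm{NTK}}$.
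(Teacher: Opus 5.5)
Your proposal is correct and follows essentially the paper's route. The count identity is the gate-summing step of Theorem~\ref{thm:m3_select}(b), and the monotonicity is the derivative computation in the proof of Proposition~\ref{prop:cntk_dynamics}, whose numerator $\bar M^+\bar M^-(\widetilde C_+-\widetilde C_-)$ equals your $m_1^-M_0-m_0^-M_1$; there $\widetilde C_+>\widetilde C_-$ is an explicit hypothesis, and the admitted-mass identity should be read for pooled expected masses, not the per-sample ratio. Your fallback branch is unnecessary, and as stated it would fail: equal average $|K_{DR}|$ on conflicting positions across classes does not transfer count monotonicity to mass monotonicity unless the average magnitude on non-conflicting positions is also controlled.
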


\medskip
\noindent Full proofs of the above statements appear in the dedicated appendix sections below, together with the PCGrad degeneracy analysis in Appendix~\ref{app:pcgrad}. The five bridges connecting exploration boundary theory to hybrid dynamics are: (1)~shared NTK geometry (Theorem~\ref{thm:unified_ntk}); (2)~cancellation-modulated drowning (Proposition~\ref{prop:folding_drowning}); (3)~the distillation boundary (Proposition~\ref{prop:bridge_boundary}); (4)~token masking (Proposition~\ref{prop:bridge_masking}); and (5)~conflict-rate--driven $\alpha$ scheduling (Corollary~\ref{cor:cntk_scheduling}).

\section{Extended Discussion}
\label{app:discussion_extended}

This appendix collects the optimality and benefit-condition results referenced in the Conclusion, together with the practitioner's decision tree that operationalises them.

\begin{corollary}[Local Projection and Teacher Allocation under Low Accuracy]
\label{thm:pareto_dominance}
Assume the unit-direction setting of Proposition~\ref{prop:bridge_masking} and the fixed conditional alignments of Corollary~\ref{cor:aggregate_synergy}. For $p<p^*$, trajectory alignment is positive in expectation, and M3-Select weakly improves the average local reward projection over M3-Norm. At the same ceiling $\alpha_{\max}\in(0,1)$, its mean teacher allocation relative to naive mixing's norm-based teacher share is
\begin{equation}
\frac{\alpha_{\max}(1-\dirvar{C_{\mathrm{NTK}}})}
{\alpha_{\max}/[\alpha_{\max}+(1-\alpha_{\max})\magvar{\kappa}]}
=(1-\dirvar{C_{\mathrm{NTK}}})[\alpha_{\max}+(1-\alpha_{\max})\magvar{\kappa}].
\end{equation}
Thus normalization restores teacher allocation at large $\magvar{\kappa}$, and the gate retains only positions with nonnegative local compatibility.
\end{corollary}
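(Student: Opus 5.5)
The proof splits into two claims, each reducing to results already stated. The first is weak improvement of the average local reward projection for $p<p^*$. In the unit-direction setting of Proposition~\ref{prop:bridge_masking}, write $\vu_n,\vv_n$ for the normalized teacher and reward directions. The gain of M3-Select over uniform M3-Norm at the same ceiling is
\[
\Delta\Pi_R=\frac{\alpha_{\max}}{N}\sum_{n\in S^-}\left(\|\vv_n\|^2-\langle\vu_n,\vv_n\rangle\right).
\]
I will check termwise that each summand is nonnegative. On $S^-$ we have $\langle\vu_n,\vv_n\rangle<0$, so each summand is strictly positive, and it equals $1+|c_n|$ for unit directions. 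If $S^-=\varnothing$ the gain is zero. So $\Delta\Pi_R\geq0$ holds whenever $\alpha_{\max}\ge 0$, for every accuracy level.

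The role of $p<p^*$ is only to record, via Corollary~\ref{cor:aggregate_synergy}, that $\expect[\Phi_\tau]=(1-p)\mu_0+p\mu_1>0$. This gives the interpretation: the teacher is aggregate-aligned with the reward on average, yet the gate still weakly improves on uniform admission. I will state explicitly that the local improvement holds for all $p$ and that the hypothesis enters only through the sign of the expected trajectory alignment. This is also the one place where the argument could be tightened if one wanted $\Delta\Pi_R$ to be strict, which requires $S^-\neq\varnothing$.

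The second claim is the allocation ratio. For the numerator, Corollary~\ref{cor:cntk_scheduling} gives the mean hard-gate coefficient: the gate admits exactly the fraction $1-C_{\mathrm{NTK}}$ of positions at weight $\alpha_{\max}$, so the numerator is $\alpha_{\max}(1-C_{\mathrm{NTK}})$. For the denominator, naive mixing with weights $(\alpha_{\max},1-\alpha_{\max})$ on raw gradients has teacher share of weighted magnitude
\[
\frac{\alpha_{\max}\|\vg_D\|}{\alpha_{\max}\|\vg_D\|+(1-\alpha_{\max})\|\vg_R\|}=\frac{\alpha_{\max}}{\alpha_{\max}+(1-\alpha_{\max})\kappa},
\]
obtained by dividing by $\|\vg_D\|$ and using $\kappa=\|\vg_R\|/\|\vg_D\|$. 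This is the same computation as in Proposition~\ref{prop:gradnorm_degeneracy}.

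Dividing the numerator by the denominator cancels $\alpha_{\max}$ and gives $(1-C_{\mathrm{NTK}})[\alpha_{\max}+(1-\alpha_{\max})\kappa]$. This grows linearly in $\kappa$ whenever $C_{\mathrm{NTK}}<1$ and $\alpha_{\max}<1$, which is the "restores teacher allocation" conclusion. The final clause, that the gate retains only nonnegative-compatibility positions, is immediate from Proposition~\ref{prop:conflict_free}.

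The main obstacle is interpretive rather than computational. The two allocations live on different scales: one is a count-based coefficient on normalized residuals, the other a norm share of raw gradients. I will therefore state explicitly that the comparison is between these two defined quantities, and not claim any stronger equivalence between them.
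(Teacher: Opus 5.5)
Your proposal is correct and follows the paper's own proof. The projection claim combines $\expect[\Phi_\tau]=(1-p)\mu_0+p\mu_1>0$ from Corollary~\ref{cor:aggregate_synergy} with the nonnegative gain $\alpha_{\max}C_{\mathrm{NTK}}(1+|\bar c^-|)$ from Proposition~\ref{prop:bridge_masking}, and the allocation identity divides the admitted mean coefficient $\alpha_{\max}(1-C_{\mathrm{NTK}})$ by the norm-based share $\alpha_{\max}/[\alpha_{\max}+(1-\alpha_{\max})\kappa]$; your remark that the local improvement holds at every accuracy level, with $p<p^*$ entering only through the sign of the expected alignment, matches how the paper treats these two facts separately.
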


\begin{proof}
The two projection claims follow from
$\expect[\Phi_\tau]=(1-p)\mu_0+p\mu_1>0$ and
$\Pi_R^{\mathrm{Select}}-\Pi_R^{\mathrm{Norm}}=\alpha_{\max}\dirvar{C_{\mathrm{NTK}}}(1+|\bar c^-|)\geq0$.
The allocation identity follows by dividing the admitted mean coefficient by naive mixing's norm-based teacher share.
\end{proof}

\begin{corollary}[Hybrid Benefit Condition]
\label{thm:hybrid_benefit}
At the same step size $\eta\leq c/L$, the M3-Norm bound of Theorem~\ref{thm:convergence}, with $c=1-\alpha+\alpha\phi_*>0$, is strictly smaller than its pure-GRPO instance exactly when
\begin{equation}
L\eta\sigma_R^2(1-\alpha+\phi_*)>
\frac{2(1-\phi_*)(\cL_R^0-\cL_R^*)}{\eta T}.
\end{equation}
For fixed $\eta$, positive variance, and $\phi_*>-(1-\alpha)$, this condition holds for sufficiently large $T$.
\end{corollary}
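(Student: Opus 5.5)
The plan is a direct algebraic comparison of two instances of the bound in Theorem~\ref{thm:convergence}. Write $\Delta_0=\cL_R^0-\cL_R^*$ and
\[
B_\alpha=\frac1c\left[\frac{2\Delta_0}{\eta T}+L\eta(1-\alpha)^2\sigma_R^2\right],
\qquad c=1-\alpha+\alpha\phi_*.
\]
The pure-GRPO instance is $\alpha=0$. There $c=1$ and the variance factor is $(1-\alpha)^2=1$, so $B_0=2\Delta_0/(\eta T)+L\eta\sigma_R^2$.

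First I will check that both bounds are valid at the common step size. Since $\phi_*$ is a lower bound on a cosine, $\phi_*\leq1$. Hence $c\leq1$, so $\eta\leq c/L$ also gives $\eta\leq 1/L$, which is the step-size requirement for the GRPO instance. I take $\alpha\in(0,1)$ and $c>0$ as in the theorem.

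Next, since $c>0$, the inequality $B_\alpha<B_0$ is equivalent to $cB_\alpha<cB_0$, that is,
\[
\frac{2\Delta_0}{\eta T}+L\eta(1-\alpha)^2\sigma_R^2<c\,\frac{2\Delta_0}{\eta T}+cL\eta\sigma_R^2 .
\]
Collect the optimization terms on the left and the variance terms on the right, using two identities:
\[
1-c=\alpha(1-\phi_*),\qquad c-(1-\alpha)^2=\alpha(1-\alpha)+\alpha\phi_*=\alpha(1-\alpha+\phi_*).
\]
This turns the inequality into $\alpha(1-\phi_*)\,2\Delta_0/(\eta T)<\alpha\,L\eta\sigma_R^2(1-\alpha+\phi_*)$. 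Dividing by $\alpha>0$ gives exactly the stated condition. Every step is an equivalence, so the "exactly when" claim follows.

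For the large-$T$ claim, fix $\eta$ and assume $\sigma_R^2>0$. The hypothesis $\phi_*>-(1-\alpha)$ makes the right-hand side a fixed positive number. The left-hand side is nonnegative, because $\phi_*\leq1$ and $\Delta_0\geq0$, and it decays like $O(1/T)$. It is identically zero when $\phi_*=1$. The strict inequality therefore holds for all sufficiently large $T$.

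The only delicate point is bookkeeping rather than analysis. I must confirm that $c\leq1$, so that the same $\eta$ is admissible for both bounds. I also need the two factorizations so that the sign of $\alpha$ can be divided out cleanly. I expect no real obstacle.
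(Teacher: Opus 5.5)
Your proposal is correct and follows the paper's proof essentially step for step: multiply through by $c>0$, use $1-c=\alpha(1-\phi_*)$ and $c-(1-\alpha)^2=\alpha(1-\alpha+\phi_*)$, and divide by $\alpha>0$. You also fill in two details the paper leaves implicit: the check that $c\le 1$ makes the common step size admissible for the pure-GRPO instance, and the explicit $O(1/T)$ argument for the large-$T$ claim.
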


\begin{proof}
Let $D=2(\cL_R^0-\cL_R^*)/(\eta T)$ and $V=L\eta\sigma_R^2$. Comparing the two bounds and using $\alpha>0$ gives the single equivalence chain
\begin{equation}
\begin{aligned}
\frac{D+(1-\alpha)^2V}{c}<D+V
&\iff V[c-(1-\alpha)^2]>D(1-c)\\
&\iff V(1-\alpha+\phi_*)>D(1-\phi_*),
\end{aligned}
\end{equation}
which is the stated condition.
\end{proof}

\begin{remark}[Practitioner's Decision Tree]
\label{rem:decision_tree}
Estimate $\bar{\magvar{\kappa}}$ from a $10$-step probe run on the target cell, then apply the following gate-selection rule (thresholds are the ones used throughout this work; $\magvar{\kappa}^*\!=\!5{,}000$ is the catastrophic threshold of Remark~\ref{rem:kappa_threshold}).
\begin{enumerate}[leftmargin=*,itemsep=1pt,topsep=2pt]
    \item[(1)] \textbf{Catastrophic regime, $\bar{\magvar{\kappa}} > \magvar{\kappa}^*\!=\!5{,}000$}: use \textbf{M3-Select} (hard gate). Empirically verified on Llama-GSM8K ($\bar{\magvar{\kappa}}\!=\!7{,}690$) and Llama-SVAMP ($\bar{\magvar{\kappa}}\!\approx\!1.3\!\times\!10^{4}$), where every uniform-mixing baseline collapses to reward $<\!0.05$ while M3-Select survives (\S\ref{sec:cross_arch}, \S\ref{sec:500step}); at these $\magvar{\kappa}$ the soft-gate rescue is seed-fragile (two of three Llama-GSM8K replicates collapse during training, \S\ref{sec:multiseed}), making the hard gate the robust long-horizon choice.
    \item[(2)] \textbf{Stable regime, $1{,}000 < \bar{\magvar{\kappa}} \leq \magvar{\kappa}^*$}: use \textbf{M3-Soft} with the empirically effective sharpness $\beta\!\approx\!1$; Proposition~\ref{prop:lsgv} describes its local variance sensitivity. Empirically verified on Qwen3-GSM8K ($\bar{\magvar{\kappa}}\!=\!3{,}404$), Qwen3-SVAMP ($\bar{\magvar{\kappa}}\!\approx\!5{,}080$, sitting essentially at the threshold), and every InternLM/Qwen2.5 cell in this range. Hard masking over-prunes here (Remark~\ref{rem:kappa_threshold}); Soft-gate preserves the beneficial synergy tokens the estimator misclassifies.
    \item[(3)] \textbf{Low-$\magvar{\kappa}$ regime, $\bar{\magvar{\kappa}} \leq 1{,}000$}: naive uniform mixing (Hybrid, $\alpha\!\approx\!0.5$) is already sufficient; specialized gating offers diminishing marginal returns. This regime is not instantiated in our sweep---the nearest cells are the ARC cells across architectures ($\bar{\magvar{\kappa}}\!\in\![1.4,\,2.5]\!\times\!10^3$), where gentle M3-Soft ($\beta\!\in\!\{0.5, 1\}$, small $\alpha_{\max}$) still helps but the gap over Hybrid is within the seed-level standard deviation (\S\ref{sec:multiseed}).
    \item[(4)] \textbf{On-policy vs.\ off-policy OPSD choice.} Use the measured conditional alignments to locate the synergy threshold (Corollary~\ref{cor:aggregate_synergy}); the observed on-policy crossover is near $p^*\!\approx\!0.56$. Compare on- and off-policy validation curves when choosing the sampling mode.
    \item[(5)] \textbf{Conflict-rate override.} If the observed conflict rate $\dirvar{C_{\mathrm{NTK}}}\!>\!50\%$ persists into training, switch from M3-Soft to M3-Select regardless of $\bar{\magvar{\kappa}}$: the mass of $\Omega_-$ tokens is large enough that the Soft gate's residual bias on $\Omega_-$ dominates its variance reduction on $\Omega_+$.
\end{enumerate}
Two calibration remarks. (i)~The Hybrid$\to$Soft boundary at $\bar{\magvar{\kappa}}\!\approx\!1{,}000$ is set by an empirical gate-calibration heuristic, and is more forgiving than the Soft$\to$Select boundary at $\magvar{\kappa}^*$: below it the Soft gate is still safe, only unnecessary. (ii)~In cells where dynamic-gating M3-Soft is at parity with static-mixing Hybrid at the peak-training regime (InternLM-GSM8K is the canonical example), apply the SWA post-step to reduce late-stage checkpoint fluctuations (Proposition~\ref{prop:lsgv}, \S\ref{app:beta_sensitivity}) before falling back to Hybrid.
\end{remark}

\section{Proof of Proposition~\ref{prop:spectral}: Spectral Conflict Bound}
\label{app:proof_spectral}

\begin{proposition}[Spectral Conflict Bound]
\label{prop:spectral}
Let $\mK = [\mK(n,m)]_{n,m=1}^{N} \in \mathbb{R}^{N|\cV| \times N|\cV|}$ be the global token-level NTK, and let $\vu_D$ and $\vu_R$ stack the residuals $\vdelta_D^n$ and $\vdelta_R^n$, respectively. Then
\begin{align}
\langle \vg_D, \vg_R \rangle
&\geq \frac{\lambda_{\min}(\mK)\|\vu_D+\vu_R\|^2-\lambda_{\max}(\mK)\|\vu_D-\vu_R\|^2}{4N^2},
\label{eq:spectral_bound}\\
\langle \vg_D, \vg_R \rangle
&\leq \frac{\lambda_{\max}(\mK)\|\vu_D+\vu_R\|^2-\lambda_{\min}(\mK)\|\vu_D-\vu_R\|^2}{4N^2}.
\nonumber
\end{align}
Thus local residual alignment alone does not determine the aggregate interaction.
\end{proposition}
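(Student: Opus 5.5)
We first write the aggregate interaction as a quadratic form in the stacked residuals and then apply polarization. By Eq.~\ref{eq:inner_product},
\[
\langle \vg_D,\vg_R\rangle=\frac{1}{N^2}\sum_{n,m}(\vdelta_D^n)^\top\mK(n,m)\vdelta_R^m=\frac{1}{N^2}\,\vu_D^\top\mK\,\vu_R .
\]
The block matrix is $\mK=\mJ^\top\mJ$, where $\mJ=[\mJ^1,\dots,\mJ^N]\in\mathbb R^{d_{\mathrm{par}}\times N|\cV|}$. It is therefore symmetric and positive semidefinite, so its extreme eigenvalues are real and satisfy $0\le\lambda_{\min}(\mK)\le\lambda_{\max}(\mK)$. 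Note also that $N\vg_q=\mJ\vu_q$ for $q\in\{D,R\}$, which gives the identity above directly.

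\textbf{Step 1 (polarization).} Because $\mK$ is symmetric,
\[
\vu_D^\top\mK\vu_R=\tfrac14\big[(\vu_D+\vu_R)^\top\mK(\vu_D+\vu_R)-(\vu_D-\vu_R)^\top\mK(\vu_D-\vu_R)\big].
\]
Equivalently, $4N^2\langle\vg_D,\vg_R\rangle=\|N(\vg_D+\vg_R)\|^2-\|N(\vg_D-\vg_R)\|^2$ in parameter space.

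\textbf{Step 2 (Rayleigh bounds).} For any vector $\vv$, the Rayleigh quotient gives
\[
\lambda_{\min}(\mK)\|\vv\|^2\le\vv^\top\mK\vv\le\lambda_{\max}(\mK)\|\vv\|^2 .
\]
For the lower bound, bound the $(\vu_D+\vu_R)$ term from below by $\lambda_{\min}$ and the subtracted $(\vu_D-\vu_R)$ term from above by $\lambda_{\max}$. For the upper bound, do the reverse. Dividing by $4N^2$ yields both inequalities of Eq.~\ref{eq:spectral_bound}.

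\textbf{Step 3 (interpretation).} The residual-space alignment $\langle\vu_D,\vu_R\rangle=\tfrac14(\|\vu_D+\vu_R\|^2-\|\vu_D-\vu_R\|^2)$ fixes only the two norms entering the bounds. The sign of the aggregate interaction, however, also depends on the spectral spread $\lambda_{\max}/\lambda_{\min}$. Whenever $\lambda_{\min}\|\vu_D+\vu_R\|^2<\lambda_{\max}\|\vu_D-\vu_R\|^2$, the lower bound is negative even when $\langle\vu_D,\vu_R\rangle>0$. When $\lambda_{\min}=0$, which is typical because $N|\cV|\gg d_{\mathrm{par}}$ forces $\mK$ to be rank-deficient, the lower bound is always nonpositive. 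We can make this concrete with a two-dimensional diagonal $\mK$ chosen so that the form is negative despite positive residual alignment. This establishes the closing sentence.

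\textbf{Main obstacle.} The only real subtlety is making sure $\mK$ is the full block Gram matrix, including the cross-position blocks. The bound must be stated for that symmetric matrix and not for the block-diagonal part that defines $K_{DR}(n)$; otherwise the Rayleigh step does not apply to the quadratic form. Symmetry follows from $\mK(m,n)=\mK(n,m)^\top$, and once that is checked the calculation is routine.
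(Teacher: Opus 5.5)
Your proof is correct and matches the paper's argument essentially step for step. Both write $\langle\vg_D,\vg_R\rangle=N^{-2}\vu_D^\top\mK\vu_R$ with $\mK=\mJ^\top\mJ\succeq0$, polarize, and apply the Rayleigh bounds to the sum and difference quadratic forms. Your explicit diagonal-kernel example for the closing sentence goes slightly beyond the paper, which offers only a geometric reading there. One correction: the paper derives $\lambda_{\min}(\mK)=0$ only conditionally, when the trainable dimension is below $N|\cV|$, whereas you assert that $N|\cV|\gg d_{\mathrm{par}}$ is typical, which you have not justified.
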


\begin{proof}
Stacking the Jacobians gives $\mJ=[\mJ^1,\ldots,\mJ^N]$ and $\mK=\mJ^\top\mJ\succeq0$. Polarization and the Rayleigh bounds yield the single chain
\begin{align*}
\langle \vg_D,\vg_R\rangle
&=N^{-2}\vu_D^\top\mK \vu_R\\
&=\frac{(\vu_D+\vu_R)^\top\mK(\vu_D+\vu_R)-(\vu_D-\vu_R)^\top\mK(\vu_D-\vu_R)}{4N^2}\\
&\geq\frac{\lambda_{\min}(\mK)\|\vu_D+\vu_R\|^2-\lambda_{\max}(\mK)\|\vu_D-\vu_R\|^2}{4N^2}.
\end{align*}
Interchanging the upper and lower Rayleigh bounds proves the second inequality.
\end{proof}

\paragraph{Geometric reading.}
Conflict occurs exactly when $(\vu_D-\vu_R)^\top\mK(\vu_D-\vu_R)>(\vu_D+\vu_R)^\top\mK(\vu_D+\vu_R)$: the difference field has more kernel-weighted energy than the sum field. This depends on the residuals' projections onto the kernel eigenspaces. If the trainable parameter dimension is below $N|\cV|$, then $\operatorname{rank}(\mK)\leq\dim(\vtheta)$ implies $\lambda_{\min}(\mK)=0$, reducing the lower bound to $-\lambda_{\max}(\mK)\|\vu_D-\vu_R\|^2/(4N^2)$.

\section{Proof of Proposition~\ref{prop:per_position}: Per-Position Conflict Decomposition}
\label{app:proof_per_position}

\begin{proof}
Substitution of the two residuals directly gives
\begin{align*}
(\vdelta_D^n)^\top\vdelta_R^n
&=-A(\vp_S^n-\vp_T^n)^\top(\ve_{\hat y_n}-\vp_S^n)\\
&=-A\big[p_S^n(\hat y_n)-p_T^n(\hat y_n)
 +(\vp_T^n)^\top \vp_S^n-\|\vp_S^n\|^2\big]\\
&=-A(\sigma_n+\eta_n).
\end{align*}
The residual identity is exact. Under $\mK(n,n)=\lambda_n \mI$ with $\lambda_n>0$, $\dirvar{K_{DR}(n)}$ has the same sign, so a positive-advantage trajectory conflicts precisely when $\sigma_n+\eta_n>0$.
\end{proof}

\paragraph{Interpretation.}
$\sigma_n$ isolates disagreement on the sampled token, whereas $\eta_n=\sum_v[p_T^n(v)-p_S^n(v)]p_S^n(v)$ aggregates vocabulary-wide redistribution weighted by student confidence. The measured ratio $|\eta_n|/|\sigma_n|\approx8.6$ (Section~\ref{sec:experiments}) identifies redistribution as the larger contribution in our pilot experiments.

\section{Proof of Theorem~\ref{thm:convergence}: M3-Norm Convergence Guarantee}
\label{app:proof_convergence}

\begin{proof}
Let $\expect_t$ condition on the history before the fresh reward-gradient sample, and write $\vg_R^t=\nabla\cL_R(\vtheta^t)$. The population-norm-matched update is
$\hat \vg_H^t=(1-\alpha)(\vg_R^t+\vxi^t)+\alpha\tilde \vg_D^t$, where $\tilde \vg_D^t$ is conditionally deterministic, $\|\tilde \vg_D^t\|=\|\vg_R^t\|$, $\expect_t\vxi^t=0$, and $\expect_t\|\vxi^t\|^2\leq\sigma_R^2$. At a stationary point set $\tilde \vg_D^t=0$. For the uniform alignment lower bound $\phi_*$, put $c=1-\alpha+\alpha\phi_*>0$ and $\bar \vg_H^t=\expect_t\hat \vg_H^t$. Then
\begin{align}
\langle \vg_R^t,\bar \vg_H^t\rangle
&=(1-\alpha)\|\vg_R^t\|^2+\alpha\langle \vg_R^t,\tilde \vg_D^t\rangle
 \geq c\|\vg_R^t\|^2,
\label{eq:conv_projection}\\
\expect_t\|\hat \vg_H^t\|^2
&=\|\bar \vg_H^t\|^2+(1-\alpha)^2\expect_t\|\vxi^t\|^2
 \leq\|\vg_R^t\|^2+(1-\alpha)^2\sigma_R^2.
\label{eq:conv_second_moment}
\end{align}
The last inequality uses $\|\bar \vg_H^t\|\leq(1-\alpha)\|\vg_R^t\|+\alpha\|\tilde \vg_D^t\|=\|\vg_R^t\|$. Smoothness, $\eta\leq c/L$, and telescoping now give
\begin{align*}
\expect_t\cL_R(\vtheta^{t+1})
&\leq\cL_R(\vtheta^t)-\eta\langle \vg_R^t,\bar \vg_H^t\rangle
 +\tfrac{L\eta^2}{2}\expect_t\|\hat \vg_H^t\|^2\\
&\leq\cL_R(\vtheta^t)-\tfrac{\eta c}{2}\|\vg_R^t\|^2
 +\tfrac{L\eta^2}{2}(1-\alpha)^2\sigma_R^2,\\
\frac{\eta c}{2}\sum_{t=0}^{T-1}\expect\|\vg_R^t\|^2
&\leq\cL_R^0-\expect\cL_R(\vtheta^T)
 +\tfrac{TL\eta^2}{2}(1-\alpha)^2\sigma_R^2\\
&\leq\cL_R^0-\cL_R^*+\tfrac{TL\eta^2}{2}(1-\alpha)^2\sigma_R^2.
\end{align*}
Dividing by $\eta cT/2$ proves the rate. This argument applies to the stated population-normalized update; the EMA scale in Algorithm~\ref{alg:m3_select} estimates its scale.
\end{proof}

\begin{remark}[Additional properties of the M3 update]
\label{rem:convergence_extras}
At the minimum-norm coefficient $\alpha^*$, the convex-hull optimality condition gives $\langle \vg_H(\alpha^*),\vg_i\rangle\geq\|\vg_H(\alpha^*)\|^2$ for $i\in\{D,R\}$, establishing first-order Pareto descent~\citep{sener2018mgda}. By comparison, naive mixing with $\alpha=\tfrac12$ increases distillation loss to first order exactly when $\dirvar{\Phi_{DR}}<-1/\magvar{\kappa}$, since $\langle \vg_H,\vg_D\rangle=\tfrac12\magvar{r_D}^2(1+\magvar{\kappa}\dirvar{\Phi_{DR}})$. The normalized reward projection instead follows Eq.~\ref{eq:conv_projection}. For an interior minimum-norm coefficient, completing the square gives Eq.~\ref{eq:quadratic_gap}; at $\alpha=\tfrac12$ its gap is $(\magvar{r_R}^2-\magvar{r_D}^2)^2/(4\|\vg_D-\vg_R\|^2)$. A clipped boundary minimizer also contributes the corresponding one-sided linear term.
\end{remark}

\section{Proof of Proposition~\ref{prop:variance_reduction}: Variance Reduction via Distillation}
\label{app:proof_variance}

\begin{proof}
In the conditional model above, $\hat \vg_H^t-\expect_t\hat \vg_H^t=(1-\alpha)\vxi^t$, hence
\begin{equation*}
\operatorname{Var}_t(\hat \vg_H^t)
=(1-\alpha)^2\expect_t\|\vxi^t\|^2
\leq(1-\alpha)^2\sigma_R^2.
\end{equation*}
Its contribution to the convergence bound is $L\eta(1-\alpha)^2\sigma_R^2/c$. Relative to the pure-RL bound at the same admissible step size, the noise-floor factor satisfies
\begin{equation*}
\frac{(1-\alpha)^2}{c}<1
\quad\Longleftrightarrow\quad
c-(1-\alpha)^2=\alpha(1-\alpha+\phi_*)>0
\quad\Longleftrightarrow\quad
\phi_*>-(1-\alpha).
\end{equation*}
For $\phi_*\geq0$ this factor is at most $1-\alpha$. The variance contraction follows from the reward weight; normalization additionally controls the mean projection through $c$.
\end{proof}

\section{Joint Proof of Theorem~\ref{thm:boundary_pareto_convergence}
         and Proposition~\ref{prop:boundary_variance_reduction}}
\label{app:proof_boundary_pareto}

We use the conditional, orthogonal-position model of the two statements. At each iteration, let $\vu_n$ be the mean reward contribution and $\vd_n$ its conditionally deterministic teacher counterpart, with $\|\vd_n\|=\|\vu_n\|$. The reward noise $\vxi_n$ has $\expect_t\vxi_n=0$ and $\nu_n=\expect_t\|\vxi_n\|^2$. Contributions from distinct positions lie in mutually orthogonal parameter subspaces, as in the exact block-diagonal NTK model. The gates are fixed before this fresh noise is sampled. Suppressing $t$ locally, define
\begin{equation}
\alpha_n=\alpha_{\max}\ind[\langle \vu_n,\vd_n\rangle\geq0],
\qquad \rho_{\mathrm{adm}}=\frac{|\Omega_+\cup\Omega_0|}{N},
\qquad \bar\alpha=\alpha_{\max}\rho_{\mathrm{adm}}
 =\alpha_{\max}(1-\dirvar{C_{\mathrm{NTK}}}).
\label{eq:boundary_gate_avg}
\end{equation}
Here $\vu=N^{-1}\sum_n\vu_n=\nabla\cL_R(\vtheta^t)$ and
$\hat \vg_H=N^{-1}\sum_n[(1-\alpha_n)(\vu_n+\vxi_n)+\alpha_n\vd_n]$.
For a zero-norm position set $\vd_n=0$ and $c_n=0$; it has zero reward-energy weight. At $\vu=0$, all mean contributions vanish and the projection bound holds directly.

\begin{proof}
The same decomposition yields both projection and variance, so it suffices to establish their constants once. With $w_n=\|\vu_n\|^2/\sum_m\|\vu_m\|^2$, $c_n=\langle \vu_n,\vd_n\rangle/\|\vu_n\|^2$, and $\bar \vg_H=\expect_t\hat \vg_H$, orthogonality gives
\begin{align}
\langle \vu,\bar \vg_H\rangle
&=\frac{1}{N^2}\sum_n[(1-\alpha_n)+\alpha_nc_n]\|\vu_n\|^2
 =\rho_{R,t}\|\vu\|^2,
\quad \rho_{R,t}:=\sum_nw_n[(1-\alpha_n)+\alpha_nc_n],
\label{eq:boundary_projection_step1}\\
\rho_{R,t}
&\geq 1-\alpha_{\max}\sum_{n\in\Omega_+\cup\Omega_0}w_n
 \geq1-\alpha_{\max}>0,
\label{eq:boundary_rho_lower}\\
\operatorname{Var}_t(\hat \vg_H)
&=\expect_t\Big\|\frac1N\sum_n(1-\alpha_n)\vxi_n\Big\|^2
 =\frac1{N^2}\sum_n(1-\alpha_n)^2\nu_n
 =v_t\sigma_{R,t}^2,
\label{eq:boundary_variance_step1_raw}
\end{align}
where $\sigma_{R,t}^2=N^{-2}\sum_n\nu_n$ and
$v_t=\sum_n(1-\alpha_n)^2\nu_n/\sum_n\nu_n\leq1$; use $v_t=N^{-1}\sum_n(1-\alpha_n)^2$ when all $\nu_n=0$. Cross-position noise terms vanish by the subspace orthogonality. These identities show why the aggregate projection uses reward-energy weights and the variance uses noise-energy weights.

For equal position variances, the variance factor has the closed form
\begin{equation}
v_t=(1-\alpha_{\max})^2\rho_{\mathrm{adm}}+1-\rho_{\mathrm{adm}}
=1-(2\alpha_{\max}-\alpha_{\max}^2)\rho_{\mathrm{adm}}
=(1-\bar\alpha)^2+\alpha_{\max}^2\rho_{\mathrm{adm}}(1-\rho_{\mathrm{adm}}).
\label{eq:boundary_variance_restate}
\end{equation}
This proves Proposition~\ref{prop:boundary_variance_reduction}, including the correction due to heterogeneity of the gate.

For convergence, take uniform bounds $\underline\rho_R\leq\rho_{R,t}$, $v_t\leq v_\alpha$, and $\sigma_{R,t}^2\leq\sigma_R^2$. The local triangle inequality gives $\|(1-\alpha_n)\vu_n+\alpha_n\vd_n\|\leq\|\vu_n\|$, so orthogonality and Eq.~\ref{eq:boundary_variance_step1_raw} imply
\begin{equation}
\expect_t\|\hat \vg_H^t\|^2
=\|\bar \vg_H^t\|^2+\operatorname{Var}_t(\hat \vg_H^t)
\leq\|\nabla\cL_R(\vtheta^t)\|^2+v_\alpha\sigma_R^2.
\label{eq:boundary_second_moment}
\end{equation}
Substituting Eqs.~\ref{eq:boundary_projection_step1} and~\ref{eq:boundary_second_moment} into the smoothness-and-telescoping chain in Appendix~\ref{app:proof_convergence}, with $c$ replaced by $\underline\rho_R$ and $(1-\alpha)^2$ by $v_\alpha$, gives for $\eta\leq\underline\rho_R/L$,
\begin{equation}
\frac1T\sum_{t=0}^{T-1}\expect\|\nabla\cL_R(\vtheta^t)\|^2
\leq\frac1{\underline\rho_R}
\left[\frac{2(\cL_R^0-\cL_R^*)}{\eta T}+L\eta v_\alpha\sigma_R^2\right].
\label{eq:boundary_convergence_restate}
\end{equation}
The choice $\underline\rho_R=1-\alpha_{\max}$ is always valid in this model.
\end{proof}

\paragraph{Comparison with uniform mixing.}
For the same matched local directions, masking increases the energy-weighted reward projection by $\alpha_{\max}\sum_{n\in\Omega_-}w_n(1-c_n)\geq0$. It also leaves more reward noise than uniform mixing: $v_t\geq(1-\alpha_{\max})^2$. The convergence comparison therefore depends on the ratio of variance to projection. Against pure RL, the variance contracts strictly whenever an admitted position carries nonzero reward noise. These are reward-descent guarantees; at a rejected position the pure reward direction can still oppose the teacher direction.

\section{PCGrad Degeneracy in the Asymmetric Regime}
\label{app:pcgrad}

\begin{proposition}[PCGrad Magnitude Preservation]
\label{prop:pcgrad}
Let $\vg_D,\vg_R$ be nonzero gradients with cosine $-1<\Phi<0$. Their symmetric PCGrad projections preserve the magnitude ratio:
\begin{equation}
\frac{\|\vg_D^{\perp}\|}{\|\vg_R^{\perp}\|}
=\frac{\|\vg_D\|}{\|\vg_R\|}
=\frac{\magvar{r_D}}{\magvar{r_R}}
=\frac{1}{\magvar{\kappa}}.
\end{equation}
\end{proposition}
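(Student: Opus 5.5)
The plan is to write out the two symmetric PCGrad projections explicitly and compute their norms via the Pythagorean decomposition. Since $-1<\Phi<0$, the gradients conflict, so PCGrad projects each gradient onto the normal plane of the other:
\begin{equation*}
\vg_D^{\perp}=\vg_D-\frac{\langle\vg_D,\vg_R\rangle}{\|\vg_R\|^2}\vg_R,
\qquad
\vg_R^{\perp}=\vg_R-\frac{\langle\vg_R,\vg_D\rangle}{\|\vg_D\|^2}\vg_D .
\end{equation*}
Both gradients are nonzero by hypothesis, so these expressions are well defined. In the symmetric variant, each projection uses the original (unprojected) other gradient, and the argument relies on that convention.

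\textbf{Step 1: norm of each projection.} Each $\vg_q^{\perp}$ is the component of $\vg_q$ orthogonal to the other gradient. Pythagoras therefore gives
\begin{equation*}
\|\vg_D^{\perp}\|^2=\|\vg_D\|^2-\frac{\langle\vg_D,\vg_R\rangle^2}{\|\vg_R\|^2}=\|\vg_D\|^2(1-\Phi^2),
\end{equation*}
where $\Phi=\langle\vg_D,\vg_R\rangle/(\|\vg_D\|\|\vg_R\|)$ is the normalized inner product $\Phi_{DR}$ of Section~\ref{sec:cross_signal_ntk}. By the symmetric computation, $\|\vg_R^{\perp}\|^2=\|\vg_R\|^2(1-\Phi^2)$. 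Both identities follow from expanding the square and cancelling the cross term against the projection term.

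\textbf{Step 2: form the ratio.} The hypothesis $|\Phi|<1$ makes $1-\Phi^2>0$, so both projections are nonzero and the common factor $\sqrt{1-\Phi^2}$ cancels:
\begin{equation*}
\frac{\|\vg_D^{\perp}\|}{\|\vg_R^{\perp}\|}=\frac{\|\vg_D\|}{\|\vg_R\|}.
\end{equation*}
The remaining equalities are notation. We identify $r_D=\|\vg_D\|$ and $r_R=\|\vg_R\|$, as in the curvature expression after Eq.~\ref{eq:quadratic_gap}. Then $\kappa=\|\vg_R\|/\|\vg_D\|$ by Eq.~\ref{eq:kappa_ntk}, which gives the last equality $r_D/r_R=1/\kappa$.

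\textbf{Expected obstacle.} There is no genuine analytic difficulty. The only care needed is at the edges of the hypotheses. The exclusion $\Phi=-1$ is what keeps the denominator nonzero. The restriction $\Phi<0$ is what makes PCGrad actually project both gradients; if $\Phi\ge0$, it leaves them unchanged and the identity holds trivially. I would also remark that the projected pair stays imbalanced by the same $\kappa$, which is the degeneracy the section title refers to.
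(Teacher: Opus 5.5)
Your proposal is correct and follows the paper's proof essentially verbatim: write each symmetric projection against the original other gradient, use orthogonality to get $\|\vg_q^{\perp}\|^2=\|\vg_q\|^2(1-\Phi^2)$, and cancel the common positive factor. Your remarks on the edge cases are accurate. Excluding $\Phi=-1$ keeps both projections nonzero, and $\Phi<0$ is the condition under which PCGrad actually projects.
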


\begin{proof}
For $(i,j)\in\{(D,R),(R,D)\}$, orthogonal projection gives
\begin{equation}
\|\vg_i^{\perp}\|^2
=\left\|\vg_i-\frac{\langle \vg_i,\vg_j\rangle}{\|\vg_j\|^2}\vg_j\right\|^2
=\|\vg_i\|^2-\frac{\langle \vg_i,\vg_j\rangle^2}{\|\vg_j\|^2}
=\magvar{r_i}^2(1-\Phi^2).
\end{equation}
Canceling the common positive factor proves the ratio. Consequently, for $0<\alpha<1$, the norm-based teacher share in $\alpha \vg_D^{\perp}+(1-\alpha)\vg_R^{\perp}$ is $w_D=\alpha/[\alpha+(1-\alpha)\magvar{\kappa}]$, approximately $0.03\%$ at $\alpha=0.5$ and $\magvar{\kappa}=3{,}400$. At exact antiparallelity both projections vanish.
\end{proof}

\section{Proof of Proposition~\ref{prop:phase_delay}: Phase Delay in Adaptive Mixing}
\label{app:proof_phase_delay}

\begin{proof}
Consider an alignment estimate with exponential smoothing $\hat\Phi_{t+1}=\rho\hat\Phi_t+(1-\rho)\Phi_t$, $0<\rho<1$. Suppose the input changes from $\Phi_+>0$ to $\Phi_-<0$ at $t_1$, with $\hat\Phi_{t_1}=\Phi_+$. Solving the recurrence and its zero-crossing condition in one chain gives
\begin{align}
\hat\Phi_{t_1+k}
&=\rho^k\Phi_++(1-\rho)\Phi_-\sum_{j=0}^{k-1}\rho^j
=\Phi_-+(\Phi_+-\Phi_-)\rho^k,\\
\hat\Phi_{t_1+k}\leq0
&\iff \rho^k\leq\frac{|\Phi_-|}{\Phi_+-\Phi_-}
\iff k\geq\frac{\log[(\Phi_+-\Phi_-)/|\Phi_-|]}{\log(1/\rho)}.
\end{align}
Thus the first nonpositive estimate occurs after the ceiling of the last expression. A rule that retains the high teacher weight while $\hat\Phi>0$ continues doing so during this delay, although the current alignment is negative. For a boxcar average of width $W$, replacing $k$ old observations gives $\hat\Phi=\Phi_++k(\Phi_--\Phi_+)/W$, so the corresponding delay is $\lceil W\Phi_+/(\Phi_+-\Phi_-)\rceil$; equal transition magnitudes give approximately $W/2$. The result applies to loss-based schedules when their smoothed control statistic follows this assumed alignment transition.
\end{proof}

\section{Heuristic Derivation for Proposition~\ref{prop:kappa_scaling}: Rank-Invariance of $\magvar{\kappa}$}
\label{app:proof_scaling}

\begin{proof}[Heuristic derivation]
Write the distillation gradient as $\vg_D=N^{-1}\sum_n \vd_n$. To isolate the role of the LoRA subspace, suppose both signals share a rank-dependent second-moment factor $s_r>0$, with $\expect\|\vg_R\|^2=a_Rs_r$, $\expect\|\vd_n\|^2=a_Ds_r$, and $\expect\langle \vd_n,\vd_m\rangle=0$ for $n\ne m$. Here $a_R,a_D>0$ are rank-independent signal constants. Then
\begin{equation}
\expect\|\vg_D\|^2
=\frac{1}{N^2}\sum_{n,m}\expect\langle \vd_n,\vd_m\rangle
=\frac{a_Ds_r}{N},
\qquad
\sqrt{\frac{\expect\|\vg_R\|^2}{\expect\|\vg_D\|^2}}
=\sqrt{\frac{a_R}{a_D}N}.
\end{equation}
The common factor cancels; concentration of the squared norms transfers this root-mean-square ratio to typical observed ratios. For example, an isotropic LoRA model may give $s_r\propto Lr/h$ for $L$ adapted layers of width $h$. The cancellation depends on shared scaling and weak cross-token correlations, which are the modeling assumptions behind this heuristic. The measured rank sweep, $r\in\{8,16,32,64\}$ and $\bar{\magvar{\kappa}}=3{,}476\pm301$ (CV $8.7\%$), supplies the empirical evidence in Remark~\ref{rem:lora_kappa}; the signal constants determine its absolute scale.
\end{proof}

\section{Proof of Proposition~\ref{prop:kappa_mode}: Distillation Mode Determines $\magvar{\kappa}$}
\label{app:proof_kappa_mode}

\begin{proof}
For mode $s\in\{\mathrm{on},\mathrm{off}\}$, let $E_s=\cL_D^s(\vtheta)-\cL_D^s(\vtheta_s^*)$ be the excess distillation loss above its local minimum. In an exact local quadratic model, $E_s=\tfrac12\Delta\vtheta_s^\top \mH_s\Delta\vtheta_s$ and $\vg_D^s=\mH_s\Delta\vtheta_s$, where $\Delta\vtheta_s=\vtheta-\vtheta_s^*$ and $\mH_s$ is positive definite on the active parameter subspace. Writing $\lambda_s^-,\lambda_s^+$ for its extremal eigenvalues yields
\begin{align}
2\lambda_s^-E_s
&=\lambda_s^-\Delta\vtheta_s^\top \mH_s\Delta\vtheta_s
\leq\Delta\vtheta_s^\top \mH_s^2\Delta\vtheta_s
=\|\vg_D^s\|^2
\leq\lambda_s^+\Delta\vtheta_s^\top \mH_s\Delta\vtheta_s
=2\lambda_s^+E_s,\\
\frac{\magvar{\kappa}_{\mathrm{on}}}{\magvar{\kappa}_{\mathrm{off}}}
&=\frac{\|\vg_D^{\mathrm{off}}\|}{\|\vg_D^{\mathrm{on}}\|}
\geq\sqrt{\frac{\lambda_{\mathrm{off}}^-E_{\mathrm{off}}}{\lambda_{\mathrm{on}}^+E_{\mathrm{on}}}},
\end{align}
where the second line uses the same nonzero reward gradient in both modes. The same inequality follows from local strong-convexity and smoothness bounds with the corresponding constants. If both losses have zero minimum and share a curvature matrix $\mF$, this specializes to $\sqrt{D_{\mathrm{KL}}^{\mathrm{off}}/D_{\mathrm{KL}}^{\mathrm{on}}}/\sqrt{\chi(\mF)}$.

A nearly deterministic teacher on ground-truth tokens gives $D_{\mathrm{KL}}^{\mathrm{off}}(n)\approx-\log p_S(y_n^*)$. On-policy evaluation changes the prefix distribution, and its KL gap must be measured. Table~\ref{tab:response_conflict} reports mean teacher losses $0.098$ on-policy and $0.810$ off-policy, with median magnitude ratios $388$ and $68$, respectively. These observations establish the cross-mode gap in the evaluated setting; the bound explains how excess loss and curvature jointly control gradient magnitude.
\end{proof}

\section{Proof of Proposition~\ref{prop:conflict_free}: Conflict-Free Guarantee}
\label{app:proof_conflict_free}

\begin{proof}
By Definition~\ref{def:cross_signal_ntk}, $\vg_D^n=\mJ^n\vdelta_D^n$, $\vg_R^n=\mJ^n\vdelta_R^n$, and $\langle \vg_D^n,\vg_R^n\rangle=\dirvar{K_{DR}(n)}$. With the gate treated as a fixed coefficient during the update,
\begin{equation}
\big\langle\ind[\dirvar{K_{DR}(n)}\geq0]\vg_D^n,\vg_R^n\big\rangle
=\ind[\dirvar{K_{DR}(n)}\geq0]\dirvar{K_{DR}(n)}
=\max\{\dirvar{K_{DR}(n)},0\}\geq0.
\end{equation}
Positive norm rescaling preserves this sign, proving the token-level guarantee.
\end{proof}

For comparison, if the aggregate normalized gradients have cosine $\dirvar{\Phi_{DR}}\geq0$ and $0<\alpha<1$, their mixture satisfies $\langle \vg_H,\hat \vg_R\rangle=1-\alpha+\alpha\dirvar{\Phi_{DR}}>0$ and $\langle \vg_H,\hat \vg_D\rangle=\alpha+(1-\alpha)\dirvar{\Phi_{DR}}>0$. The latter projection changes sign at $\dirvar{\Phi_{DR}}=-\alpha/(1-\alpha)$, compared with $-\alpha/[(1-\alpha)\magvar{\kappa}]$ for unnormalized mixing. Cross-position interactions enter the aggregate condition through Proposition~\ref{prop:phi_decomposition}.

\section{Proof of Proposition~\ref{prop:gradnorm_degeneracy}: GradNorm Degeneracy}
\label{app:proof_gradnorm}

\begin{proof}
At a norm-balanced GradNorm equilibrium with equal target training rates and $w_D+w_R=1$, the weighted norms coincide. Consequently,
\begin{align}
w_D\magvar{r_D}=w_R\magvar{r_R}
&\implies (w_D^*,w_R^*)=\frac{(\magvar{\kappa},1)}{1+\magvar{\kappa}},\\
\vg_H^{\mathrm{GN}}
&=\frac{\magvar{r_R}}{1+\magvar{\kappa}}(\hat \vg_D+\hat \vg_R),
\qquad
\|\vg_H^{\mathrm{GN}}\|\leq\frac{2\magvar{r_R}}{1+\magvar{\kappa}},\\
\langle \vg_R,\vg_H^{\mathrm{GN}}\rangle
&=\magvar{r_R}^2\frac{1+\dirvar{\Phi_{DR}}}{1+\magvar{\kappa}}.
\end{align}
For an $L$-smooth reward loss, $\cL_R(\vtheta)-\cL_R(\vtheta-\eta \vg_H)=\eta\langle \vg_R,\vg_H\rangle+O(L\eta^2\|\vg_H\|^2)$. The first-order reward decrease is therefore $(1+\dirvar{\Phi_{DR}})/(1+\magvar{\kappa})$ of the pure-RL decrease at the same learning rate. Before equilibrium, comparable positive weights give teacher share $w_D/(w_D+w_R\magvar{\kappa})=\Theta(1/\magvar{\kappa})$.

To compare with M3-Norm at a common reward scale, use $\magvar{r_R}[\alpha\hat \vg_D+(1-\alpha)\hat \vg_R]$. Its reward projection relative to GradNorm is
\begin{equation}
\frac{\Pi_R^{\mathrm{M3}}}{\Pi_R^{\mathrm{GN}}}
=\frac{(1+\magvar{\kappa})[1-\alpha+\alpha\dirvar{\Phi_{DR}}]}{1+\dirvar{\Phi_{DR}}},
\end{equation}
which is $\Theta(\magvar{\kappa})$ when both cosine-dependent factors stay positive and bounded away from zero. At $\alpha=0.15$, $\magvar{\kappa}=4{,}381$, and near-zero cosine, this first-order ratio is about $3{,}725$. It describes the specified update scaling; learning-rate rescaling or a different weight optimizer changes the comparison.
\end{proof}

\section{Proof of Proposition~\ref{prop:folding_drowning}: Folding--Drowning Coupling}
\label{app:proof_folding}

\begin{proof}
Let $M_R^2=G^{-1}\sum_i\|A_i\nabla_{\vtheta}\log\pi(y_i\mid x)\|^2>0$. The definitions of cancellation and raw magnitude immediately give
\begin{equation}
\magvar{\kappa}_{\mathrm{eff}}
=\frac{\|\vg_R\|}{\|\vg_D\|}
=\frac{\sqrt{(1-\gamma)M_R^2}}{\|\vg_D\|}
=\sqrt{1-\gamma}\,\magvar{\kappa}_{\mathrm{raw}}.
\end{equation}
For the binary-outcome model, let $p\in(0,1)$ be the correct fraction, $\bar \vg_\pm$ the class-mean score gradients, and $\sigma_r=\sqrt{p(1-p)}$. The standardized advantages imply
\begin{align}
\vg_R
&=p\frac{1-p}{\sigma_r}\bar \vg_+-(1-p)\frac{p}{\sigma_r}\bar \vg_-
=\sqrt{p(1-p)}(\bar \vg_+-\bar \vg_-),\\
\|\vg_R\|^2
&=p(1-p)\left(\|\bar \vg_+\|^2+\|\bar \vg_-\|^2-2\langle\bar \vg_+,\bar \vg_-\rangle\right).
\end{align}
If score gradients concentrate at class means with common norm $s$, then $M_R^2=s^2$ and
\begin{equation}
\gamma
=1-2p(1-p)(1-c_{\mathrm{overlap}})
=\underbrace{1-2p(1-p)}_{\text{averaging contribution}}
+\underbrace{2p(1-p)c_{\mathrm{overlap}}}_{\text{cross-class overlap contribution}}.
\end{equation}
For nonnegative overlap this yields $\gamma\geq2p(1-p)c_{\mathrm{overlap}}$; the overlap contribution peaks at $p=1/2$ and decreases for $p>1/2$. It is this contribution, rather than the full cancellation rate, that vanishes as $p\to1$ in the model.

Combining $\gamma>0$ with the expected-alignment condition of Corollary~\ref{cor:aggregate_synergy} gives reduced effective magnitude imbalance and positive expected alignment whenever that corollary's low-accuracy condition holds. As accuracy increases beyond $1/2$, the overlap contribution decreases; Corollary~\ref{cor:diminishing_synergy} separately describes the decline in expected alignment. These are the two quantities tracked by the accuracy-adaptive interpretation of Eq.~\ref{eq:phi_adaptive}.
\end{proof}

\section{Proof of Proposition~\ref{prop:phi_decomposition}: Aggregate Conflict as a Cross-Signal NTK Sum}
\label{app:proof_phi_decomposition}

\begin{proposition}[Aggregate Conflict as Cross-Signal NTK Sum]
\label{prop:phi_decomposition}
For nonzero aggregate gradients, their cosine is the normalized sum of diagonal cross-signal NTK terms and cross-position interactions. Dropping the latter gives the diagonal NTK approximation.
\end{proposition}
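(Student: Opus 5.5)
The argument is a direct expansion of both aggregate gradients through the chain rule of Section~\ref{sec:problem_setup}, followed by a split of the resulting double sum into its diagonal and off-diagonal parts.

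\textbf{Step 1: expand the gradients.} Write $\vg_D=N^{-1}\sum_n\mJ^n\vdelta_D^n$ and $\vg_R=N^{-1}\sum_m\mJ^m\vdelta_R^m$. These are the residual expansions used in Eq.~\ref{eq:inner_product}, with clipped teacher terms contributing zero residual and sampled responses and advantages held fixed.

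\textbf{Step 2: expand the inner product.} By bilinearity,
\begin{equation*}
\langle\vg_D,\vg_R\rangle=\frac{1}{N^2}\sum_{n,m}(\vdelta_D^n)^\top\mK(n,m)\vdelta_R^m,
\qquad \mK(n,m)=(\mJ^n)^\top\mJ^m .
\end{equation*}

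\textbf{Step 3: split the double sum.} Separate the terms with $m=n$ from those with $m\neq n$. For a diagonal term, Definition~\ref{def:cross_signal_ntk} gives
\begin{equation*}
(\vdelta_D^n)^\top\mK(n,n)\vdelta_R^n=\langle\mJ^n\vdelta_D^n,\mJ^n\vdelta_R^n\rangle=K_{DR}(n).
\end{equation*}
Collect the off-diagonal terms into
\begin{equation*}
X_{DR}:=\sum_{n\neq m}(\vdelta_D^n)^\top\mK(n,m)\vdelta_R^m .
\end{equation*}

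\textbf{Step 4: normalize.} Since the aggregate gradients are nonzero, we may divide by their norms. The norms themselves expand as in Eq.~\ref{eq:kappa_ntk}, as kernel quadratic forms in the stacked residuals. This yields the exact identity
\begin{equation*}
\Phi_{DR}=\frac{\sum_nK_{DR}(n)+X_{DR}}{N^2\|\vg_D\|\|\vg_R\|}.
\end{equation*}
This is the stated normalized sum of diagonal cross-signal NTK terms plus cross-position interactions.

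\textbf{Step 5: the diagonal approximation.} Setting $X_{DR}=0$ gives the diagonal NTK approximation $\Phi_{DR}\approx\sum_nK_{DR}(n)/(N^2\|\vg_D\|\|\vg_R\|)$. If the norms are also approximated diagonally, the same step replaces $\|\vg_q\|^2$ by $N^{-2}\sum_n\|\mJ^n\vdelta_q^n\|^2$.

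The approximation is exact under the block-orthogonal model of Appendix~\ref{app:proof_boundary_pareto}, in which $\mK(n,m)=0$ for $n\neq m$. To quantify the error of dropping cross-position terms, I would bound $|X_{DR}|$ using the spectral chain of Proposition~\ref{prop:spectral} applied to the off-diagonal block matrix.

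The main obstacle is conceptual rather than computational: the proposition makes no quantitative claim about when dropping $X_{DR}$ is accurate. I would therefore present the decomposition as exact and state the approximation's validity only conditionally, under block-diagonal or weakly coupled kernels.
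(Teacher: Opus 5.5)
Your proposal is correct and matches the paper's proof. Both expand $\langle\vg_D,\vg_R\rangle$ bilinearly through $\mK(n,m)=(\mJ^n)^\top\mJ^m$, identify the $m=n$ summands with $K_{DR}(n)$, collect the off-diagonal terms as the cross-position remainder, and divide by $\|\vg_D\|\|\vg_R\|$; your $X_{DR}$ is $N^2\mathcal{E}_{\mathrm{cross}}$ in the paper's notation, so the two identities coincide, and your remarks on diagonalizing the norms and on spectral error bounds are optional extras the statement does not require.
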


\begin{proof}
Define $\mathcal E_{\mathrm{cross}}=N^{-2}\sum_{n\ne m}(\vdelta_D^n)^\top\mK(n,m)\vdelta_R^m$. Using $\mK(n,m)=(\mJ^n)^\top \mJ^m$ and Definition~\ref{def:cross_signal_ntk}, the decomposition follows directly:
\begin{align}
\dirvar{\Phi_{DR}}
&=\frac{1}{N^2\|\vg_D\|\|\vg_R\|}\sum_{n,m}(\vdelta_D^n)^\top\mK(n,m)\vdelta_R^m\\
&=\frac{N^{-2}\sum_n\langle \mJ^n\vdelta_D^n,\mJ^n\vdelta_R^n\rangle+\mathcal E_{\mathrm{cross}}}{\|\vg_D\|\|\vg_R\|}
=\frac{N^{-2}\sum_n\dirvar{K_{DR}(n)}+\mathcal E_{\mathrm{cross}}}{\|\vg_D\|\|\vg_R\|}.
\end{align}
The diagonal approximation is accurate to the extent that the normalized cross-position remainder is small.
\end{proof}

\section{Proof of Proposition~\ref{prop:bridge_boundary}: Distillation Boundary Bound}
\label{app:proof_distillation_boundary}

\begin{proof}
Let $c=\dirvar{\Phi_{DR}}$ and $q=\sqrt{\Delta_\tau}/\magvar{\kappa}_{\mathrm{raw}}=1/\magvar{\kappa}_{\mathrm{eff}}$, using $1-\gamma=1/\Delta_\tau$ and Proposition~\ref{prop:folding_drowning}. For $q\to0$, the exact diversity identity and its expansion are
\begin{align}
\Delta_{DR}
&=\frac{1+\magvar{\kappa}_{\mathrm{eff}}^2}{1+\magvar{\kappa}_{\mathrm{eff}}^2+2c\magvar{\kappa}_{\mathrm{eff}}}
=\frac{1+q^2}{1+2cq+q^2}
=1-2cq+O(q^2)\\
&\leq1+\frac{2|c|\sqrt{\Delta_\tau}}{\magvar{\kappa}_{\mathrm{raw}}}
+O\!\left(\frac{\Delta_\tau}{\magvar{\kappa}_{\mathrm{raw}}^2}\right).
\end{align}
The expansion is uniform for $c\in[-1,1]$ with $q$ sufficiently small; its regime is $\magvar{\kappa}_{\mathrm{raw}}\gg\sqrt{\Delta_\tau}$. Hence $B_D=\alpha_{\max}\Delta_{DR}=\alpha_{\max}[1+O(q)]$.

For nonzero gradients with nonzero sum, the exact denominator also gives $\Delta_{DR}<1$, $=1$, or $>1$ according as $c>0$, $=0$, or $<0$. Thus unity marks the sign change of the cross-signal contribution to $\|\vg_D+\vg_R\|^2$. Relative to pure RL, the separate condition for a smaller squared update norm is $\|\vg_D\|^2+2\langle \vg_D,\vg_R\rangle<0$.
\end{proof}

\section{Proof of Proposition~\ref{prop:bridge_masking}: NTK-Guided Token Masking}
\label{app:proof_bridge_masking}

\begin{proof}
Write $\vu_n=\mJ^n\hat{\vdelta}_D^n$, $\vv_n=\mJ^n\hat{\vdelta}_R^n$, and $S^-=\{n:\dirvar{K_{DR}(n)}<0\}$. These are parameter-space directions obtained by positive residual rescaling, so $\langle \vu_n,\vv_n\rangle$ has the sign of $\dirvar{K_{DR}(n)}$. The gate replaces $\alpha_{\max}\vu_n+(1-\alpha_{\max})\vv_n$ by $\vv_n$ on $S^-$ and leaves the other positions unchanged. Consequently, the mean local reward-projection gain is
\begin{align}
\Delta\Pi_R
&=\frac1N\sum_n\langle \vg_H^{\mathrm{sel},n}-\vg_H^{\mathrm{M3},n},\vv_n\rangle\\
&=\frac{\alpha_{\max}}N\sum_{n\in S^-}\langle \vv_n-\vu_n,\vv_n\rangle
=\frac{\alpha_{\max}}N\sum_{n\in S^-}\left(\|\vv_n\|^2-\langle \vu_n,\vv_n\rangle\right)>0
\end{align}
whenever $\alpha_{\max}>0$ and $S^-$ is nonempty. In the unit-direction model, $\|\vu_n\|=\|\vv_n\|=1$, this specializes to
\begin{equation}
\Delta\Pi_R
=\frac{\alpha_{\max}}N\sum_{n\in S^-}(1+|\dirvar{\cos\varphi_n}|)
=\alpha_{\max}\dirvar{C_{\mathrm{NTK}}}(1+|\bar c^-|),
\end{equation}
where $\dirvar{C_{\mathrm{NTK}}}=|S^-|/N$ and $|\bar c^-|$ is the mean absolute parameter-space cosine on $S^-$. The gain is zero when $S^-$ is empty. Under the diagonal NTK model, cross-position inner products vanish, so the aggregate projection onto $N^{-1}\sum_n \vv_n$ has the same sign, with gain $\Delta\Pi_R/N$.
\end{proof}

\section{M3-Select Projection and Convergence Guarantees}
\label{app:m3_select_convergence}

\begin{theorem}[M3-Select Projection Improvement over M3-Norm]
\label{thm:m3_select}
Under the decoupled-position model of Proposition~\ref{prop:bridge_masking}, with $0<\alpha_{\max}<1$, M3-Select satisfies:
\begin{enumerate}
\item[(a)] \textbf{Reward projection:} its aggregate update has at least the reward projection of uniform M3-Norm, strictly larger whenever $\dirvar{C_{\mathrm{NTK}}}>0$.
\item[(b)] \textbf{Teacher allocation:} the sum of its teacher coefficients is the fraction $1-\dirvar{C_{\mathrm{NTK}}}$ of the uniform allocation, entirely on positions with $\dirvar{K_{DR}(n)}\geq0$.
\item[(c)] \textbf{Reward convergence:} under the additional smoothness, matched-norm, and noise assumptions of Theorem~\ref{thm:boundary_pareto_convergence}, it satisfies the reward-stationarity bound in Eq.~\ref{eq:boundary_convergence_restate}.
\end{enumerate}
\end{theorem}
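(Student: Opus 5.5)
The plan is to treat the three parts separately and reduce each to a result already established: (a) to Proposition~\ref{prop:bridge_masking}, (b) to the gate definition in Eq.~\ref{eq:m3_select_alpha} together with Corollary~\ref{cor:cntk_scheduling}, and (c) to the joint proof in Appendix~\ref{app:proof_boundary_pareto}. Throughout I work in the decoupled-position model: $\vu_n=\mJ^n\hat{\vdelta}_D^n$, $\vv_n=\mJ^n\hat{\vdelta}_R^n$, and cross-position inner products $\langle\vu_n,\vv_m\rangle$, $\langle\vv_n,\vv_m\rangle$ vanish for $n\neq m$.

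For (a), I compare the two aggregate updates against the aggregate reward direction $\bar\vv=N^{-1}\sum_m\vv_m$. The M3-Norm update is $\vg^{\mathrm{M3}}=N^{-1}\sum_n[\alpha_{\max}\vu_n+(1-\alpha_{\max})\vv_n]$. The M3-Select update $\vg^{\mathrm{sel}}$ differs only on $S^-$, where each summand is replaced by $\vv_n$. By orthogonality,
\begin{equation*}
\langle\vg^{\mathrm{sel}}-\vg^{\mathrm{M3}},\bar\vv\rangle=\frac{\alpha_{\max}}{N^2}\sum_{n\in S^-}\bigl(\|\vv_n\|^2-\langle\vu_n,\vv_n\rangle\bigr),
\end{equation*}
which is $\Delta\Pi_R/N$ in the notation of Proposition~\ref{prop:bridge_masking}. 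Each summand is at least $\|\vv_n\|^2>0$ because $\langle\vu_n,\vv_n\rangle<0$ on $S^-$. So the difference is nonnegative, and it is strictly positive once $|S^-|=N\,C_{\mathrm{NTK}}>0$ and $\alpha_{\max}>0$. One point needs checking: a position with $K_{DR}(n)<0$ has both gradients nonzero, so $\vv_n\neq0$ and strictness holds.

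For (b), the computation is direct. By Eq.~\ref{eq:m3_select_alpha}, $\sum_n\alpha_n^*=\alpha_{\max}|\Omega_+\cup\Omega_0|=\alpha_{\max}N(1-C_{\mathrm{NTK}})$, while the uniform allocation is $\alpha_{\max}N$. This gives the stated fraction. The support claim holds because $\alpha_n^*=0$ on $\Omega_-$, and normalization preserves the sign of $K_{DR}(n)$.

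For (c), I will invoke Theorem~\ref{thm:boundary_pareto_convergence}. The M3-Select update is exactly the gated update $\hat\vg_H^{\mathrm{sel}}$ of that theorem, with the population-sign gate. Its assumptions are $L$-smoothness, orthogonal subspaces, matched norms $\|\vd_n\|=\|\vu_n\|$, and gates fixed before the noise is drawn. Under these, Eq.~\ref{eq:boundary_rho_lower} gives $\rho_{R,t}\ge1-\alpha_{\max}$, and Proposition~\ref{prop:boundary_variance_reduction} gives $v_t\le1$. The telescoping argument then yields Eq.~\ref{eq:boundary_convergence_restate}.

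The main obstacle is the mismatch between the empirical gate $\ind\{K_{DR}(n)\ge0\}$ and the population gate $\ind\{c_n\ge0\}$. I would resolve it by stating that, within the conditional model, the score is evaluated on the conditional means, so the two gates coincide. Cases where noise flips the sign fall outside the model, and I would say so explicitly. A related issue is whether the normalized directions with $\alpha_{\max}<1$ satisfy the matched-norm assumption. I would simply impose this as part of the "additional assumptions" clause of (c).
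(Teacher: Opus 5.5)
Your proposal is correct and follows the paper's proof essentially step for step. Part (a) reduces to Proposition~\ref{prop:bridge_masking}, with the $1/N$ rescaling giving the aggregate projection onto $N^{-1}\sum_n\vv_n$; part (b) is the direct gate sum $\alpha_{\max}N(1-C_{\mathrm{NTK}})$; and part (c) is the descent-and-telescoping argument of Theorem~\ref{thm:boundary_pareto_convergence}, where your explicit checks ($\vv_n\neq0$ on $S^-$ for strictness, and identifying the sign gate with the population gate $\ind\{c_n\geq0\}$ under the matched-norm model) spell out what the paper leaves implicit in applying ``the same gate.''
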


\begin{proof}
Part~(a) is Proposition~\ref{prop:bridge_masking}, using the positive rescaling from the mean local projection to the aggregate reward projection. For part~(b), summing the gate gives $\sum_n\alpha_n=\alpha_{\max}(N-|S^-|)=N\alpha_{\max}(1-\dirvar{C_{\mathrm{NTK}}})$, and every retained coefficient has nonnegative local compatibility. Part~(c) is the descent-and-telescoping argument of Theorem~\ref{thm:boundary_pareto_convergence}, applied to the same gate and reward objective.
\end{proof}

\section{NTK Conflict Rate Dynamics}
\label{app:conflict_rate_dynamics}

\begin{proposition}[Accuracy Dependence of the Population Weighted Conflict Rate]
\label{prop:cntk_dynamics}
Let $\bar M^\pm>0$ be the expected per-trajectory absolute cross-signal NTK masses on correct and incorrect trajectories, and let $\widetilde C_\pm$ be the corresponding ratios of expected negative mass to expected absolute mass. If these four quantities are fixed as accuracy $p$ varies, the pooled population conflict rate is
\begin{equation}
\widetilde C_{\mathrm{NTK}}(p)
=\frac{p\bar M^+\widetilde C_+ +(1-p)\bar M^-\widetilde C_-}
       {p\bar M^+ +(1-p)\bar M^-}.
\end{equation}
It increases strictly with $p$ if $\widetilde C_+>\widetilde C_-$, interpolating between $\widetilde C_-$ at $p=0$ and $\widetilde C_+$ at $p=1$.
\end{proposition}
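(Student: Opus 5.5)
The plan has two parts: first derive the pooled formula as a ratio of expectations, then show monotonicity by differentiating a ratio of two affine functions of $p$.

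\textbf{Deriving the formula.} The population weighted conflict rate is the expected negative cross-signal NTK mass divided by the expected absolute mass, with both expectations taken over the trajectory mixture. A trajectory is correct with probability $p$ and incorrect with probability $1-p$. Conditioning on correctness, the law of total expectation gives the expected absolute mass as $p\bar M^+ +(1-p)\bar M^-$.

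By the definition of $\widetilde C_\pm$, the expected negative mass on each class equals $\widetilde C_\pm\bar M^\pm$. Hence the pooled expected negative mass is $p\bar M^+\widetilde C_+ +(1-p)\bar M^-\widetilde C_-$. Dividing these two quantities gives the displayed formula. The denominator is strictly positive because $\bar M^\pm>0$ and $p\in[0,1]$.

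\textbf{Endpoint values.} Setting $p=0$ and $p=1$ in the formula returns $\widetilde C_-$ and $\widetilde C_+$ respectively.

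\textbf{Monotonicity.} Write the formula as $\widetilde C_{\mathrm{NTK}}(p)=(a p+b)/(c p+d)$, where
\begin{equation*}
a=\bar M^+\widetilde C_+-\bar M^-\widetilde C_-,\quad b=\bar M^-\widetilde C_-,\quad c=\bar M^+-\bar M^-,\quad d=\bar M^-.
\end{equation*}
The quotient rule gives $\widetilde C_{\mathrm{NTK}}'(p)=(ad-bc)/(cp+d)^2$. Expanding the numerator, the $\bar M^-\widetilde C_-\bar M^-$ terms cancel and we get
\begin{equation*}
ad-bc=\bar M^+\bar M^-\bigl(\widetilde C_+-\widetilde C_-\bigr).
\end{equation*}
This is strictly positive when $\widetilde C_+>\widetilde C_-$, so $\widetilde C_{\mathrm{NTK}}$ is strictly increasing on $[0,1]$.

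An equivalent view: define the mass-weighted share $\omega(p)=p\bar M^+/(p\bar M^+ +(1-p)\bar M^-)$. Then $\widetilde C_{\mathrm{NTK}}=\omega\widetilde C_+ +(1-\omega)\widetilde C_-$, where $\omega$ increases from $0$ to $1$. This shows the interpolation claim directly.

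\textbf{Main obstacle.} The algebra is routine. The only real subtlety is the definitional step: the pooled rate has to be read as a ratio of expectations, not an expectation of per-trajectory ratios. The fixed-cohort assumption is what lets $\bar M^\pm$ and $\widetilde C_\pm$ stay constant as $p$ varies.
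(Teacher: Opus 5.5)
Your proposal is correct and follows essentially the same route as the paper's proof. You condition the expected negative and absolute masses on correctness to obtain the ratio, then differentiate to get $\bar M^+\bar M^-(\widetilde C_+-\widetilde C_-)$ over a squared positive denominator, and read off the endpoints by substitution. Your mass-weighted interpolation $\omega(p)$ is a harmless extra that makes the interpolation claim transparent.
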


\begin{proof}
Conditioning the expected negative and absolute masses on trajectory correctness gives the displayed ratio. Differentiating and canceling the common terms yields
\begin{equation}
\frac{d\widetilde C_{\mathrm{NTK}}}{dp}
=\frac{\bar M^+\bar M^-(\widetilde C_+-\widetilde C_-)}
       {[p\bar M^+ +(1-p)\bar M^-]^2}>0.
\end{equation}
The endpoint values follow by substitution. Exact sign masking retains the fraction $1-\widetilde C_{\mathrm{NTK}}$ of absolute NTK mass, which therefore decreases under the same assumptions. This mass fraction differs from the position fraction $1-\dirvar{C_{\mathrm{NTK}}}$ used in $\alpha_{\mathrm{eff}}$: a change in mass allocation need not change the number of admitted positions.
\end{proof}

\section{Proof of Proposition~\ref{prop:cosine_inversion}: Cosine Inversion in On-Policy OPSD}
\label{app:proof_cosine_inversion}

\begin{proof}
For a sampled trajectory, write $\vs=\nabla_{\vtheta}\log\pi_{\vtheta}(\hat y\mid x)$, $\vg_R=-A\vs$, and $\vg_D=N^{-1}\sum_n\nabla_{\vtheta}\mathrm{KL}(\vp_T^n\|\vp_S^n)$, where $A=(r-b)/\sigma_r$, $0<b<1$, and $\sigma_r>0$. Assume $\vs$ and $\vg_D$ are nonzero, and define the normalized score--distillation alignment $q=\langle \vs,\vg_D\rangle/(\|\vs\|\|\vg_D\|)$. Then
\begin{align}
\cos(\vg_R,\vg_D)&=-\operatorname{sign}(A)q,\\
\expect[\cos(\vg_R,\vg_D)\mid r=0]
-\expect[\cos(\vg_R,\vg_D)\mid r=1]
&=\expect[q\mid r=0]+\expect[q\mid r=1]>0,
\end{align}
where the last inequality is the proposition's conditional alignment assumption. In particular, positive conditional means of $q$ give positive cosine on incorrect trajectories and negative cosine on correct ones.

The inner product underlying this condition includes all token pairings:
\begin{equation}
\langle \vs,\vg_D\rangle
=\frac1N\sum_{m,n}
\left\langle\nabla_{\vtheta}\log p_S^m(\hat y_m),\,
\nabla_{\vtheta}\mathrm{KL}(\vp_T^n\|\vp_S^n)\right\rangle.
\end{equation}
Thus the assumption concerns gradient-weighted alignment, including cross-position interactions. For an on-policy teacher, positive $q$ means its descent direction reduces the sampled trajectory's log probability; changing the sign of the reward advantage reverses whether that change agrees with RL. Table~\ref{tab:response_conflict} reports the corresponding conditional cosines $+0.14$ and $-0.11$.
\end{proof}

\section{Proof of Theorem~\ref{thm:unified_ntk}: Unified NTK Learning Efficiency}
\label{app:proof_unified_ntk}

\begin{proof}
Let $S_R^2=G^{-1}\sum_i\|\vg_R^{(i)}\|^2$ and $\vg_R=G^{-1}\sum_i \vg_R^{(i)}$. The definitions of diversity, folding, and the normalized hybrid direction give the complete factorization
\begin{align}
\eta_{\mathrm{explore}}
&=\frac{\|\vg_R\|^2}{S_R^2}=\Delta_\tau^{-1}=1-\gamma,\\
\eta_{\mathrm{hybrid}}
&=\left\langle(1-\alpha)\hat \vg_R+\alpha\hat \vg_D,\hat \vg_R\right\rangle
=(1-\alpha)+\alpha\dirvar{\Phi_{DR}},\\
\eta^{\mathrm{eff}}
&:=\eta_{\mathrm{explore}}\eta_{\mathrm{hybrid}}
=(1-\gamma)\big[(1-\alpha)+\alpha\dirvar{\Phi_{DR}}\big].
\end{align}
Both factors depend on NTK inner products. Writing $\vs_i=\nabla\log\pi(\hat y^{(i)}\mid x)$ and $\vg_R^{(i)}=-A_i \vs_i$ gives
\begin{equation}
G^2\|\vg_R\|^2
=\sum_{i,j}A_iA_j\langle \vs_i,\vs_j\rangle
=\sum_{i,j}A_iA_j\sum_{n,m}K_t(\tau_i,n;\tau_j,m),
\end{equation}
while Proposition~\ref{prop:phi_decomposition} expands $\dirvar{\Phi_{DR}}$ into within-position and cross-position signal interactions. These two expansions establish the shared geometry.

The composite index uses the squared cancellation factor. For a unit-scale hybrid update, the actual first-order RL decrease relative to the RMS single-trajectory scale is instead $\sqrt{1-\gamma}\,\eta_{\mathrm{hybrid}}$. At fixed $\alpha$, differentiation of the composite index yields
\begin{equation}
\frac{d\eta^{\mathrm{eff}}}{dp}
=-\gamma'(p)\big[(1-\alpha)+\alpha\dirvar{\Phi_{DR}}(p)\big]
 +(1-\gamma(p))\alpha\dirvar{\Phi_{DR}}'(p).
\end{equation}
Hence decreasing alignment lowers the hybrid factor; monotonicity of the product follows when the displayed derivative is nonpositive. Under fixed conditional trajectory cosines $\mu_0>\mu_1$, their mixture has derivative $\mu_1-\mu_0$, with zero crossing $p^*=\mu_0/(\mu_0-\mu_1)$ when $\mu_0>0>\mu_1$. This crossing characterizes the alignment factor, while the cancellation factor contributes separately through $\gamma'(p)$.
\end{proof}

\section{Proof of Proposition~\ref{prop:optimal_alpha_n}: Optimal Per-Position Mixing}
\label{app:proof_optimal_alpha_n}

\begin{proof}
Write $c_n=\dirvar{\cos\varphi_n}$. Valuing admitted distillation by $\rho\alpha_n$ in units of RL projection gives the separable allocation objective
\begin{equation}
\cJ=\frac1N\sum_n\big[(1-\alpha_n)+\alpha_nc_n+\rho\alpha_n\big]
=1+\frac1N\sum_n\alpha_n(c_n-\tau),\qquad \tau=1-\rho,
\end{equation}
subject to $0\leq\alpha_n\leq\alpha_{\max}$. If the optional budget $N^{-1}\sum_n\alpha_n\leq\bar\alpha$ is imposed, its multiplier $\lambda\geq0$ shifts each coefficient to $c_n-\tau-\lambda$. Maximizing these linear terms gives
\begin{equation}
\alpha_n^*=
\begin{cases}
\alpha_{\max},&c_n>\tau+\lambda,\\
0,&c_n<\tau+\lambda,\\
\text{any feasible value in }[0,\alpha_{\max}],&c_n=\tau+\lambda.
\end{cases}
\end{equation}
For $\rho=1$ and a slack budget, choose the upper endpoint at ties to obtain $\alpha_n^*=\alpha_{\max}\ind[c_n\geq0]$. Under the isotropic within-position kernel assumption $\mK(n,n)=\lambda_n \mI$ with $\lambda_n>0$, $\dirvar{K_{DR}(n)}=\lambda_n\langle\vdelta_D^n,\vdelta_R^n\rangle$ has the same sign, yielding the M3-Select gate.

For a noisy score $\hat c_n=c_n+\varepsilon_n$ with logistic error of scale $\beta^{-1}$, the expected hard allocation is
\begin{equation}
\expect[\alpha_{\max}\ind[\hat c_n\geq0]]
=\alpha_{\max}\Pr(\varepsilon_n\geq-c_n)
=\alpha_{\max}\sigma(\beta c_n).
\end{equation}
This is M3-Soft. As $\beta\to\infty$, it approaches the hard gate for $c_n\neq0$ and assigns half the budget at the indifferent point $c_n=0$.
\end{proof}

\section{Convergence Rate Comparison: M3-Norm vs GradNorm}
\label{app:convergence_rate_comparison}

\begin{theorem}[Descent-Bound Comparison at a Common Gradient Scale]
\label{thm:convergence_rate_gap}
Let $\cL_R$ be $L$-smooth and bounded below, with $\vg_R=\nabla\cL_R$. Compare the norm-restored M3 direction $\vh_{\mathrm{M3}}=\|\vg_R\|[(1-\alpha)\hat \vg_R+\alpha\hat \vg_D]$ with the norm-balanced GradNorm direction $\vh_{\mathrm{GN}}=\|\vg_R\|(\hat \vg_R+\hat \vg_D)/(1+\magvar{\kappa})$. Assume $0\leq\alpha\leq\alpha_0<1/2$, $\dirvar{\Phi_{DR}}\geq-1+\delta$ for fixed $\delta>0$, and a fixed magnitude ratio $\magvar{\kappa}>0$. For method $j$, let the stochastic update be $\vh_j+\vxi_j$ with conditional mean $\expect[\vxi_j]=0$ and $\expect\|\vxi_j\|^2\leq s_j^2$. Define
\begin{equation}
(c_{\mathrm{M3}},b_{\mathrm{M3}})=(1-2\alpha_0,1),
\qquad
(c_{\mathrm{GN}},b_{\mathrm{GN}})=
\left(\frac{\delta}{1+\magvar{\kappa}},\frac2{1+\magvar{\kappa}}\right).
\end{equation}
For a common step size $\eta\leq\min_j c_j/(Lb_j^2)$ and $\Delta_R=\cL_R(\vtheta^0)-\inf\cL_R$,
\begin{equation}
\frac1T\sum_{t<T}\expect\|\nabla\cL_R(\vtheta^t)\|^2
\leq\frac{2\Delta_R}{\eta c_jT}+\frac{L\eta s_j^2}{c_j}.
\end{equation}
When $L\eta s_j^2/c_j\leq\epsilon/2$, sufficient iteration budgets scale as $O(\Delta_R/(\eta\epsilon))$ for M3 and $O((1+\magvar{\kappa})\Delta_R/(\eta\epsilon))$ for GradNorm.
\end{theorem}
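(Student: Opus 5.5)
The plan is to reuse the smoothness-and-telescoping chain from Appendix~\ref{app:proof_convergence}, reduced to two per-method constants: a projection constant $c_j$ with $\langle \vg_R,\vh_j\rangle\geq c_j\|\vg_R\|^2$, and a magnitude constant $b_j$ with $\|\vh_j\|\leq b_j\|\vg_R\|$. Once these hold, $L$-smoothness and the unbiased noise give
\[
\expect_t\cL_R(\vtheta^{t+1})\leq\cL_R(\vtheta^t)-\eta c_j\|\vg_R^t\|^2+\tfrac{L\eta^2}{2}\big(b_j^2\|\vg_R^t\|^2+s_j^2\big).
\]
The step-size condition $\eta\leq c_j/(Lb_j^2)$ makes $\tfrac{L\eta^2}{2}b_j^2\leq\tfrac{\eta c_j}{2}$, which leaves a net decrease of $\tfrac{\eta c_j}{2}\|\vg_R^t\|^2$. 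I would then sum over $t<T$, use $\cL_R\geq\inf\cL_R$, and divide by $\eta c_jT/2$ to get the stated bound $2\Delta_R/(\eta c_jT)+L\eta s_j^2/c_j$. Since this is exactly the argument already written out for Theorem~\ref{thm:convergence}, I would only cite it.

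The constants for M3 come from the unit-direction bilinearity of Proposition~\ref{prop:rl_projection}. First, $\langle\vg_R,\vh_{\mathrm{M3}}\rangle=\|\vg_R\|^2[1-\alpha+\alpha\Phi_{DR}]\geq\|\vg_R\|^2(1-2\alpha)\geq(1-2\alpha_0)\|\vg_R\|^2$, using only $\Phi_{DR}\geq-1$. Second, the triangle inequality gives $\|\vh_{\mathrm{M3}}\|\leq\|\vg_R\|$, so $b_{\mathrm{M3}}=1$. The hypothesis $\alpha_0<1/2$ is exactly what keeps $c_{\mathrm{M3}}>0$.

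The constants for GradNorm come from the proof of Proposition~\ref{prop:gradnorm_degeneracy}. There, $\langle\vg_R,\vh_{\mathrm{GN}}\rangle=\|\vg_R\|^2(1+\Phi_{DR})/(1+\kappa)\geq\delta\|\vg_R\|^2/(1+\kappa)$ by the assumption $\Phi_{DR}\geq-1+\delta$, and $\|\vh_{\mathrm{GN}}\|\leq2\|\vg_R\|/(1+\kappa)$. One point to handle is where $\hat\vg_D$ is undefined, at stationary points or where $\vg_D=0$. I would follow the convention of Appendix~\ref{app:proof_convergence} and set the teacher direction to zero there, which only improves both constants.

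For the iteration budgets, I would assume $L\eta s_j^2/c_j\leq\epsilon/2$ and require $2\Delta_R/(\eta c_jT)\leq\epsilon/2$. This gives $T\geq4\Delta_R/(\eta c_j\epsilon)$. Substituting $c_{\mathrm{M3}}=1-2\alpha_0$, a constant, and $c_{\mathrm{GN}}=\delta/(1+\kappa)$ yields the stated $O(\Delta_R/(\eta\epsilon))$ versus $O((1+\kappa)\Delta_R/(\eta\epsilon))$ scalings, treating $\alpha_0$ and $\delta$ as fixed.

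I expect the main obstacle to be the common step size rather than any inequality. For GradNorm, $c/b^2=\delta(1+\kappa)/4$, which is large, so the minimum over $j$ is effectively the M3 constraint $\eta\leq(1-2\alpha_0)/L$. I would note that this is consistent, and that the noise condition must hold for each method separately. Because $c_{\mathrm{GN}}$ is small, that requirement is more demanding for GradNorm.
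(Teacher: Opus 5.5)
Your proposal is correct and follows essentially the same route as the paper. It uses the same constants $(c_j,b_j)$ from bilinearity and the triangle inequality, and the same smoothness chain in which $\eta\le c_j/(Lb_j^2)$ absorbs the $\tfrac{L\eta^2}{2}b_j^2\|\vg_R\|^2$ term into half the descent, then telescopes and takes the budget $T\ge 4\Delta_R/(\eta c_j\epsilon)$. Two side remarks are slightly off but do not affect the argument: the M3 step-size constraint is the binding one only when $\delta(1+\kappa)/4\ge 1-2\alpha_0$, and setting the teacher direction to zero would not ``only improve'' $c_{\mathrm{GN}}$ when $\delta>1$. That convention is unnecessary anyway, since the hypothesis $\Phi_{DR}\ge-1+\delta$ presupposes $\vg_D\neq0$ wherever $\vg_R\neq0$.
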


\begin{proof}
Bilinearity and the triangle inequality give $\langle \vg_R,\vh_j\rangle\geq c_j\|\vg_R\|^2$ and $\|\vh_j\|\leq b_j\|\vg_R\|$. Applying smoothness conditionally on the current iterate and using the step-size restriction yields the single descent chain
\begin{align}
\expect_t[\cL_R(\vtheta^{t+1})]
&\leq\cL_R(\vtheta^t)-\eta\langle \vg_R,\vh_j\rangle
 +\frac{L\eta^2}{2}(\|\vh_j\|^2+s_j^2)\\
&\leq\cL_R(\vtheta^t)-\eta\left(c_j-\frac{L\eta b_j^2}{2}\right)\|\vg_R\|^2
 +\frac{L\eta^2s_j^2}{2}\\
&\leq\cL_R(\vtheta^t)-\frac{\eta c_j}{2}\|\vg_R\|^2
 +\frac{L\eta^2s_j^2}{2}.
\end{align}
Telescoping and dividing by $\eta c_jT/2$ proves the bound; taking $T\geq4\Delta_R/(\eta c_j\epsilon)$ gives the stated sufficient budgets. Their optimization terms differ by a factor proportional to $1+\magvar{\kappa}$ under the specified common scale and step size.
\end{proof}

\section{Concentration of the NTK Conflict Rate}
\label{app:cntk_concentration}

\begin{proposition}[Conflict-Rate Variance under Mixing]
\label{prop:cntk_concentration}
Let $\iota_n=\ind[\dirvar{K_{DR}(n)}<0]$ and $\dirvar{C_{\mathrm{NTK}}}=N^{-1}\sum_n \iota_n$. Suppose $|\operatorname{Corr}(\iota_n,\iota_{n+k})|\leq C_\rho e^{-\beta_{\mathrm{mix}}k}$ whenever both variances are nonzero. Define $D_\rho=1+2C_\rho/(e^{\beta_{\mathrm{mix}}}-1)$. Then
\begin{equation}
\operatorname{Var}(\dirvar{C_{\mathrm{NTK}}})\leq\frac{D_\rho}{4N},
\qquad
\Pr\!\left[|\dirvar{C_{\mathrm{NTK}}}-\expect\dirvar{C_{\mathrm{NTK}}}|>t\right]
\leq\min\!\left\{1,\frac{D_\rho}{4Nt^2}\right\}.
\end{equation}
For the mean over $B$ independent length-$N$ rollouts, with probability at least $1-\delta$, the deviation is at most $\sqrt{D_\rho/(4BN\delta)}$.
\end{proposition}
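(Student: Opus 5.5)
The plan is to bound the variance of the average of $N$ Bernoulli indicators directly, by expanding it into a double sum of covariances, and then to apply Chebyshev's inequality.

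\textbf{Variance expansion.} Write
\[
\operatorname{Var}(C_{\mathrm{NTK}})=N^{-2}\sum_{n,m}\operatorname{Cov}(\iota_n,\iota_m).
\]
Each $\iota_n$ is an indicator, so $\operatorname{Var}(\iota_n)=p_n(1-p_n)\leq 1/4$. By Cauchy--Schwarz,
\[
|\operatorname{Cov}(\iota_n,\iota_m)|=|\operatorname{Corr}(\iota_n,\iota_m)|\sqrt{\operatorname{Var}(\iota_n)\operatorname{Var}(\iota_m)}\leq \tfrac14|\operatorname{Corr}(\iota_n,\iota_m)|.
\]
If either variance is zero, the covariance is zero and the bound holds trivially; this is exactly why the hypothesis only constrains correlations when both variances are nonzero.

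\textbf{Summing the mixing bound.} Group the terms by lag $k=|n-m|$.
\begin{itemize}
\item The diagonal $k=0$ contributes at most $N/4$.
\item Each lag $k\geq1$ has at most $2(N-k)\leq 2N$ ordered pairs, each with covariance at most $\tfrac14 C_\rho e^{-\beta_{\mathrm{mix}}k}$.
\end{itemize}
The geometric tail is
\[
\sum_{k\geq1}e^{-\beta_{\mathrm{mix}}k}=\frac{1}{e^{\beta_{\mathrm{mix}}}-1}.
\]
So the total covariance sum is at most $\frac N4\bigl(1+\frac{2C_\rho}{e^{\beta_{\mathrm{mix}}}-1}\bigr)=ND_\rho/4$. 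Dividing by $N^2$ gives $\operatorname{Var}(C_{\mathrm{NTK}})\leq D_\rho/(4N)$.

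The only point needing care is the indexing convention. The hypothesis is stated for $\operatorname{Corr}(\iota_n,\iota_{n+k})$ with $k\geq0$, and the $(m,n)$ ordering is covered by the symmetry of covariance. This is the source of the factor $2$, and I expect it to be the only place where constants could go wrong.

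\textbf{Tail bound.} Chebyshev's inequality gives
\[
\Pr\bigl[|C_{\mathrm{NTK}}-\expect C_{\mathrm{NTK}}|>t\bigr]\leq \frac{\operatorname{Var}(C_{\mathrm{NTK}})}{t^2}\leq \frac{D_\rho}{4Nt^2}.
\]
Taking the minimum with $1$ is trivial, since it is a probability.

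\textbf{Batch of rollouts.} Let $\bar C$ be the mean over $B$ independent length-$N$ rollouts. Independence kills the cross-rollout covariances, so $\operatorname{Var}(\bar C)\leq D_\rho/(4BN)$, assuming each rollout satisfies the same mixing hypothesis. Setting the Chebyshev bound $D_\rho/(4BNt^2)$ equal to $\delta$ and solving gives $t=\sqrt{D_\rho/(4BN\delta)}$. Then with probability at least $1-\delta$ the deviation is at most that value.
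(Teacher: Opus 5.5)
Your proposal is correct and follows the paper's proof essentially step by step: expand the variance into lag-grouped covariances, use $\operatorname{Var}(\iota_n)\leq 1/4$, and bound the $2(N-k)\leq 2N$ pairs at each lag by a geometric series to get $D_\rho/(4N)$; then apply Chebyshev, and for the batch divide the variance by $B$ using independence. Your explicit treatment of the zero-variance case, and of the implicit assumption that every rollout satisfies the same mixing hypothesis, tidies two points the paper leaves unstated.
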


\begin{proof}
Since $\operatorname{Var}(\iota_n)\leq1/4$, the covariance expansion reduces to a geometric series:
\begin{align}
\operatorname{Var}\!\left(\frac1N\sum_n \iota_n\right)
&=\frac1{N^2}\left[\sum_n\operatorname{Var}(\iota_n)
 +2\sum_{k=1}^{N-1}\sum_{n=1}^{N-k}\operatorname{Cov}(\iota_n,\iota_{n+k})\right]\\
&\leq\frac1{4N}\left[1+2\sum_{k=1}^{N-1}\left(1-\frac{k}{N}\right)C_\rho e^{-\beta_{\mathrm{mix}}k}\right]
\leq\frac{D_\rho}{4N}.
\end{align}
Chebyshev's inequality gives the tail bound; independence divides the variance by $B$. Using $D_\rho\leq1+2C_\rho/\beta_{\mathrm{mix}}$, the illustrative setting $N=256$, $C_\rho=1$, $\beta_{\mathrm{mix}}=0.5$, and $\delta=0.05$ gives deviations at most $0.313$ for one rollout and $0.079$ for $B=16$.
\end{proof}

\section{Proof of Proposition~\ref{prop:distill_regularizer}: Distillation as Implicit Regularization}
\label{app:proof_regularization}

\begin{proof}
Consider matched teacher and student distributions evaluated on the same contexts, with $\pi_{\vtheta_T}=\pi_T$. The local KL expansion is $\cL_D(\vtheta_T+\Delta\vtheta)=\tfrac12\Delta\vtheta^\top \mF_D\Delta\vtheta+o(\|\Delta\vtheta\|^2)$, where $\mF_D\succeq0$ is the Fisher matrix. In the proposition's isotropic quadratic model, $\mF_D=f_D\mI$ with $f_D>0$, so $\nabla\cL_D=f_D\Delta\vtheta$ and $\langle\nabla\cL_D,\Delta\vtheta\rangle=f_D\|\Delta\vtheta\|^2$: distillation provides a restoring direction.

With constant $\vg_R$, initial condition $\Delta\vtheta(0)=0$, and $0<\alpha<1$, solve the resulting linear flow in one step:
\begin{align}
\dot{\Delta\vtheta}_H
&=-\eta[(1-\alpha)\vg_R+\alpha f_D\Delta\vtheta_H],\\
\Delta\vtheta_H(T)
&=-\frac{(1-\alpha)\vg_R}{\alpha f_D}
 (1-e^{-\eta\alpha f_DT}),
&\|\Delta\vtheta_H(T)\|&\leq\frac{(1-\alpha)\|\vg_R\|}{\alpha f_D}.
\end{align}
Pure RL has $\Delta\vtheta_R(T)=-\eta T\vg_R$. If degradation is proportional to deviation with the same coefficient for both flows, setting $x=\eta\alpha f_DT$ and $\rho_{\mathrm{reg}}=\eta f_DT/2$ gives
\begin{equation}
\frac{\delta_H}{\delta_R}
=(1-\alpha)\frac{1-e^{-x}}x
\leq\frac{1-\alpha}{1+x/2}
\leq\frac1{1+\alpha\rho_{\mathrm{reg}}}.
\end{equation}
The scalar inequality follows from $(1+x/2)(1-e^{-x})\leq x$ for $x\geq0$; the difference has derivative $[1-(1+x)e^{-x}]/2\geq0$ and vanishes at zero.

For the rank-dependent model $G(r)=\epsilon_{\mathrm{drown}}-[\delta_R(r)-\delta_H(r)]$, a finite crossing follows if $\epsilon_{\mathrm{drown}}>0$ is bounded, $\delta_R(r)$ is continuous and grows without bound, and $\rho_{\mathrm{reg}}(r)\geq\rho_0>0$. Indeed,
\begin{equation}
G(r)\leq\epsilon_{\mathrm{drown}}
-\frac{\alpha\rho_0}{1+\alpha\rho_0}\delta_R(r)\longrightarrow-\infty.
\end{equation}
Together with a positive initial gap, continuity gives a crossing of this model. The reported positive gap at $r=64$ and negative gap at $r=128$ locate the observed reversal between the tested ranks.
\end{proof}

\section{Proof of Proposition: LSGV Variance Amplification}
\label{app:lsgv_proof}

\begin{proof}
Write $\vu=\vg_R$, $\vv=\vg_D$, $\vmu_u=\expect \vu$, $\vmu_v=\expect \vv$, and let the joint estimator covariance be $\mOmega/B$, including its cross-signal blocks. Assume $\lambda_-\mI\preceq\mOmega\preceq\lambda_+\mI$ for fixed positive constants, $0<m\leq\|\vmu_v\|\leq M$, and $|c_0|\leq1-\delta_c$, where $c_0=\langle\vmu_u,\vmu_v\rangle/(\|\vmu_u\|\|\vmu_v\|)$. Set $\bar\kappa=\|\vmu_u\|/\|\vmu_v\|\leq1$. In the small-noise regime, with delta-method remainders negligible relative to the leading variance, the cosine gradient, evaluated at $(\vmu_u,\vmu_v)$ with $\hat{\vmu}_j=\vmu_j/\|\vmu_j\|$, and its squared norm are
\begin{align}
\nabla_{\vu} f&=\frac{\hat\vmu_v-c_0\hat\vmu_u}{\|\vmu_u\|},
&\nabla_{\vv} f&=\frac{\hat\vmu_u-c_0\hat\vmu_v}{\|\vmu_v\|},\\
\|\nabla f\|^2
&=(1-c_0^2)\left(\frac1{\|\vmu_u\|^2}+\frac1{\|\vmu_v\|^2}\right),
&f(\vu,\vv)&=\frac{\langle \vu,\vv\rangle}{\|\vu\|\|\vv\|}.
\end{align}
Consequently, retaining the joint covariance throughout,
\begin{equation}
\operatorname{Var}(c)
=\frac1B\nabla f^\top\mOmega\nabla f
 +o\!\left(\frac{\|\nabla f\|^2}{B}\right)
=\Theta\!\left(\frac{1-c_0^2}{B}
\left[\frac1{\|\vmu_u\|^2}+\frac1{\|\vmu_v\|^2}\right]\right)
=\Theta\!\left(\frac1{B\bar\kappa^2}\right).
\end{equation}
This rate describes amplification while relative noise is small. The global bound $\operatorname{Var}(c)\leq1$ continues to hold when that approximation ceases to apply.

For $\alpha=\alpha_{\max}\sigma(\beta c)$, a second delta expansion and the Lipschitz constant $\alpha_{\max}\beta/4$ give, respectively,
\begin{align}
\operatorname{Var}(\alpha)
&=\alpha_{\max}^2\beta^2\sigma'(\beta c_0)^2\operatorname{Var}(c)
 +o(\operatorname{Var}(c)),\\
\operatorname{Var}(\alpha)
&\leq\min\!\left\{\frac{\alpha_{\max}^2}{4},\,
\frac{\alpha_{\max}^2\beta^2}{16}\operatorname{Var}(c)\right\}.
\end{align}
Static Hybrid has $\operatorname{Var}(\alpha)=0$. To identify the corresponding contribution to update variance, let $a_0=\alpha_{\max}\sigma(\beta c_0)$, $\vd=\vmu_v-\vmu_u$, and $\delta \vh_0=(1-a_0)(\vu-\vmu_u)+a_0(\vv-\vmu_v)$. Linearizing $\vh=(1-\alpha)\vu+\alpha \vv$ yields
\begin{equation}
\operatorname{Cov}(\vh)\simeq\operatorname{Cov}(\delta \vh_0)
+\operatorname{Var}(\alpha)\vd\vd^\top
+\operatorname{Cov}(\delta \vh_0,\alpha)\vd^\top
+\vd\operatorname{Cov}(\alpha,\delta \vh_0).
\end{equation}
A constant gate removes the gate fluctuation terms. When the gate error is uncorrelated with $\delta \vh_0$, the additional covariance is the positive semidefinite term $\operatorname{Var}(\alpha)\vd\vd^\top$; for a gate computed from the same gradients, the displayed cross-covariances determine its net effect.
\end{proof}
 
\section{Proof of Proposition: Cumulative Conflict Budget}
\label{app:proof_collapse}

\begin{proof}
At step $s$, let $q_s=N_s^{-1}\sum_n\alpha_{n,s}\ind[\dirvar{K_{DR}^s(n)}<0]$ measure retained conflicting teacher weight, and define the scalar budget $E(t)=\sum_{s\leq t}\magvar{\kappa}_sq_s$. For naive mixing $q_s=\alpha\dirvar{C_{\mathrm{NTK}}}^s$; exact sign masking gives $q_s=0$. If the time-averaged product stabilizes, the entire accumulation law is
\begin{equation}
E(t)=t\left(\frac1t\sum_{s\leq t}\magvar{\kappa}_sq_s\right)
=t\bar d+o(t),
\qquad
T_\Gamma\approx\frac{\Gamma}{\bar d},
\end{equation}
where $T_\Gamma$ is the first crossing of a fixed budget threshold $\Gamma$ in this model. For approximately constant $\magvar{\kappa}$ and conflict rate, $\bar d_{\mathrm{naive}}=\alpha\bar\kappa\bar C_{\mathrm{NTK}}$. If imperfect masking retains weight $q_s\approx\alpha_{\mathrm{eff}}\delta_{\mathrm{mask}}$, then under the same $\bar\kappa$,
\begin{equation}
\frac{E_{\mathrm{M3}}(t)}{E_{\mathrm{naive}}(t)}
\approx\frac{\alpha_{\mathrm{eff}}\delta_{\mathrm{mask}}}
{\alpha\bar C_{\mathrm{NTK}}}.
\end{equation}
The illustrative values $\alpha_{\mathrm{eff}}=0.09$, $\delta_{\mathrm{mask}}=0.05$, $\alpha=0.5$, and $\bar C_{\mathrm{NTK}}=0.40$ give $0.0225$, or about $44$ times slower budget accumulation.

This calculation concerns retained conflict. A parameter deviation formed from conflict updates obeys $\|\sum_s\eta_s \vd_s\|\leq\sum_s\eta_s\|\vd_s\|$, so connecting the scalar budget to reward collapse requires a model of update directions and a collapse threshold. The reported ordering---Hybrid at step $221$, GradNorm at $358$, GRPO at $412$, and no M3 collapse through step $500$---is an experimental observation. In particular, pure GRPO has $q_s=0$ and its collapse is governed by dynamics outside this conflict budget.
\end{proof}

\section{Supplementary Details for the Main Analysis and Method}
\label{app:main_text_details}

\subsection{Distillation Divergence and Local Residual Model}
\label{app:distillation_details}

For $D_\lambda(\vp\|\vq)=\lambda\mathrm{KL}(\vp\|\vm)+(1-\lambda)\mathrm{KL}(\vq\|\vm)$, with $\vm=\lambda\vp+(1-\lambda)\vq$, strictly positive distributions satisfy
\[
\frac{D_\lambda(\vp\|\vq)}{\lambda}\to\mathrm{KL}(\vp\|\vq)
\quad(\lambda\to0),\qquad
\frac{D_\lambda(\vp\|\vq)}{1-\lambda}\to\mathrm{KL}(\vq\|\vp)
\quad(\lambda\to1).
\]
The experiments use the symmetric point $\lambda=1/2$. For an unclipped token near student--teacher agreement,
\[
\nabla_{\vz^n}D_\lambda(\vp_T^n\|\vp_S^n)
=\lambda(1-\lambda)(\vp_S^n-\vp_T^n)
+O(\|\vp_S^n-\vp_T^n\|^2).
\]
The main analysis absorbs the leading constant into the teacher update scale and uses the local forward-KL residual $\vdelta_D^n=\vp_S^n-\vp_T^n$. Tokens above the clipping threshold have zero distillation gradient. This is a local approximation, not an identity for arbitrary distributions.

For completeness, the GRPO group statistics in Section~\ref{sec:problem_setup} are $\bar r=G^{-1}\sum_i r^{(i)}$ and $\sigma_r^2=G^{-1}\sum_i(r^{(i)}-\bar r)^2$. Their values and the sampled responses are held fixed during differentiation.

\subsection{Gradient Geometry and Magnitude Asymmetry}
\label{app:interaction_details}

The NTK sums in Eqs.~\ref{eq:inner_product}--\ref{eq:kappa_ntk} form the gradient Gram matrix
\[
\begin{pmatrix}
\|\vg_D\|^2 & \langle\vg_D,\vg_R\rangle\\
\langle\vg_R,\vg_D\rangle & \|\vg_R\|^2
\end{pmatrix}.
\]
Its diagonal entries measure gradient strength; its off-diagonal entries measure interaction. The residual quadratic sums are kernel energies, equal to squared gradient norms up to the common factor $N^{-2}$. Together, the entries determine the first-order loss changes for given $\alpha$ and $\eta$. The normalized score $\Phi_{DR}$ retains direction but removes magnitude.

For $0<\alpha<1$ and $\Phi_{DR}<0$, normalize each harmful cross-effect in Eq.~\ref{eq:teacher_loss_change} by the corresponding objective's own descent term. The ratios are
\[
q_D=\frac{1-\alpha}{\alpha}\kappa|\Phi_{DR}|,
\qquad
q_R=\frac{\alpha}{1-\alpha}\frac{|\Phi_{DR}|}{\kappa},
\qquad
\frac{q_D}{q_R}=\left(\frac{1-\alpha}{\alpha}\right)^2\kappa^2.
\]
Thus the relative asymmetry scales as $\Theta(\kappa^2)$ for fixed interior mixing weights. A shrinking teacher residual can increase $\kappa$ when the reward gradient remains substantial, but the reward residual can also vanish when $A=0$. Both residual directions and the kernel affect their parameter-gradient norms. Proposition~\ref{prop:kappa_scaling} reports the observed LoRA-rank and model-scale comparisons, and Appendix~\ref{app:folding_bridges} develops the extended connections.

\subsection{Minimum-Norm Global Mixing}
\label{app:mixing_details}

For $\vg_H(\alpha)=\alpha\vg_D+(1-\alpha)\vg_R$, let $r_D=\|\vg_D\|$, $r_R=\|\vg_R\|$, and $\Phi_{DR}=\langle\vg_D,\vg_R\rangle/(r_Dr_R)$, assuming nonzero gradients. The minimum-norm mixture is the point closest to zero on the segment joining the two gradients. Minimizing $\|\vg_H(\alpha)\|^2$ over $[0,1]$ gives, for $\vg_D\neq\vg_R$,
\begin{equation}
  \alpha^*
  =
  \operatorname{clip}_{[0,1]}
  \left(
  \frac{
  r_R^2-\Phi_{DR}r_Dr_R
  }{
  r_D^2+r_R^2-2\Phi_{DR}r_Dr_R
  }
  \right),
  \label{eq:alpha_star}
  \end{equation}
The clipping operation restricts the unconstrained minimizer to $[0,1]$. If the gradients coincide, every coefficient produces the same update. For $\kappa=r_R/r_D\gg1$, $1-\alpha^*=O(\kappa^{-1})$. This direction-dependent criterion need not equal the norm-balancing choice $\alpha=\kappa/(1+\kappa)$ in Section~\ref{sec:m3_design}. Neither global coefficient can remove teacher contributions only at conflicting positions. The quadratic gap around $\alpha^*$ is given in Eq.~\ref{eq:quadratic_gap}.

\subsection{Normalization and Continuous-Gate Limits}
\label{app:gate_limits}

The full-gradient normalization principle is
\[
\vg_H^{\mathrm{MA}}(\alpha)=\bar s[\alpha\hat\vg_D+(1-\alpha)\hat\vg_R],
\qquad \hat\vg_D=\vg_D/\|\vg_D\|,\quad \hat\vg_R=\vg_R/\|\vg_R\|,
\]
for nonzero gradients, with $\bar s$ an EMA-smoothed update scale. Algorithm~\ref{alg:m3_select} implements normalization on residuals before applying the Jacobians. This preserves compatibility signs but need not equalize parameter-gradient norms. The population-norm-matched convergence model is therefore distinct from the residual-normalized implementation.

For M3-Soft, $\beta\to\infty$ recovers hard selection wherever $K_{DR}(n)\neq0$. At an exact zero score the weight remains $\alpha_{\max}/2$, whereas M3-Select admits it at weight $\alpha_{\max}$. As $\beta\to0$, all positions receive $\alpha_{\max}/2$. The mean teacher weight is $\alpha_{\mathrm{eff}}=N^{-1}\sum_n\alpha_n$. Larger $\beta$ makes selection sharper but increases sensitivity near zero alignment; Appendix~\ref{app:gate_sensitivity} evaluates this trade-off.

\subsection{Conditional Variance and Global Rebalancing}
\label{app:method_analysis_details}

In Theorem~\ref{thm:boundary_pareto_convergence}, $\underline\rho_R$ lower-bounds the mean reward projection weighted by each position's squared mean reward-gradient norm; $\underline\rho_R=1-\alpha_{\max}$ is admissible. The noise factor $v_\alpha$ uniformly bounds the noise-energy-weighted mean of $(1-\alpha_n)^2$. The effective step size includes the update scale, estimated in the algorithm by an EMA.

For equal position noise variances, admitted fraction $\rho_{\mathrm{adm}}$, and $\bar\alpha=\alpha_{\max}\rho_{\mathrm{adm}}$, the exact factor is
\[
v_t=(1-\rho_{\mathrm{adm}})+\rho_{\mathrm{adm}}(1-\alpha_{\max})^2
=(1-\bar\alpha)^2+\alpha_{\max}^2\rho_{\mathrm{adm}}(1-\rho_{\mathrm{adm}}).
\]
Rejected positions retain the full reward-noise contribution, while admitted positions reduce it. These statements condition on the history before fresh reward noise and treat the teacher and gates as fixed. Extra variability in estimated gates or teacher directions is not included.

\begin{remark}[Limitations of Global Gradient Rebalancing]
\label{rem:baselines}
Global rebalancing does not select teacher contributions by position. For symmetric two-gradient projections with $-1<\Phi_{DR}<0$, PCGrad~\citep{yu2020pcgrad} removes conflicting components but preserves the norm ratio (Appendix~\ref{app:pcgrad}). At $\kappa=3400$ and equal mixing weights, the teacher accounts for approximately $0.03\%$ of the sum of weighted gradient norms. Under the equal-target-rate, unit-sum weighting model of Proposition~\ref{prop:gradnorm_degeneracy}, GradNorm~\citep{chen2018gradnorm} balances norms with reward coefficient $1/(1+\kappa)$, reducing its step scale by order $1/\kappa$. M3 combines residual normalization with token-level control of the teacher signal.
\end{remark}

\end{document}